\documentclass[lettersize,journal]{IEEEtran}
\usepackage{amsmath,amsfonts}
\usepackage{algorithmic}
\usepackage{algorithm}
\usepackage{array}
\usepackage[caption=false,font=normalsize,labelfont=sf,textfont=sf]{subfig}
\usepackage{textcomp}
\usepackage{stfloats}
\usepackage{url}
\usepackage{verbatim}
\usepackage{graphicx}
\usepackage{cite}
\hyphenation{op-tical net-works semi-conduc-tor IEEE-Xplore}
% updated with editorial comments 8/9/2021

% ============================================================
\usepackage{amsmath}
\usepackage{makecell}
\usepackage{booktabs}
\allowdisplaybreaks[4]
\usepackage{ntheorem}
\newtheorem{theorem}{\bf Theorem}
\newtheorem{remark}{\bf Remark}

\newtheorem{definition}{\bf Definition}
\newtheorem{assumption}{\bf Assumption}
\newtheorem{lemma}{\bf Lemma}
\newtheorem*{proof}{Proof.}
\usepackage{tabularx}
\usepackage{threeparttable}
\usepackage{float}
\usepackage{multirow}
\usepackage{amssymb}
\usepackage{bbding}
\usepackage{utfsym}
\usepackage{array}
\usepackage{wrapfig}
% ============================================================

\begin{document}

\title{Towards Understanding Generalization and Stability Gaps between Centralized and Decentralized Federated Learning}

\author{Yan Sun, Li Shen, and Dacheng Tao,~\IEEEmembership{Fellow,~IEEE}
        % <-this % stops a space
%\thanks{Li shen is the corresponding author.}
\thanks{
Yan Sun is with the University of Sydney, Australia.\(Email: woodenchild95@outlook.com\). }
\thanks{
Li Shen is with Sun Yat-sen University, China. \(Email: mathshenli@gmail.com\).}
\thanks{ 
Dacheng Tao is with Nanyang Technological University, Singapore. \(Email: dacheng.tao@ntu.edu.sg\).}       
% \thanks{Manuscript received XXXX, XXXX; revised XXXX, XXXX.}}
\thanks{Preprint.}}

% The paper headers
% \markboth{Journal of \LaTeX\ Class Files,~Vol.~14, No.~8, August~2021}%
% {Shell \MakeLowercase{\textit{et al.}}: A Sample Article Using IEEEtran.cls for IEEE Journals}

% \IEEEpubid{0000--0000/00\$00.00~\copyright~2021 IEEE}
% Remember, if you use this you must call \IEEEpubidadjcol in the second
% column for its text to clear the IEEEpubid mark.

\maketitle

\begin{abstract}
\label{abstract}
As two mainstream frameworks in federated learning~(FL), both centralized and decentralized approaches have shown great application value in practical scenarios.
However, existing studies do not provide sufficient evidence and clear guidance for analysis of which performs better in the FL community.
Although decentralized methods have been proven to approach the comparable convergence of centralized with less communication, their test performance always falls short of expectations in empirical studies.
To comprehensively and fairly compare their efficiency gaps in FL, in this paper, we explore their stability and generalization efficiency. 
Specifically, we prove that on the general smooth non-convex objectives, 1) centralized FL~(CFL) always generalizes better than decentralized FL~(DFL); 2) CFL achieves the best performance via adopting partial participation instead of full participation; and, 3) there is a necessary requirement for the topology in DFL to avoid performance collapse as the training scale increases. 
We also conduct extensive experiments on several common setups in FL to validate that our theoretical analysis is consistent with experimental phenomena and contextually valid in several general and practical scenarios.
\end{abstract}

\begin{IEEEkeywords}
Generalization, stability, centralized and decentralized, federated learning.
\end{IEEEkeywords}
\section{Introduction}
\label{introduction}

\IEEEPARstart{S}{ince} \cite{mcmahan2017communication} propose FL, it becomes a promising paradigm for training the heterogeneous dataset. Classical {\ttfamily FedAvg} utilizes the FL paradigm consisting of a global server and massive local clients, to jointly train a global model via periodic communication. Though it shines in large-scale training, CFL has to afford expensive communication costs under a large number of local clients. To efficiently alleviate this pressure, DFL is introduced as a compromise. \cite{sun2022decentralized} learn the {\ttfamily D-FedAvg} method and analyze its fundamental properties, which adopts a communication topology across all clients to significantly reduce the number of links. Sometimes it is impossible to set up the global server, in which case DFL becomes the only valid solution. As two major frameworks in the current FL community, both centralized and decentralized approaches are well studied and greatly improved. More and more insights are being revealed to give it huge potential for applications. However, there is still a \textit{question} lingering in the general FL studies which hinders further developments:
\begin{center}
    \textit{How to quantitatively compare their performance in the context of federated learning?}
\end{center}

\IEEEpubidadjcol

Research on this question goes back to distributed learning based on the parallel stochastic gradient descent~(PSGD). \cite{lian2017can} study the comparison between centralized PSGD~(C-PSGD) and decentralized PSGD~(D-PSGD), and provide a positive answer for decentralized approaches. D-PSGD can achieve a comparable convergence rate with linear speedup as the C-PSGD with much fewer communication links. However, due to the heterogeneous dataset and local training process, DFL always performs consistently poor experimental performance. \cite{shi2023improving} also learn that one algorithm often performs worse on the DFL framework under the same experimental setups. Although some studies provide consequential interpretations based on consensus analysis, there is still a lack of solid analysis to explain these experiments, especially for generalization.

Most of the previous works focus on the analysis of convergence rates and ignore the generalization efficiency, while the test accuracy is highly related to the generalization. Therefore, the incomplete comparison will easily lead to cognitive misunderstandings. To further understand their performance differences and comprehensively answer the above \textit{question}, we utilize the analysis of uniform stability~\cite{elisseeff2005stability,hardt2016train} to learn their generalization gaps. To improve the applicability of the vanilla uniform stability analysis in general deep models, we remove the idealized assumption of bounded full gradients and adopt bounded uniform stability instead.

\begin{table*}[t]
  \caption{Main results on the generalization of {\ttfamily FedAvg} and {\ttfamily D-FedAvg} on the smooth non-convex objective. $\kappa_\lambda$ is a constant related to spectrum gap $1-\lambda$~(Spectrum norm $\lambda$ is defined in Definition~\ref{def_adjacent_matrix}). We mainly focus on the impacts of $K$, $T$, $S$, and $m$ terms.}
  \label{Comparison_FL_DFL}
  \centering
  %\small
  \resizebox{1\textwidth}{!}{
  \begin{tabular}{c|c|ccc|ccc}
    \toprule
    % \multicolumn{2}{c}{Part}                   \\
    % \cmidrule(r){1-2}
         & Generalization error $\varepsilon_G$  &  Optimal  & $\varepsilon_G$ & Equivalent & Worst  & {$\varepsilon_G$} & Equivalent \\
    \midrule
    {\ttfamily FedAvg} & $\mathcal{O}\left(\frac{1}{S}\left(\frac{n^{\frac{\mu L}{1 + \mu L}}}{m}\right)\left(KT\right)^{\frac{\mu L}{1 + \mu L}}\right)$ & $n=1$ & $\mathcal{O}\left(\frac{\left(KT\right)^{\frac{\mu L}{1 + \mu L}}}{Sm}\right)$ & SGD & $n=m$ & $\mathcal{O}\left(\frac{\left(KT\right)^{\frac{\mu L}{1 + \mu L}}}{Sm^{\frac{1}{1 + \mu L}}}\right)$ & \makecell{DFL \\ (full)}\\
    \specialrule{0em}{2pt}{2pt}
    \cline{1-8}
    \specialrule{0em}{2pt}{2pt}
    {\ttfamily D-FedAvg} & $\mathcal{O}\left(\frac{1}{S}\left(\frac{1+6\sqrt{m}\kappa_\lambda}{m}\right)^{\frac{1}{1 + \mu L}}\left(KT\right)^{\frac{\mu L}{1 + \mu L}}\right)$ & $\kappa_\lambda = 0$ & $\mathcal{O}\left(\frac{\left(KT\right)^{\frac{\mu L}{1 + \mu L}}}{Sm^{\frac{1}{1 + \mu L}}}\right)$ & \makecell{CFL \\ (full)} & - & - & - \\
    \bottomrule
  \end{tabular}
  }
  \begin{tablenotes}
  \small
  \centering
    \item $n$: the number of clients participating in the training per round in CFL; $m$: the number of total clients;
    \item $K$: local interval; $T$: communication rounds; $L$: Lipschitz constant; $\mu$: a general constant with $\mu\leq 1/L$;
  \end{tablenotes}
\end{table*}
Our work provides a novel and comprehensive understanding of generalization comparisons between CFL and DFL as shown in Table~\ref{Comparison_FL_DFL}. \cite{hardt2016train} learn the general stability of SGD~(M) methods in the non-convex optimization increases further as the number of training samples increases. When we consider the federated learning scenario, we also need to further account for the negative influence of its inherent properties, i.e. local training process and aggregation on large-scale private clients. In this paper, we isolate the impact of these components and discuss the corresponding conclusions through the analysis of the worst-case stability in each scenario. These conclusions reveal a novel understanding in the field of federated learning. The main extra factors affecting the stability of C/DFL comes from data distributions, i.e. the number of clients and dataset size of each client, and the communication links, i.e. connection topology. We also provide a detailed analysis and corresponding inferences on both frameworks.

As shown in Table~\ref{Comparison_FL_DFL}, due to periodic local training processes, stability of both {\ttfamily FedAvg} and {\ttfamily D-FedAvg} methods are slower than general SGD~(M) method. On the efficiency of the dataset size, under total $Sm$ samples, {\ttfamily FedAvg} achieves $\mathcal{O}\left(\frac{n^{\frac{\mu L}{1+\mu L}}}{Sm}\right)$ rate and {\ttfamily D-FedAvg} achieves $\mathcal{O}\left(\frac{1}{S}\left(\frac{1+6\sqrt{m}\kappa_\lambda}{m}\right)^\frac{1}{1+\mu L}\right)$ rate, respectively. In the CFL framework, its negative impact is mainly caused by the partial participation ratio $\frac{n}{m}$. When the number of participating clients is decreased to $1$, the {\ttfamily FedAvg} method completely degenerates into SGD~(M). In the DFL framework, the negative impact is mainly caused by the topology of communication, which corresponds to the different coefficients $\kappa_\lambda$. When applied to a fully connected topology, {\ttfamily D-FedAvg} degenerates into {\ttfamily FedAvg} under full participation training mode~($n=m$). From the generalization perspective, CFL always generalizes no worse than DFL. The best generalization error in DFL is training with the {\ttfamily full}-topology~($\kappa_\lambda = 0$), which is equivalent to {\ttfamily full}-participation in CFL~($n = m$). However, {\ttfamily full}-participation in CFL leads to the worst generalization error. Moreover, our analysis not only quantifies the differences between them but also demonstrates some discussions on the negative impact of topology in DFL. As a compromise of CFL to save communication costs, the topology adopted in DFL has a specific minimum requirement~($\kappa_\lambda\leq\mathcal{O}\left(\sqrt{m}\right)$) to avoid performance collapse when the number of local clients $m$ increases. We also conduct extensive experiments on the widely used FL setups to validate our analysis. Both theoretical and empirical studies confirm the validity of our answers to the \textit{question} above.
We summarize the main contributions of this paper as:
\begin{itemize}
    \item We provide the uniform stability analysis for the {\ttfamily FedAvg} and {\ttfamily D-FedAvg} algorithms without adopting the idealized assumption of bounded full gradients.
    \item We prove centralized approaches always generalize no worse than decentralized ones, and CFL only needs partial participation to achieve optimal test error. 
    \item We prove even with adopting DFL as a compromise of CFL, there is a minimum requirement on the topology. Otherwise, even with more local clients and data samples participating in the training process, its generalization performance still gets worse.
    \item We conduct extensive experiments to validate our theoretical analysis proposed in this paper.
\end{itemize}

The subsequent content of this paper is organized as follows. Sec.~\ref{related work} introduces related works. Sec.~\ref{methodology} introduces the preliminaries and problems studied in this paper. Sec.~\ref{theoretical analysis} demonstrates the main theorems and Sec.~\ref{experiments} shows the empirical studies and experimental supports. The additional experiments and relevant proofs are stated in the appendix.
\section{Related Work}
\label{related work}

\textbf{Centralized federated learning.} Since \cite{mcmahan2017communication} propose the fundamental CFL algorithm {\ttfamily FedAvg}, several studies explore its strengths and weaknesses.
\cite{yang2021achieving} prove its convergence rate on the non-convex and smooth objectives satisfies the linear speedup property. 
Furthermore, \cite{karimireddy2020scaffold} study the client-drift problems in FL and adopt the variance reduction technique to alleviate the local overfitting.
\cite{li2020federated} introduce the proxy term to force the local models to be close to the global model.
\cite{zhang2021fedpd,acar2021federated,gong2022fedadmm,sun2023fedspeed} study the primal-dual methods in CFL and prove it achieves faster convergence. This is also the optimal convergence rate that can be achieved in the current algorithms.
With the deepening of research, researchers begin to pay attention to its generalization ability. 
One of the most common analyses is the PAC-Bayesian bound. 
\cite{yuan2021we} learn the components in the generalization and formulate them as two expectations.
\cite{reisizadeh2020robust} define the margin-based generalization error of the PAC-Bayesian bound.
\cite{qu2022generalized,caldarola2022improving,sun2023fedspeed} study the generalization efficiency in CFL via local sharpness aware minimization.
\cite{sun2023dynamic} re-define the global margin-based generalization error and discuss the differences between local and global margins.
\cite{sefidgaran2023federated} also learn the reduction of the communication may improve the generalization performance.
In addition, uniform stability~\cite{elisseeff2005stability,hardt2016train} is another powerful tool adopted to measure the generality.
\cite{yagli2020information} learn the generalization error and privacy leakage in federated learning via the information-theoretic bounds.
\cite{sun2023understanding} provide the stability analysis for several FL methods.
\cite{fedinit} prove stability in CFL is mainly affected by the consistency.

\quad 

\textbf{Decentralized federated learning.} Before the exploration of the decentralized federated learning community, decentralized distributed implementations have developed for a long period. \cite{JMLR:v22:20-147} learn a unified framework on the local-updates-based methods, and then \cite{yuan2020influence} explore the impact of the bias correction in a distributed framework and reveal its efficiency in the training. Based on \cite{shi2015extra} which analyzes the consensus in decentralized approaches, \cite{alghunaim2022unified} provide a novel unified analysis for the non-convex decentralized learning. After that, local updates also draw much attention to efficient training. \cite{mishchenko2022proxskip,nguyen2022performance,alghunaim2023local} learn the advantages of the local process in both centralized and decentralized approaches.
With deep studies of the full utility of edged devices, decentralized federated learning becomes a promising application.
Since \cite{lalitha2018fully} propose the prototype of DFL, it is becoming a promising approach as the compromise of CFL to save the communication costs.
\cite{lian2018asynchronous,yu2019linear,assran2019stochastic,koloskova2020unified} learn the stability of decentralized SGD which contributes to the research on the heterogeneous dataset.
\cite{hu2019decentralized} study the gossip communication and validate its validity. 
\cite{hegedHus2021decentralized} explore the empirical comparison between the prototype of DFL and CFL.
\cite{lim2021decentralized} propose a dynamic resource allocation for efficient hierarchical federated learning.
\cite{sun2022decentralized} propose the algorithm {\ttfamily D-FedAvg} and prove that it achieves the comparable convergence rate as the vanilla SGD method.
\cite{gholami2022trusted} also learn the trusted DFL framework on the limited communications.
\cite{hashemi2021benefits,shi2023improving} verify that DFL suffers from the consensus and may be improved by multi-gossip.
\cite{li2023dfedadmm} propose the adaptation of the variant of the primal-dual optimizer in the DFL framework.
However, experiments in DFL are generally unsatisfactory. Research on its generalization has gradually become one of the hot topics.
\cite{sun2021stability} provide the uniform stability analysis of the decentralized approach and indicate that it could be dominated by the spectrum coefficient. 
\cite{zhu2023decentralized} prove the decentralized approach may be asymptotically equivalent to the SAM optimizer with flat loss landscape and higher generality. 
Different from the previous work, our study mainly focuses on providing a clear and specific answer to the \textit{question} we ask in Section~\ref{introduction}. Meanwhile, our analysis helps to understand whether a topology is suitable for DFL, and how to choose the most suitable training mode under existing conditions.

\section{Problem Formulation}
\label{methodology}
\begin{algorithm}[t]
\renewcommand{\algorithmicrequire}{\textbf{Input:}}
\renewcommand{\algorithmicensure}{\textbf{Output:}}
\caption{{\ttfamily FedAvg} Algorithm}
\label{algorithm:fedavg}
\begin{algorithmic}[1]
    \REQUIRE initial model $w^0$,\,$T$,\,$K$,\,$\eta$.
    \ENSURE optimized global model $w^T$.
    \FOR{$t = 0, 1, 2, \cdots, T-1$}
    \STATE randomly select a subset $\mathcal{N}$ from $\left[m\right]$
    \FOR {client $i \in \mathcal{N}$ in parallel}
    \STATE send $w^{t}$ to the client $i$ as $w_{i,0}^t$
    \STATE $w_{i,K}^{t}\leftarrow\textit{SGD-Opt}(w_{i,0}^{t},\eta,K)$
    \STATE send the $w_{i,K}^{t}$ to the server
    \ENDFOR
    \STATE $w^{t+1}\leftarrow\frac{1}{n}\sum_{i\in \mathcal{N}} w_{i,K}^{t}$
    \ENDFOR
\end{algorithmic}
\end{algorithm}

\paragraph{Notations} We first introduce some notations adopted in our paper as follows. Unless otherwise specified, we use italics for scalars, e.g. $n$, and capital boldface for matrix, e.g. $\mathbf{M}$. $\left[n\right]$ denotes a sequence of positive integers from $1$ to $n$. $\mathbb{E}[\cdot]$ denotes the expectation of $\cdot$ term with respect to the potential probability spaces. $\Vert\cdot\Vert$ denotes the $l_2$-norm of a vector and the Frobenius norm of a matrix. $\Vert\cdot\Vert_{op}$ denotes the spectral norm of a matrix. $\vert\cdot\vert$ denotes the absolute value of a scalar. Unless otherwise specified, all four arithmetic operators conform to element-wise operations.

\paragraph{Fundamental Problems} We formulate the fundamental problem as minimizing a finite-sum problem with privacy on each local heterogeneous dataset. We suppose the following simplified scenario: a total of $m$ clients jointly participate in the training whose indexes are recorded as $i$ where $i\in\left[m\right]$. On each client $i$, there is a private dataset $\mathcal{S}_i$ with $S$ data samples. Each sample is denoted as $z_{i,j}$ where $j\in\left[S\right]$. Each dataset $\mathcal{S}_i$ follows a different and independent distribution $\mathcal{D}_i$. We consider the population risk minimization on the finite-sum problem of non-convex objectives $f_i(w,z)$:
\begin{equation}
\label{population_risk}
    \min_w F(w)\triangleq\frac{1}{m}\sum_{i\in\left[m\right]}F_i(w),\quad F_i(w)\triangleq \mathbb{E}_{z\sim\mathcal{D}_i}f_i(w,z),
\end{equation}
where $F(w):\mathbb{R}^d\rightarrow\mathbb{R}$ is denoted as the global objective with respect to the parameters $w$. In the practical cases, we use the surrogate empirical risk minimization~(ERM) objective to replace Eq.(\ref{population_risk}):
\begin{equation}
\label{empirical_risk}
    \min_w f(w)\triangleq\frac{1}{m}\sum_{i\in\left[m\right]}f_i(w),\quad f_i(w)\triangleq \frac{1}{S}\sum_{z\in\mathcal{S}_i}f_i(w,z).
\end{equation}

\begin{algorithm}[t]
\renewcommand{\algorithmicrequire}{\textbf{Input:}}
\renewcommand{\algorithmicensure}{\textbf{Output:}}
\caption{{\ttfamily D-FedAvg} Algorithm}
\label{algorithm:D-FedAvg}
\begin{algorithmic}[1]
    \REQUIRE initial models $w_{i,K}^{-1}$,\,$T$,\,$K$,\,$\eta$, $\mathcal{G}$.
    \ENSURE optimized global model $w^T$.
    \FOR{$t = 0, 1, 2, \cdots, T-1$}
    \FOR {client $i \in \mathcal{N}$ in parallel}
    \STATE send $w_{i,K}^{t-1}$ to its neighbors
    \STATE $w_{i,0}^{t}\leftarrow\sum_{j\in\mathcal{A}_i}a_{ij}w_{i,K}^{t-1}$
    \STATE $w_{i,K}^{t}\leftarrow\textit{SGD-Opt}(w_{i,0}^{t},\eta,K)$
    \STATE send the $w_{i,K}^{t}$ to the server
    \ENDFOR
    \STATE $w^{t+1}\leftarrow\frac{1}{n}\sum_{i\in \mathcal{N}} w_{i,K}^{t}$
    \ENDFOR
\end{algorithmic}
\end{algorithm}

\quad

\paragraph{Centralized FL} CFL employs a global server to coordinate several local clients to collaboratively train a global model. To alleviate the communication costs, it randomly activates a subset $\mathcal{N}\left(\vert\mathcal{N}\vert=n\right)$ among all clients. At the beginning of each round, the global server sends the global model to the active clients as the initialization state. Then each local client performs SGD-Opt process~(total $K$ stochastic gradient descent steps). After the local training, the optimized local models will be sent to the global server for aggregation to generate the global model in the next round, and continue to participate in training until it is well optimized. Algorithm~\ref{algorithm:fedavg} shows the classical {\ttfamily FedAvg} method~\cite{mcmahan2017communication}.

\paragraph{Decentralized FL.} DFL allows each local client to only communicate with its neighbors on an undirected graph $\mathcal{G}$, which is defined as a collection of clients and connections between clients $\mathcal{G}=\left(\mathcal{I},\mathcal{E}\right)$. $\mathcal{I}$ denotes the clients' set $\left[m\right]$ and $\mathcal{E}\subseteq\mathcal{I}\times\mathcal{I}$ denotes the their connections. The relationships are associated with an adjacent matrix $\mathbf{A}=\left[a_{ij}\right]\in\mathbb{R}^{m\times m}$. If $\left(i,j\right)\in\mathcal{E}$, the corresponding element $a_{ij} > 0$, otherwise $a_{ij}=0$. In decentralized setups, all clients aggregate the models within their neighborhoods and perform the SGD-Opt process~(same as above). Algorithm~\ref{algorithm:D-FedAvg} shows the classical {\ttfamily D-FedAvg}~\cite{sun2022decentralized}~(communication-first).

\begin{definition}[Adjacent Matrix]
\label{def_adjacent_matrix}
    The adjacent matrix $\mathbf{A}=\left[a_{ij}\right]\in\mathbb{R}^{m\times m}$ satisfies the following properties: (1) non-negative: $a_{ij}\geq 0$; (2) symmetry: $\mathbf{A}^\top=\mathbf{A}$; (3) $null\left\{\mathbf{I}-\mathbf{A}\right\}=span\left\{1\right\}$; (4) spectral: $\mathbf{I}\succeq\mathbf{A}\succ -\mathbf{I}$; (5) double stochastic: $\mathbf{1}^\top \mathbf{A}=\mathbf{1}^\top, \mathbf{A}\cdot \mathbf{1}=\mathbf{1}$ where $\mathbf{1}=[1,1,1,\cdots,1]^\top$. The eigenvalues $\lambda_i$ of matrix $\mathbf{A}$ satisfies $1=\lambda_1>\lambda_2>\cdots>\lambda_m>-1$, where $\lambda_i$ denotes the $i$-th largest eigenvalue of $\mathbf{A}$. By defining $\lambda\triangleq\max\left\{\vert\lambda_2\vert, \vert\lambda_m\vert\right\}>0$, we can bound the spectral gap of the adjacent matrix $\mathbf{A}$ by $\lambda$.
\end{definition}

\begin{lemma}[\cite{montenegro2006mathematical}]
\label{adjacent_matrix2}
    Let the matrix $\mathbf{P}=\mathbf{1}\mathbf{1}^\top/m\in\mathbb{R}^{m\times m}$, given a positive $t\in\mathbb{Z}^+$, the adjacent matrix $\mathbf{A}$ satisfies $\Vert\mathbf{A}^t-\mathbf{P}\Vert_{\text{op}}\leq\lambda^t$, which measures the ability of aggregation with the adjacent matrix $\mathbf{A}$.
\end{lemma}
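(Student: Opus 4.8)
The plan is to establish the operator‑norm decay bound $\Vert \mathbf{A}^t - \mathbf{P}\Vert_{\mathrm{op}} \le \lambda^t$ by diagonalizing $\mathbf{A}$ in an orthonormal eigenbasis, which is legitimate because Definition~\ref{def_adjacent_matrix} guarantees $\mathbf{A}$ is real symmetric. First I would write $\mathbf{A} = \sum_{i=1}^m \lambda_i v_i v_i^\top$ with $\{v_i\}$ an orthonormal basis of eigenvectors, where $v_1 = \mathbf{1}/\sqrt{m}$ is the Perron eigenvector associated with $\lambda_1 = 1$ (this follows from property (3), $null\{\mathbf{I}-\mathbf{A}\} = span\{\mathbf{1}\}$, together with double stochasticity). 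Then $v_1 v_1^\top = \mathbf{1}\mathbf{1}^\top/m = \mathbf{P}$, so $\mathbf{P}$ is exactly the spectral projector onto the top eigenspace.

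The key computation is that powers act eigenspace‑wise: $\mathbf{A}^t = \sum_{i=1}^m \lambda_i^t v_i v_i^\top$. Since $\mathbf{P} = v_1 v_1^\top$ and $\lambda_1^t = 1$, subtracting gives $\mathbf{A}^t - \mathbf{P} = \sum_{i=2}^m \lambda_i^t v_i v_i^\top$. This is again a symmetric matrix whose eigenvalues are exactly $\{\lambda_i^t\}_{i=2}^m$ (on the orthogonal complement of $\mathbf{1}$) together with $0$ (on $span\{\mathbf{1}\}$). Hence its operator norm is $\max_{2 \le i \le m} |\lambda_i|^t = \left(\max_{2\le i\le m}|\lambda_i|\right)^t = \lambda^t$, using the definition $\lambda = \max\{|\lambda_2|, |\lambda_m|\}$ and the ordering $1 = \lambda_1 > \lambda_2 > \cdots > \lambda_m > -1$ from Definition~\ref{def_adjacent_matrix}, which ensures $|\lambda_i| \le \lambda$ for all $i \ge 2$. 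One should also note $\mathbf{A}^t \mathbf{P} = \mathbf{P} \mathbf{A}^t = \mathbf{P}$ because $\mathbf{A}\mathbf{1} = \mathbf{1}$, so the cross terms are consistent and there is no subtlety in commuting $\mathbf{A}$ with $\mathbf{P}$.

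There is essentially no hard obstacle here; the only point requiring mild care is verifying that the top eigenvalue is simple and that its eigenspace is precisely $span\{\mathbf{1}\}$, so that the residual $\mathbf{A}^t - \mathbf{P}$ genuinely lives on the complementary subspace with all remaining eigenvalues strictly inside the unit disk — this is exactly what properties (3), (4), and (5) of Definition~\ref{def_adjacent_matrix} are designed to supply. An alternative, basis‑free route is to observe $\mathbf{A}\mathbf{P} = \mathbf{P}\mathbf{A} = \mathbf{P} = \mathbf{P}^2$, so $(\mathbf{A} - \mathbf{P})^t = \mathbf{A}^t - \mathbf{P}$ by the binomial expansion (all mixed terms collapse to $\mathbf{P}$ and telescope), and then bound $\Vert (\mathbf{A}-\mathbf{P})^t \Vert_{\mathrm{op}} \le \Vert \mathbf{A} - \mathbf{P}\Vert_{\mathrm{op}}^t$ by submultiplicativity, with $\Vert \mathbf{A} - \mathbf{P}\Vert_{\mathrm{op}} = \lambda$ since $\mathbf{A} - \mathbf{P}$ is symmetric with spectrum $\{0\} \cup \{\lambda_i\}_{i=2}^m$. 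I would present the eigendecomposition argument as the main proof since it is the most transparent, and this is precisely the statement attributed to~\cite{montenegro2006mathematical}.
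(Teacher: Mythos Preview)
Your argument is correct and entirely standard. Note, however, that the paper does not supply its own proof of this lemma: it is simply quoted as a known fact with a citation to~\cite{montenegro2006mathematical}, so there is nothing to compare against. Both routes you outline---the eigendecomposition and the basis-free identity $(\mathbf{A}-\mathbf{P})^t=\mathbf{A}^t-\mathbf{P}$ via $\mathbf{A}\mathbf{P}=\mathbf{P}\mathbf{A}=\mathbf{P}=\mathbf{P}^2$ followed by submultiplicativity---are valid and are exactly how this estimate is usually derived.
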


\textbf{Generalization Gap.} 
% Because of the unknown distribution $\mathcal{D}_i$ in Eq.(\ref{population_risk}), we often generate the optimized solution with Eq.(\ref{empirical_risk}). Therefore, there is an inevitable gap between the expected solution and the generated one. An intuitive example is that the model finely trained on the training set often performs poorly on the test set, which leads to overfitting. This also makes studying how generalization gaps are affected by the training process a key challenge in the field of machine learning. We denote $\mathcal{C}$ as the joint dataset of total local dataset $\mathcal{S}_i$. We consider using the specific random algorithm $\mathcal{A}$ to solve the Eq.(\ref{empirical_risk}) on the joint dataset $\mathcal{C}$ and obtain the solution $\mathcal{A}(\mathcal{C})$. Therefore, the gap could be defined as the generalization error $\varepsilon_{G}=\mathbb{E}_{\mathcal{C},\mathcal{A}}\left[F(\mathcal{A}(\mathcal{C}))-f(\mathcal{A}(\mathcal{C}))\right]$. It reflects the joint impact caused by the dataset $\mathcal{C}$ and the algorithm $\mathcal{A}$. As an important term of the excess risk error, the generalization error $\varepsilon_{G}$ demonstrates the potential instability of the proposed algorithm.
To comprehensively explore the stability in both centralized and decentralized FL, motivated by the previous studies~\cite{hardt2016train,sun2021stability,zhou2021towards,fedinit,sun2023understanding}, we adopt the uniform stability analysis.
\begin{definition}[Uniform Stability]
\label{stability}
    We construct a new joint dataset $\widetilde{\mathcal{C}}$ which only differs from the vanilla dataset $\mathcal{C}$ at most one data sample on the local dataset $\mathcal{S}_i^\star$. Then we say $\mathcal{A}$ is an $\epsilon$-uniformly stable algorithm if:
    \begin{equation}
    \sup_{z\sim\cup\mathcal{D}_i}\mathbb{E}\left[f(\mathcal{A}(\mathcal{C}),z) - f(\mathcal{A}(\widetilde{\mathcal{C}}),z)\right]\leq \epsilon.
    \end{equation}
\end{definition}
\begin{lemma}[\cite{elisseeff2005stability,hardt2016train}]
If a stochastic method $\mathcal{A}$ is $\epsilon$-uniformly stable, we could bound its generalization error as $\varepsilon_G\leq\epsilon$.
\end{lemma}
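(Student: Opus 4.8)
The plan is to reproduce the classical symmetrization (``ghost sample'') argument of \cite{hardt2016train} in the federated finite-sum setting. First I would make explicit the object being bounded: writing the generalization error as the expected gap between population and empirical risk, $\varepsilon_G = \mathbb{E}_{\mathcal{A},\mathcal{C}}\left[F(\mathcal{A}(\mathcal{C})) - f(\mathcal{A}(\mathcal{C}))\right]$, and using $F(w)=\frac{1}{m}\sum_{i\in[m]}\mathbb{E}_{z\sim\mathcal{D}_i}f_i(w,z)$ together with $f(w)=\frac{1}{mS}\sum_{i\in[m]}\sum_{j\in[S]}f_i(w,z_{i,j})$, we obtain
\begin{equation*}
\varepsilon_G = \frac{1}{mS}\sum_{i\in[m]}\sum_{j\in[S]}\mathbb{E}_{\mathcal{A},\mathcal{C}}\left[\mathbb{E}_{z\sim\mathcal{D}_i}f_i(\mathcal{A}(\mathcal{C}),z) - f_i(\mathcal{A}(\mathcal{C}),z_{i,j})\right].
\end{equation*}

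Next, for each fixed pair $(i,j)$ I would introduce an independent copy $\mathcal{C}'$ of the whole dataset and let $\mathcal{C}^{(i,j)}$ denote $\mathcal{C}$ with the single entry $z_{i,j}$ overwritten by $z'_{i,j}\sim\mathcal{D}_i$. Because $z'_{i,j}$ is independent of $\mathcal{C}$ and has the law of a fresh draw from $\mathcal{D}_i$, it holds that $\mathbb{E}_{\mathcal{C}}\mathbb{E}_{z\sim\mathcal{D}_i}f_i(\mathcal{A}(\mathcal{C}),z)=\mathbb{E}_{\mathcal{C},\mathcal{C}'}f_i(\mathcal{A}(\mathcal{C}),z'_{i,j})$; and since $z_{i,j}$ and $z'_{i,j}$ are i.i.d.\ within client $i$, relabeling them (conditionally on all other samples and on the internal randomness of $\mathcal{A}$) gives $\mathbb{E}_{\mathcal{C},\mathcal{C}'}f_i(\mathcal{A}(\mathcal{C}),z'_{i,j})=\mathbb{E}_{\mathcal{C},\mathcal{C}'}f_i(\mathcal{A}(\mathcal{C}^{(i,j)}),z_{i,j})$. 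Substituting back,
\begin{equation*}
\varepsilon_G = \frac{1}{mS}\sum_{i\in[m]}\sum_{j\in[S]}\mathbb{E}_{\mathcal{A},\mathcal{C},\mathcal{C}'}\left[f_i(\mathcal{A}(\mathcal{C}^{(i,j)}),z_{i,j}) - f_i(\mathcal{A}(\mathcal{C}),z_{i,j})\right].
\end{equation*}

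Finally, for every realization of $(\mathcal{C},\mathcal{C}')$ the datasets $\mathcal{C}^{(i,j)}$ and $\mathcal{C}$ differ in at most one local sample, so each summand is upper bounded by $\sup_{z}\mathbb{E}_{\mathcal{A}}\left[f(\mathcal{A}(\mathcal{C}^{(i,j)}),z)-f(\mathcal{A}(\mathcal{C}),z)\right]\le\epsilon$ by Definition~\ref{stability}; taking $\mathbb{E}_{\mathcal{C},\mathcal{C}'}$ preserves the bound, and averaging the $mS$ identical estimates yields $\varepsilon_G\le\epsilon$. The only real care needed is the bookkeeping of the two independent sources of randomness---the data draw and the internal randomness of $\mathcal{A}$ (client selection, mini-batch order)---so that the exchangeability swap is carried out conditionally and the perturbation stays confined to a single coordinate, exactly matching the ``$\widetilde{\mathcal{C}}$ differs by at most one sample'' hypothesis in Definition~\ref{stability}; everything else is a one-line application of the definition, so there is no genuine obstacle beyond this bookkeeping.
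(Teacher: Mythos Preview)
Your argument is the standard symmetrization (ghost-sample) proof from \cite{hardt2016train}, adapted to the federated finite-sum structure, and it is correct. Note, however, that the paper does not supply its own proof of this lemma at all: it is stated as a known result with a citation to \cite{elisseeff2005stability,hardt2016train}, so there is nothing in the paper to compare against beyond confirming that what you wrote is precisely the argument those references contain.
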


\section{Stability and Generalization}
\label{theoretical analysis}

In this part, we mainly introduce the theoretical analysis of the generalization error bound and provide a comparison between centralized and decentralized setups in FL paradigms. We first introduce the main assumptions adopted in this paper and discuss their applicability and our improvements compared with previous studies. Then we state the main theorems, corollaries, and discussions.

\begin{assumption}
\label{assumption:smooth}
    For $\forall \ w_1, w_2\in\mathbb{R}^d$, the objective $f_i(w)$ is $L$-smooth for arbitrary inputs:
    \begin{equation}
        \Vert \nabla f_i(w_1) - \nabla f_i(w_2)\Vert\leq L\Vert w_1 - w_2\Vert.
    \end{equation}
\end{assumption}
\begin{assumption}
\label{assumption:stochastic}
    For $\forall \ w\in\mathbb{R}^d$, the local stochastic gradient $g_i=\nabla f_i(w, z)$ where $z\in\mathcal{S}_i$ is an unbiased estimator of the full local gradient $\nabla f_i(w)$ with a bounded variance:
    \begin{equation}
        \mathbb{E}_z[\ g_i - \nabla f_i(w)] = 0, \ \mathbb{E}_z\Vert g_i - \nabla f_i(w)\Vert^2 \leq \sigma_l^2.
    \end{equation}
\end{assumption}
\begin{assumption}
\label{assumption:lipschitz}
    For parameters $\mathcal{A}\left(\mathcal{C}\right), \mathcal{A}(\widetilde{\mathcal{C}})\in\mathbb{R}^d$ which are well trained by an $\epsilon$-uniformly stable algorithm $\mathcal{A}$ on dataset $\mathcal{C}$ and $\widetilde{\mathcal{C}}$ respectively, the global objective satisfies:
    \begin{equation}
        \vert f(\mathcal{A}(\mathcal{C})) - f(\mathcal{A}(\widetilde{\mathcal{C}})) \vert \leq G\Vert\mathcal{A}(\mathcal{C}) - \mathcal{A}(\widetilde{\mathcal{C}})\Vert.
    \end{equation}
\end{assumption}
\textbf{Discussions.} Assumption~\ref{assumption:smooth} and \ref{assumption:stochastic} are two general assumptions that are widely adopted in the analysis of federated stochastic optimization~\cite{reddi2020adaptive,karimireddy2020scaffold,gorbunov2021local,yang2021achieving,xu2021fedcm,gong2022fedadmm,qu2022generalized,sun2023efficient,huang2023fusion}. Assumption~\ref{assumption:lipschitz} is a variant of the vanilla Lipschitz continuity assumption. The vanilla Lipschitz continuity is widely used in the uniform stability analysis~\cite{elisseeff2005stability,hardt2016train,zhou2021towards,sun2021stability,xiao2022stability,zhu2022topology,fedinit}, which implies the objective have bounded gradients $\Vert\nabla f(w)\Vert\leq G$ for $\forall \ w\in\mathbb{R}^d$. However, several recent works have shown that it may not always hold in current deep learning~\cite{kim2021lipschitz,mai2021stability,patel2022gradient,das2023beyond}. Therefore, in order to improve the applicability of the stability analysis on general deep models, we use Assumption~\ref{assumption:lipschitz} instead, which could be approximated as a specific Lipschitz continuity only at the minimum $\mathcal{A}(\mathcal{C})$. The main challenge without the strong assumption is that the boundedness of the iterative process cannot be ensured. Our proof indicates that even if the iterative process is not necessarily bounded, the uniform stability can still maintain the vanilla upper bound.

\subsection{Stability of CFL~(Algorithm~\ref{algorithm:fedavg})}
%In this part, we introduce the stability analysis in the CFL under the classical {\ttfamily FedAvg}~(Algorithm~\ref{algorithm:fedavg}).
\begin{theorem}
\label{thm1}
    Under Assumption~\ref{assumption:smooth}$\sim$~\ref{assumption:lipschitz}, let the active ratio per communication round be $n/m$, and let the learning rate $\eta=\mathcal{O}\left(\frac{1}{tK+k}\right)=\frac{\mu}{tK+k}$ is decayed per iteration $\tau=tK+k$, where $\mu$ is a specific constant which satisfies $\mu \leq \frac{1}{L}$, let $U$ be the maximization of loss value, we have:
    \begin{equation}
        \varepsilon_G
        \leq \frac{2\sigma_l G}{mSL}\left(\frac{TK}{\tau_0}\right)^{\mu L}+ \frac{nU\tau_0}{mS}.
    \end{equation}
    By selecting a proper $\tau_0=\left(\frac{2\sigma_l G}{nUL}\right)^\frac{1}{1+\mu L}\left(TK\right)^{\frac{\mu L}{1+\mu L}}$, we can minimize the generalization error bound:
    \begin{equation}
        \varepsilon_G
        \leq \frac{4}{S}\left(\frac{\sigma_l G}{L}\right)^\frac{1}{1+\mu L}\left(\frac{n^\frac{\mu L}{1+\mu L}}{m}\right)\left(UTK\right)^\frac{\mu L}{1+\mu L}.
    \end{equation}
\end{theorem}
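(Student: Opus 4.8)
The plan is to bound the uniform stability of \texttt{FedAvg} by tracking the divergence $\delta_\tau \triangleq \mathbb{E}\Vert w_\tau - \widetilde{w}_\tau\Vert$ between two runs on the coupled datasets $\mathcal{C}$ and $\widetilde{\mathcal{C}}$, which differ in a single sample on one client $i^\star$. Following the standard recipe of \cite{hardt2016train}, I would first condition on the event that the differing sample has not yet been encountered, and split the analysis of one SGD step into two cases: with probability $1-1/S$ the sampled index avoids the special sample, and the $L$-smoothness plus the step-size choice $\eta_\tau = \mu/\tau$ gives the non-expansive bound $\delta_{\tau+1}\le(1+\eta_\tau L)\delta_\tau \le (1+\mu L/\tau)\delta_\tau$; with probability $1/S$ the special sample is drawn, and then I can only control the per-step perturbation by the gradient difference, which is handled via Assumption~\ref{assumption:stochastic} (bounded stochastic noise $\sigma_l$) rather than a bounded-gradient assumption — this is exactly where the paper deviates from prior work. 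The aggregation steps (line 8 of Algorithm~\ref{algorithm:fedavg}) are convex combinations and hence non-expansive, and since only a fraction $n/m$ of clients are active per round, the special client participates in a given round with probability $n/m$; on rounds where $i^\star$ is inactive its local model is untouched so $\delta$ does not grow from that branch.

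The key steps, in order, would be: (1) set up the coupling and the recursion for $\delta_\tau$ over the $TK$ total iterations, carefully folding in the $n/m$ participation probability so that the "bad event" probability per local step effectively becomes $\frac{n}{m}\cdot\frac{1}{S}$ — or more precisely carrying the $n/(mS)$ factor through as the coefficient of the additive perturbation term; (2) use the standard trick of not unwinding the recursion from $\tau=0$ but from a cutoff iteration $\tau_0$, bounding the probability that the special sample is hit before $\tau_0$ and paying at most $U$ (the max loss value) times that probability for the "early collision" contribution, yielding the $\frac{nU\tau_0}{mS}$ term; (3) for $\tau\ge\tau_0$, solve the linear recursion $\delta_{\tau+1}\le(1+\mu L/\tau)\delta_\tau + \frac{2\eta_\tau\sigma_l}{\ldots}$-type inequality, using $\prod_{\tau_0}^{TK}(1+\mu L/\tau)\le \exp(\mu L\sum 1/\tau)\le (TK/\tau_0)^{\mu L}$ to get the $\left(\frac{TK}{\tau_0}\right)^{\mu L}$ growth factor and summing the additive terms, which after multiplying by $G$ (Assumption~\ref{assumption:lipschitz}) and dividing by the $mS$-sized data pool gives the first term $\frac{2\sigma_l G}{mSL}(TK/\tau_0)^{\mu L}$; (4) optimize over the free parameter $\tau_0$ by balancing the two terms — setting the derivative to zero gives $\tau_0 \propto (TK)^{\mu L/(1+\mu L)}$ with the stated constant, and substituting back collapses both terms into the final $\mathcal{O}\!\left(\frac{1}{S}(\sigma_l G/L)^{1/(1+\mu L)}\frac{n^{\mu L/(1+\mu L)}}{m}(UTK)^{\mu L/(1+\mu L)}\right)$ bound.

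The main obstacle I anticipate is step (3): without the bounded-gradient assumption, the additive perturbation injected when the special sample is drawn is not a clean constant but depends on $\sigma_l$ and possibly on the current iterate, so I need to argue that the recursion still closes — i.e., that the stochastic-gradient variance bound is enough to replace $\Vert\nabla f\Vert\le G$ inside the per-step expansion. The trick (hinted at in the "Discussions" paragraph) is that Assumption~\ref{assumption:lipschitz} is only invoked at the very end, on the trained parameters, so the intermediate iterates never need to be bounded; the in-flight analysis only needs the difference of two stochastic gradients evaluated at (possibly different) points, one term of which is absorbed into the $(1+\eta L)$ contraction factor and the other into an $\eta\cdot(\text{noise})$ additive term controlled by $\sigma_l$. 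Getting the bookkeeping of expectations right here — in particular keeping the expansion factor $(1+\mu L/\tau)$ and not something worse, and making sure the $2\sigma_l$ (rather than $\sigma_l$) and the $1/L$ factors come out as stated — is the delicate part; the rest is the now-routine "cutoff-and-optimize" calculation from \cite{hardt2016train}.
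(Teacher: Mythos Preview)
Your proposal is correct and mirrors the paper's proof: the Hardt-et-al.\ cutoff-and-optimize scheme, with the bounded-gradient constant replaced by $2\eta\sigma_l$ via Assumption~\ref{assumption:stochastic} in the ``different sample'' step (the paper packages this as Lemmas~\ref{same_data_update}--\ref{recursion}, tracking the sum $\Delta_k^t=\sum_i\Vert w_{i,k}^t-\widetilde w_{i,k}^t\Vert$). One small bookkeeping clarification: in the paper the $n/m$ factor enters \emph{only} through the early-collision probability (your step 2, their Lemma~\ref{lemma:stability_bound}), not through the additive term in the post-$\tau_0$ recursion, and the $1/m$ in the first term comes from the final averaging $\mathbb{E}\Vert w^{T+1}-\widetilde w^{T+1}\Vert\le\frac{1}{m}\Delta_K^T$ rather than from an ``$mS$-sized data pool.''
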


\begin{remark}[Stability.]
    The generalization error in the CFL is mainly affected by the number of samples $S$, the number of total clients $m$, the number of active clients $n$, and the total iterations $TK$. The vanilla SGD achieves the upper bound of $\mathcal{O}\left(T^\frac{\mu L}{1+\mu L}/S\right)$~\cite{hardt2016train}, while the stability of CFL is worse than vanilla SGD due to additional negative impacts of $n$ and $K$. Specifically, when the number of active clients and local intervals increases, its performance will decrease significantly. An intuitive understanding is that when the number of active clients increases, it will be easier to select new samples that are not consistent with the current understanding. This results in the model having to adapt to the new knowledge.
\end{remark}

\begin{remark}[Best participation.]
\label{FL_excess_risk}
    From the generalization error perspective, it achieves the best stability when $n=1$. Under this, CFL degrades to vanilla SGD. However, this is undoubtedly significantly inefficient for optimization. Current studies in FL have proven that convergence maintains the linear speedup property~\cite{yang2021achieving}, where enlarging $n$ could improve the convergence rate. Therefore, it is a trade-off for CFL in selecting the proper $n$. To balance optimization and generalization errors and get lower test error, the optimal $n$ is {\ttfamily partial}-participation training.
\end{remark}

When $n=1$, CFL degenerates into the conclusion of the vanilla SGD. When $m=n$ and $K>1$, it degenerates into the conclusion of the local SGD. The analysis in centralized federated learning requires the $m \geq n >1$ and $K > 1$. In Remark~\ref{FL_excess_risk}, our analysis points out it is a trade-off on $n$, $K$, and $T$ in {\ttfamily FedAvg} and centralized federated learning. We meticulously provide the recommended selection for the number of active clients $n$ to achieve optimal efficiency.

Generally, the communication round $T$ is decided by the training costs and local computing power. Therefore, under a fixed local interval $K$, when the optimization convergence dominates the test accuracy, increasing $n$ brings linear speedup property and effectively accelerates the training process~\cite{yang2021achieving}. Most of the advanced federated methods have also been proven to benefit from this speedup. However, when the generalization error dominates the test accuracy, large $n$ is counterproductive. \cite{charles2021large} similarly observe the experiments that selecting a very large cohorts~($n$) generally leads to lower accuracy. We theoretically prove this phenomenon and provide a rough estimation of the best value of active ratios. Let the linear speedup be $\mathcal{O}\left(1 \ /\ n\right)$~\cite{yang2021achieving} in the optimization and $\mathcal{O}\left(n^{\frac{\mu L}{1+\mu L}}\ /\ m\right)$ in the generalization, the optimal $n$ satisfies $n^\star\approx\mathcal{O}(m^\frac{1+\mu L}{1+2\mu L})$, which could optimally balance the optimization and generalization error.

% We summarize them as follows:
% \begin{itemize}
%     \item The best number of active clients per round is approximately in the order of $(\sqrt{m},m)$.
%     \item When local intervals $K$ increase, decreasing $n$ appropriately can achieve better performance.
% \end{itemize}

% Corresponding empirical studies on CFL are shown in Section~\ref{exp:fl_active_ratio} and Appendix~\ref{ap:fl_local_interval}. 

\subsection{Stability of DFL~(Algorithm~\ref{algorithm:D-FedAvg})}
% In this part, we introduce the stability analysis in the DFL under the classical {\ttfamily D-FedAvg}~(Algorithm~\ref{algorithm:D-FedAvg}).

\begin{theorem}
\label{thm2}
    Under Assumption~\ref{assumption:smooth}$\sim$~\ref{assumption:lipschitz}, let the communication graph be $\mathbf{A}$ as introduced in Definition~\ref{def_adjacent_matrix} which satisfies the conditions of spectrum gap $\lambda$ in Lemma~\ref{adjacent_matrix2}, and let the learning rate $\eta=\mathcal{O}\left(\frac{1}{tK+k}\right)=\frac{\mu}{tK+k}$ is decayed per iteration $\tau=tK+k$ where $\mu$ is a specific constant which satisfies $\mu \leq \frac{1}{L}$, let $U$ be the maximization of loss value, we have:
    \begin{equation}
        \varepsilon_G
        \leq \frac{2\sigma_l G}{SL}\left(\frac{1+6\sqrt{m}\kappa_\lambda}{m}\right)\left(\frac{TK}{\tau_0}\right)^{\mu L} + \frac{U\tau_0}{S}.
    \end{equation}
    where $\kappa_\lambda$ is a constant~(Lemma~\ref{lambda}).
    By selecting a proper $\tau_0=\left(\frac{2\sigma_l G}{UL}\frac{1+6\sqrt{m}\kappa_\lambda}{m}\right)^\frac{1}{1+\mu L}\left(TK\right)^{\frac{\mu L}{1+\mu L}}$, we can minimize the generalization error bound as:
    \begin{equation}
        \varepsilon_G
        \leq \frac{4}{S}\left(\frac{\sigma_l G}{L}\right)^\frac{1}{1+\mu L}\left(\frac{1+6\sqrt{m}\kappa_\lambda}{m}\right)^{\frac{1}{1+\mu L}}\left(UTK\right)^{\frac{\mu L}{1+\mu L}}.
    \end{equation}
\end{theorem}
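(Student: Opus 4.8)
The plan is to mirror the structure of the CFL proof (Theorem~\ref{thm1}) while tracking the additional aggregation error introduced by the gossip matrix $\mathbf{A}$. First I would set up the coupled iterates: let $\mathcal{C}$ and $\widetilde{\mathcal{C}}$ differ in a single sample on client $i^\star$, run {\ttfamily D-FedAvg} on both with shared randomness, and define $\delta_\tau = \Vert \overline{w}^\tau - \overline{\widetilde{w}}^\tau\Vert$ where $\overline{w}^\tau = \frac{1}{m}\sum_i w_{i}^\tau$ is the averaged parameter, together with the per-client stability $\delta_{i,\tau} = \Vert w_i^\tau - \widetilde{w}_i^\tau\Vert$. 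As in the standard expansiveness argument, at a step that does \emph{not} touch the differing sample, an SGD update with $\eta_\tau = \mu/\tau \le 1/L$ is $(1+\eta_\tau L)$-expansive; at the step that does touch it (probability $1/(mS)$ per iteration, as in CFL), one picks up the extra $2\eta_\tau \sigma_l$-type term controlled by Assumption~\ref{assumption:stochastic} together with $G$ from Assumption~\ref{assumption:lipschitz}. The new ingredient is the mixing step $w_{i,0}^{t} = \sum_j a_{ij} w_{i,K}^{t-1}$: since $\mathbf{A}$ is doubly stochastic, averaging is exact on $\overline{w}$, but the \emph{consensus} deviation $\Xi_\tau = \Vert \mathbf{W}^\tau - \mathbf{P}\mathbf{W}^\tau\Vert$ (stacking clients as rows) contracts by $\lambda$ per communication round by Lemma~\ref{adjacent_matrix2}, and this deviation feeds back into $\delta_\tau$.

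The key steps, in order, would be: (1) derive a one-step recursion of the form $\delta_{\tau+1} \le (1+\eta_\tau L)\delta_\tau + \eta_\tau L \cdot \frac{1}{m}\sum_i \Vert w_{i}^\tau - \overline{w}^\tau\Vert_{\text{coupled}} + \frac{2\eta_\tau \sigma_l G}{mS}$-style bound, where the middle term captures how client-level disagreement in the coupled system leaks into the averaged-iterate stability; (2) bound the coupled consensus error, showing it is of order $\sqrt{m}\,\kappa_\lambda$ times the per-iteration perturbation — this is where the factor $1+6\sqrt{m}\kappa_\lambda$ and the constant $\kappa_\lambda$ from the referenced Lemma~\ref{lambda} enter, essentially by summing a geometric series in $\lambda$ over the $K$-step local intervals and the $T$ rounds; (3) unroll the recursion from a chosen ``burn-in'' time $\tau_0$, using the telescoping/product bound $\prod_{\tau=\tau_0}^{TK}(1+\mu L/\tau) \le (TK/\tau_0)^{\mu L}$ and the fact that $\mathbb{E}\,\delta_{\tau_0} \le U\tau_0/(S\cdot m)$-type initial slack absorbed via the $U$ bound on loss values; (4) combine with Assumption~\ref{assumption:lipschitz} to pass from parameter stability to $\varepsilon_G$, yielding $\varepsilon_G \le \frac{2\sigma_l G}{SL}\big(\frac{1+6\sqrt{m}\kappa_\lambda}{m}\big)(TK/\tau_0)^{\mu L} + \frac{U\tau_0}{S}$; (5) optimize over $\tau_0$ by differentiation, which gives the stated $\tau_0 = \big(\frac{2\sigma_l G}{UL}\frac{1+6\sqrt{m}\kappa_\lambda}{m}\big)^{1/(1+\mu L)}(TK)^{\mu L/(1+\mu L)}$ and the final clean bound.

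The main obstacle I expect is step~(2): cleanly decoupling the consensus error from the stability recursion without invoking a bounded-gradient assumption. In the classical analysis one bounds $\Vert w_i - \overline{w}\Vert$ using $\Vert\nabla f\Vert \le G$, but here we only have Assumption~\ref{assumption:lipschitz}, which controls differences at the trained minimum, not along the trajectory. The trick — foreshadowed in the ``Discussions'' paragraph — is that we never need to bound the consensus error of a \emph{single} run; we only need the consensus error of the \emph{difference} between the two coupled runs, which is seeded solely by the single perturbed sample and therefore can be bounded by propagating that one perturbation through the mixing and local-SGD dynamics. Concretely, one shows the coupled disagreement satisfies its own contraction-plus-injection recursion with injection term proportional to $\eta_\tau \sigma_l G/(mS)$ and contraction factor $\lambda$ per round against expansion $(1+\eta_\tau L)^K$ per round, and summing this gives the $\kappa_\lambda$ factor. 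Matching constants so that the $\sqrt{m}$ scaling appears exactly (rather than, say, $m$) will require care with how the norm of an $m$-row stacked matrix relates to its rows, and this bookkeeping is the delicate part; everything else is the same machinery as Theorem~\ref{thm1}.
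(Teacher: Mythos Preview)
Your overall architecture matches the paper's: couple the two runs, track per-client parameter differences, separate the local-SGD expansion from a topology-dependent consensus term, sum the latter via Lemma~\ref{lambda}, and optimize over $\tau_0$. Two points need correction.

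First, a small one: in DFL every client participates each round, so the probability the perturbed sample is touched at iteration $\tau$ is $1/S$, not $1/(mS)$ ``as in CFL''; this is why the burn-in term is $U\tau_0/S$ (which you do write correctly in step~(4)) and not the CFL-style $nU\tau_0/(mS)$.

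Second, and this is the real gap, your description of the consensus injection is where the argument would stall. The injection into the recursion for $\|(\mathbf{I}-\mathbf{P})\Phi_K^t\|$ (and for $\|(\mathbf{A}-\mathbf{P})\Phi_K^t\|$) is not a constant of the form $\eta_\tau\sigma_l G/(mS)$: it is $\|\sum_k\eta_k^s\Gamma_k^s\|$, where $\Gamma_k^s$ stacks the gradient \emph{differences} across \emph{all} $m$ clients, and these are nonzero on every client once the coupled parameters have diverged anywhere. Without bounded gradients you cannot cap this directly. The paper's device is a two-pass bootstrap: first run the crude recursion $\Delta_{k+1}^t+2\sigma_l/(SL)\le(1+\eta L)(\Delta_k^t+2\sigma_l/(SL))$ together with the topology-blind aggregation bound $\Delta_0^{t+1}\le\Delta_K^t$ (Lemma~\ref{aggregation}), yielding $\Delta_k^t+2\sigma_l/(SL)\le(\tau/\tau_0)^{\mu L}\cdot 2\sigma_l/(SL)$; then use per-client smoothness to get $\|\eta\Gamma_k^t\|\le\eta L\bigl(\Delta_k^t+2\sigma_l/(SL)\bigr)\le(2\mu\sigma_l/(\tau S))(\tau/\tau_0)^{\mu L}$; only then feed this into the consensus recursion and apply Lemma~\ref{lambda} with exponent $\alpha=1-\mu L$ to the sum $\sum_s\lambda^{t-s}/s^{1-\mu L}$. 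Your sketch names the right difficulty and the right intuition (``only the consensus of the difference matters''), but the concrete mechanism you describe---a contraction-plus-constant-injection recursion---does not close without this bootstrap. Relatedly, the paper tracks $\Delta_k^t=\sum_i\|w_{i,k}^t-\widetilde w_{i,k}^t\|$ rather than the averaged-iterate gap $\delta_\tau$, precisely because the smoothness bound on $\|\Gamma\|$ is per-client; the $\sqrt{m}$ then enters via Cauchy--Schwarz when converting $\sum_i\|(\mathbf{A}\Phi_K^t-\Phi_K^t)_i\|$ to the Frobenius norm $\|\mathbf{A}\Phi_K^t-\Phi_K^t\|$, which is split as $\|(\mathbf{A}-\mathbf{P})\Phi_K^t\|+\|(\mathbf{I}-\mathbf{P})\Phi_K^t\|$.
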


\begin{remark}[Stability.]
\label{DFL_stability}
    The generalization error in DFL performs with the impact of the number of samples $S$, the number of total clients $m$, and total iterations $TK$. Differently, it is also affected by the topology. $\kappa_\lambda$ is also a widely used coefficient related to the $\lambda$ that could measure different connections in the topology. Its stability achieves the best performance when we select the $\kappa_\lambda=0$, which corresponds to the {\ttfamily full}-topology. In practical scenarios, DFL prefers a small $\kappa_\lambda$ coefficient to improve generalization performance as much as possible, which indicates that the links in the communication topology should be dense enough.
\end{remark}

\begin{remark}[Best Topology.]
\label{DFL_excess_risk}
    Similarly, we need to comprehensively consider the joint performance of the optimization and generalization in DFL. \cite{haddadpour2019convergence,sun2022decentralized} study the convergence and provide the analysis that the spectrum gap $\lambda$ usually exists in the non-dominant terms in DFL. To minimize the impacts of the spectrum gap in convergence rate, it still prefers a dense topology. Therefore, without considering communication costs, {\ttfamily full}-topology is the best selection in DFL which minimizes both the optimization and generalization error.
\end{remark}

\begin{table}[t]
  \caption{Comparison of common topologies. The arrow denotes the trends as $m$ increases.}
  \label{DFL_kappa}
  \centering
  \vspace{-0.2cm}
  
  \begin{tabular}{ccccc}
    \toprule
    % \multicolumn{2}{c}{Part}                   \\
    % \cmidrule(r){1-2}
    Topologies     &   $\kappa_\lambda$   &  $\varepsilon_G$ & Trends \\
    \midrule
    {\ttfamily full}    & 0 & $\widetilde{\mathcal{O}}\left(m^{-\frac{1}{1+\mu L}}\right)$ & $\downarrow$\\
    \specialrule{0em}{1pt}{1pt}
    {\ttfamily exp}    & $\mathcal{O}\left(\ln m\right)$ & $\widetilde{\mathcal{O}}\left(m^{-\frac{1}{2(1+\mu L)}}\right)$ & $\downarrow$ \\
    \specialrule{0em}{1pt}{1pt}
    \midrule
    \specialrule{0em}{1pt}{1pt}
    {\ttfamily grid}            & $\mathcal{O}\left(m\ln m\right)$ & $\widetilde{\mathcal{O}}\left(m^{\frac{1}{2(1+\mu L)}}\right)$ & $\uparrow$ \\
    \specialrule{0em}{1pt}{1pt}
    {\ttfamily ring}            & $\mathcal{O}\left(m^2\right)$  & $\widetilde{\mathcal{O}}\left(m^{\frac{3}{2(1+\mu L)}}\right)$ & $\uparrow$ \\
    \specialrule{0em}{1pt}{1pt}
    {\ttfamily star}            & $\mathcal{O}\left(m^2\right)$  & $\widetilde{\mathcal{O}}\left(m^{\frac{3}{2(1+\mu L)}}\right)$ & $\uparrow$ \\
    \bottomrule
  \end{tabular}
\end{table}

Remark~\ref{DFL_excess_risk} explains that the spectrum gap mainly affects the generalization error. Similarly, we consider the communication rounds $T$ to be determined by local computing power. When the local interval $K$ is fixed, selecting the topology with larger $\kappa_\lambda$ will achieve a bad performance. As stated in Remark~\ref{DFL_stability}, when $\kappa_\lambda=0$ it achieves a minimal upper bound. This also indicates that the {\ttfamily full}-topology is the best selection. According to the research of \cite{ying2021exponential,zhu2023stability}, we summarize some $\lambda$ and $\kappa_\lambda$ values of classical topologies in Table~\ref{DFL_kappa} to compare their generalization performance. Our theoretical analysis indicates that the classical topologies, i.e. {\ttfamily grid}, {\ttfamily ring}, and {\ttfamily star}, will lead to a significant drop in stability as total clients $m$ increase in DFL. In contrast, the {\ttfamily full} and {\ttfamily exp} topologies show better performance which can reduce the generalization error as $m$ increases. In FL setups, larger $m$ also means more data samples. That is, if the topology is not good enough, i.e., $\kappa_\lambda > \mathcal{O}(\sqrt{m})$, even if we feed more training data samples, it still gets worse stability in DFL. Generally, small $\kappa_\lambda$ always means more connections in the topology and more communication costs, which is similar to the spectrum gap $1-\lambda$ being large enough~($\lambda$ is small enough). This also gives us a clear insight to design topologies. We show some classical designs of fixed topologies in the Appendix.

\begin{table*}[t]
  \caption{Comparisons of current theoretical analysis in decentralized methods on the non-convex objectives. ``BG" means the bounded gradient, i.e. $\Vert \nabla f(w)\Vert^2\leq G^2$, which is a strong assumption of the Lipschitz continuous functions. ``BL" means the bounded loss, i.e. $\sup f < +\infty$. ``BV" means the bounded gradient variance, i.e. Assumption~\ref{assumption:stochastic} in our paper. $\prod_\eta$ is the polynomial of $\eta_t$.}
  \label{DFL_res_comp}
  \centering
  \vspace{-0.2cm}
  
  \begin{tabular}{cccccc}
    \toprule
    % \multicolumn{2}{c}{Part}                   \\
    % \cmidrule(r){1-2}
         &  Method  &  Assumption   &  Results  &  Scenarios  \\
    \midrule
    \cite{sun2021stability} & {\ttfamily D-SGD} & smooth, BG, BL & $\mathcal{O}\left(\frac{U(1+\kappa_\lambda)}{Sm}T^{\frac{\mu L}{1+\mu L}}\right)$ & \multirow{6}{*}{IID dataset, $K=1$} \\
    \specialrule{0em}{2pt}{2pt}
    \cite{belletimproved} & {\ttfamily D-SGD} & smooth, BG, BL & $\mathcal{O}\left(\frac{U^{\frac{\mu L}{1 + \mu L}}}{Sm^{\frac{1}{1+\mu L}}}T^{\frac{\mu L}{1 + \mu L}}\right)$ & \\
    \specialrule{0em}{2pt}{2pt}
    \cite{wang2024towards} & {\ttfamily D-SGD-ZO} & smooth, BG, BL & $\mathcal{O}\left(\frac{U(1+C_d)}{Sm}\prod_\eta\right)$ & \\
    \specialrule{0em}{2pt}{2pt}
    \hline
    \specialrule{0em}{2pt}{2pt}
    \cite{zhu2024stability} & {\ttfamily D-SGDA} & smooth, BG, BL & $\mathcal{O}\left(\left(\frac{m}{S}\right)^{\frac{1}{\mu + L}}\kappa_\lambda^{\frac{1}{\mu + L}}T^{1-\frac{1}{\mu + L}}\right)$ & nonconvex-nonconcave\\
    \specialrule{0em}{2pt}{2pt}
    \hline
    \specialrule{0em}{2pt}{2pt}
    \cite{hua2022efficient} & {\ttfamily D-FedAvg} & smooth, BG, BL, BV & $\mathcal{O}\left(\frac{UK}{S}(TK)^{\frac{\mu L}{1+\mu L}}\right) + \mathcal{O}\left(\frac{1}{L}\right)$ & \multirow{4}{*}{non-IID dataset, $K>1$} \\
    \specialrule{0em}{2pt}{2pt}
    ours & {\ttfamily D-FedAvg} & smooth, BL, BV & $\mathcal{O}\left(\frac{U^{\frac{\mu L}{1+\mu L}}}{S}\left(\frac{1+6\sqrt{m}\kappa_\lambda}{m}\right)^{\frac{1}{1+\mu L}}\left(TK\right)^{\frac{\mu L}{1+\mu L}}\right)$ & \\
    \bottomrule
  \end{tabular}
\end{table*}

\begin{remark}[Theoretical comparisons.]
We compare our results with some existing decentralized theoretical results in Table~\ref{DFL_res_comp}. \cite{sun2021stability} provide a general generalization analysis for the {\ttfamily D-SGD} method, where a stability gap recurrence relation was constructed via a coefficient related to the spectral gap in the aggregation process, leading to the derivation of an upper bound on the generalization error. \cite{belletimproved} further improve this error bound and demonstrate that, under homogeneous data, the bound could recover to a limit independent of the topology parameters (asymptotically approaching SGD). \cite{wang2024towards} and \cite{zhu2024stability} additionally provide conclusions about zero-order {\ttfamily D-SGD} and {\ttfamily D-SGDA}. \cite{hua2022efficient} focus on the heterogeneous dataset and provide an error bound asymptotically approaching SGD based on \cite{sun2021stability}. However, it does not successfully capture the subtle effects of the client scale $m$ and the impact of the topology. Our results are an improved version of the conclusions which can directly degrade to results in \cite{belletimproved} by select $\kappa_\lambda=0$ and $K=1$. Meanwhile, it also preserves the variation in topology as a result of \cite{sun2021stability} without being affected by an additional term $K$. Our theoretical analysis also eliminates the dependence on the BG~(bounded gradient) assumption, which can recover the vanilla error bound in a broader scope.
\end{remark}

\subsection{Comparisons between CFL \& DFL}
In this section, we mainly discuss the strengths and weaknesses of each framework. In Table~\ref{Comparison_FL_DFL}, we summarize Remark~\ref{FL_excess_risk} and \ref{DFL_excess_risk} and their optimal selections with impacts of the $S$, $m$, $T$, and $K$ terms.
$n$ could be considered as the ``topology'' in CFL and $\kappa_\lambda$ represents the impacts from the topology in DFL. Therefore, to understand which mode is better, we can directly study the impacts of their connections.
In Table~\ref{Comparison_FL_DFL}, we could know {\ttfamily FedAvg} always generalizes better than {\ttfamily D-FedAvg}, which largely benefits from regularly averaging on a global server. In the whole training process, centralized approaches always maintain a high global consensus. Though \cite{lian2017can} have learned that the computational complexity of the C$/$D-PSGD may be similar in the optimization, from the perspective of test error and excess risk, CFL shows stronger stability and more excellent generalization ability. 
However, the high communication costs in centralized approaches nevertheless are unavoidable. The communication bottleneck is one of the important concerns restricting the development of federated learning. To achieve reliable performance, the number of active clients $n$ in CFL must satisfy at least a polynomial order of $m$. Too small $n$ will always hurt the performance of the optimization. In the DFL framework, communication is determined by the average degree of the adjacent matrix. For instance, in the exponential topology, the communication costs achieve $\mathcal{O}\left(\log m\right)$ at most in one client~\cite{ying2021exponential}, which is less than the communication overhead in CFL. Therefore, when we have to consider the communication bottleneck, it is also possible to select DFL at the expense of generalization performance.

\quad

We simply summarize the setups we considered as follows. We can assume that: (1) the communication capacity of the global server in CFL is $\rho\times$ larger than the common clients; (2) each local client can support at most $N_D$ connections simultaneously. If $m\leq\left(\rho N_D\right)^\frac{1+2\mu L}{1+\mu L}\leq \left(\rho N_D\right)^{1.5}$, the centralized approaches definitely performs better. When $m$ is very large, although DFL can save communications as a compromise, its generalization performance will be far worse than CFL.

\begin{remark}
In general non-convex FL setups, the lower bound of the stability is still an open problem worthy further in-depth studies. However, current studies on the general SGD method have revealed when $T$ is large enough, the upper bound can be sufficiently sharp~\cite{zhang2022stability,zhang2023lower}. Specifically, the lower bound of SGD achieves at least $\mathcal{O}\left(\frac{T^{c}}{S^{1+c}}\right)$ where $c < 1$ and when $T^{\frac{c}{1+c}}\geq S$ its lower bound will achieve close to the upper bound, yielding an accurate analysis. Due to the complexity of the FL scenario, its lower bound must not be superior to the vanilla SGD method. This also illustrates that our analysis could characterize their properties when $T$ is sufficiently large.
\end{remark}

\section{Experiments}
\label{experiments}

\begin{figure*}[t]
        \centering
        \subfloat[Ring.]{
            \includegraphics[width=0.19\textwidth]{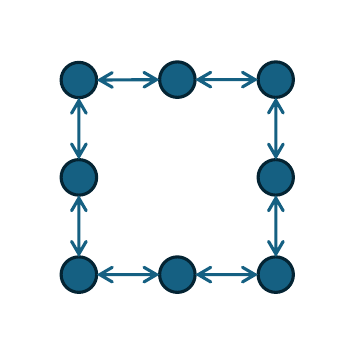}
            \label{ap:ring}
        }\!\!\!\!\!\!\!
        \subfloat[Grid.]{
            \includegraphics[width=0.19\textwidth]{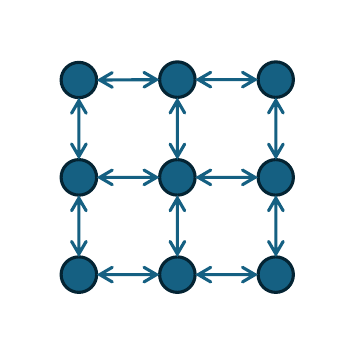}
            \label{ap:grid}
        }\!\!\!\!\!\!\!
        \subfloat[Star.]{
            \includegraphics[width=0.19\textwidth]{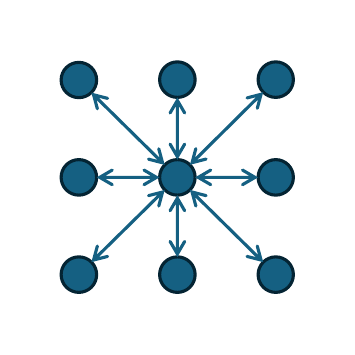}
            \label{ap:star}
        }\!\!\!\!\!\!\!
        \subfloat[Exp.]{
            \includegraphics[width=0.19\textwidth]{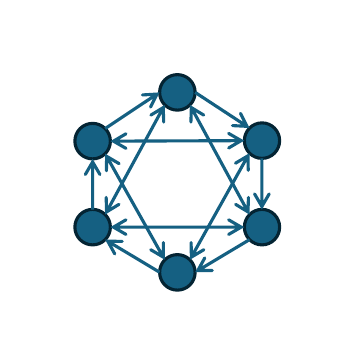}
            \label{ap:exp}
        }\!\!\!\!\!\!\!
        \subfloat[Full.]{
            \includegraphics[width=0.19\textwidth]{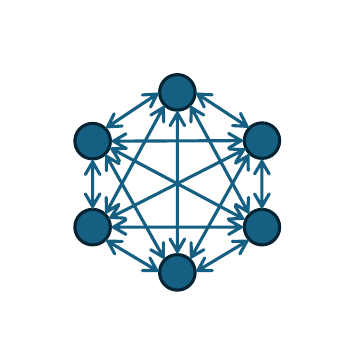}
            \label{ap:full}
        }
        \caption{Some classical topologies in the decentralized approaches.}
        \label{ap:topologies}
    \end{figure*}

In this section, we primarily introduce empirical studies. We first introduce the general setups including the datasets, models, and hyperparameter selections. Then we introduce the main experiments in CFL and DFL to validate our theoretical analysis. Details are stated in the Appendix.

\quad

\paragraph{Datasets} We conduct empirical studies on the image classification task on CIFAR-10 dataset~\cite{cifar100}. Our experiments focus on the validation of the theoretical analysis above and we follow \cite{dirichlet} to split the dataset with a Dirichlet distribution, which is widely used in the FL community. We denote Dirichlet-$\beta$ as different heterogeneous levels that are controlled by the concentration parameter $\beta$. In our paper, we select $\beta=0.1$ for the heterogeneity. The number of total clients is selected from $\left[100, 200, 500\right]$.

\quad

\paragraph{Models} ResNet~\cite{resnet} is a classical and fundamental backbone in the studies of federated learning. Most previous works have performed validation experiments based on this model. However, there are various fine-tuning structures adopted in different studies which makes the test accuracy claimed in different papers difficult to compare directly. We also try some structural modifications, i.e. using a small convolution size at the beginning, which may improve the performance without any other tricks. In order to avoid errors in reproduction or comparison, we use the implementation in the Pytorch Model Zoo without other handcraft adjustments. The out dimension of the last linear layer is decided by the total classes of the dataset.

\quad

\paragraph{Hyperparameters} We follow previous classical studies to select the hyperparameters in both CFL and DFL scenarios. We fix the total communication rounds are set as $T=1000$ and set weight decay as $0.001$ without momentum. We select local epochs from $\left[5, 20\right]$.

\quad

\begin{table}[t]
    \caption{Optimal selection of batchsize.}
    \label{batchsize}
    \centering
    \vspace{-0.2cm}
    \begin{tabular}{c| c| c| c |c}
    \toprule[1pt]
		  & Total Clients & E & Selection & Optimal\\
		\midrule
		& m\ =\ 500 &   &   & 10\\
            & m\ =\ 200 &  5 &  & 20\\
            & m\ =\ 100 &   &   & 20\\
            CFL & & & $\left[10, 20, 40, 80\right]$ & \\
            
            & m\ =\ 500 &   &   & 20\\
            & m\ =\ 200 &  20 &  & 40\\
            & m\ =\ 100 &   &   & 80\\
            \midrule
            & m\ =\ 500 &   &   & 40\\
            & m\ =\ 200 &  5 &  & 40\\
            & m\ =\ 100 &   &   & 40\\
            DFL & & & $\left[10, 20, 40, 80\right]$ & \\
            
            & m\ =\ 500 &   &   & 40\\
            & m\ =\ 200 &  20 &  & 80\\
            & m\ =\ 100 &   &   & 80\\
		\bottomrule [1pt]
        \end{tabular}
        \vskip -0.1in
    \end{table}

\textbf{Batchsize.} An important metric is the choice of batchsize, which relates to training fairness under the same local training interval in CFL and DFL respectively. Traditional comparisons are based on the number of data passes, i.e. training epochs. This also means that when $K$ and $S$ are the fixed, the batchsize should also be chosen consistently. However, we find in the experiments that the choice of batchsize has its own tendencies under these two modes. Blindly using the optimal choice from one mode as a reference can severely undermine the performance in the other mode. Current works mainly prefer two selections of the batchsize. One is 50~\cite{karimireddy2020scaffold,acar2021federated,sun2023fedspeed,sun2023dynamic}, and the other is 128~\cite{qu2022generalized,shi2023improving}. \cite{xu2021fedcm} also discuss some different selections based on manual adjustments. Motivated by the previous works and the fact that batchsize is affected by the local intervals and the learning rate, we conduct extensive experiments and select the optimal value which could achieve significant performance, as shown in Table~\ref{batchsize}. 

\quad

\textbf{Learning rate.} In the previous studies, they unanimously select the local learning rate as $0.1$. \cite{xu2021fedcm,shi2023improving} also test different selections and confirm the optimal selection as $0.1$. We follow this selection to fairly compare their performance.

\quad

\textbf{Weight Decay.} We test some common selections from $\left[0.01,0.001,0.0005,0.0001\right]$. Its optimal selection jumps between $0.01$ and $0.005$. Even in similar scenarios, there will be some differences in their optimal values. The fluctuation amplitude reflected in the test accuracy is about $0.4\%$. For a fair comparison, we fix it as a median value of $0.001$.

\quad

\paragraph{Topology} Fig.~\ref{ap:topologies} introduces $5$ classical topologies in the decentralized communication. Ring topology only activates the connection between the adjacent clients and it only allows the information communicated around the main branch. Grid topology is much better since it allows the node to be accessed by its neighbours~($1$ hop in the distance), which is more practical in the real-world setups. Star topology is more similar to the centralized approach since a node plays the role of the global server and can connect the other nodes. However, their calculation types are exactly different. Exp topology allows a node to connect approximately $\log(m)$ nodes~(total $m$ nodes) in a graph and it will communicate to the neighbors that are $2^0, 2^1, \cdots, 2^{\log(m)+1}$ hops away. Full topology means all nodes are open to be accessed and it is equal to the full participation in the centralized approaches.

\subsection{Centralized Federated Learning}
\label{exp:fl_active_ratio}

\begin{figure*}[t]
    \centering
    \subfloat[m\ =\ 100, E\ =\ 5]{
        \includegraphics[width=0.32\textwidth]{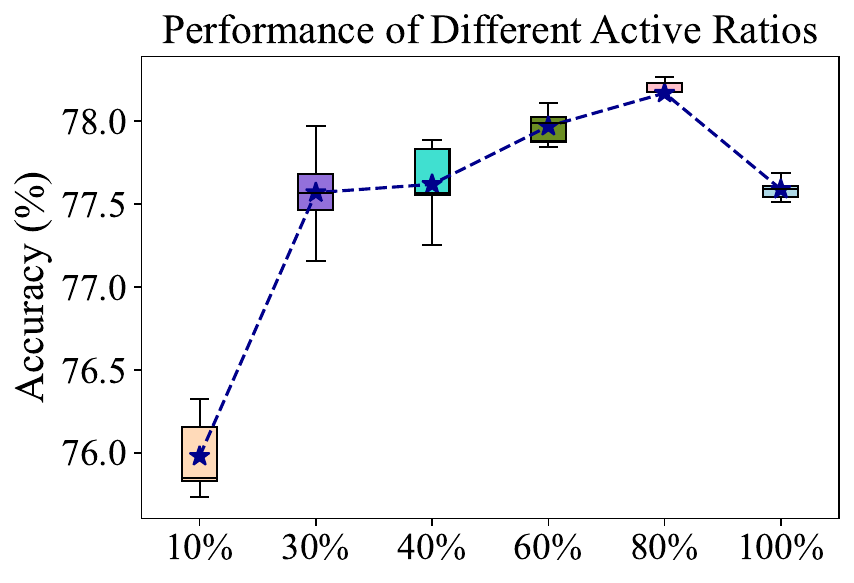}
        \label{cfl-c10-100-E5-active}
    }\!\!\!\!\!
    \subfloat[m\ =\ 200, E\ =\ 5]{
        \includegraphics[width=0.32\textwidth]{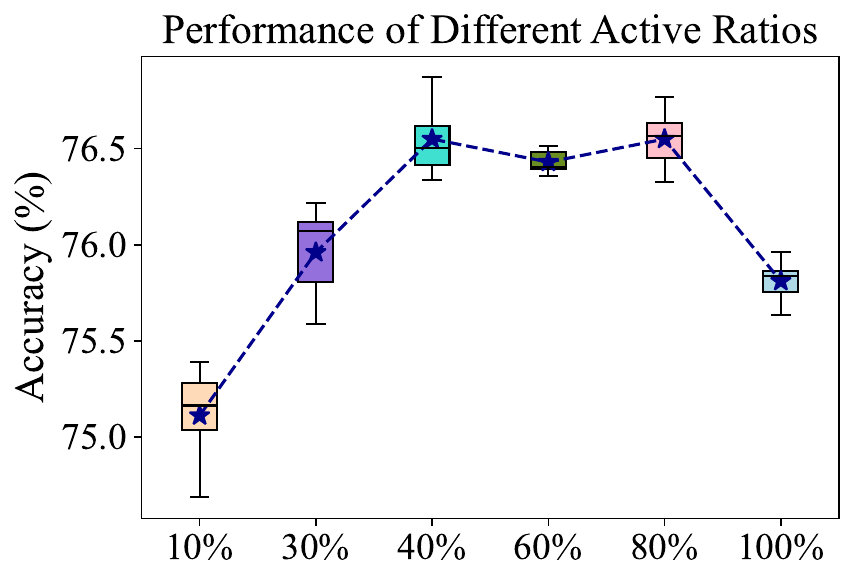}
        \label{cfl-c10-200-E5-active}
    }\!\!\!\!\!
    \subfloat[m\ =\ 500, E\ =\ 5]{
        \includegraphics[width=0.32\textwidth]{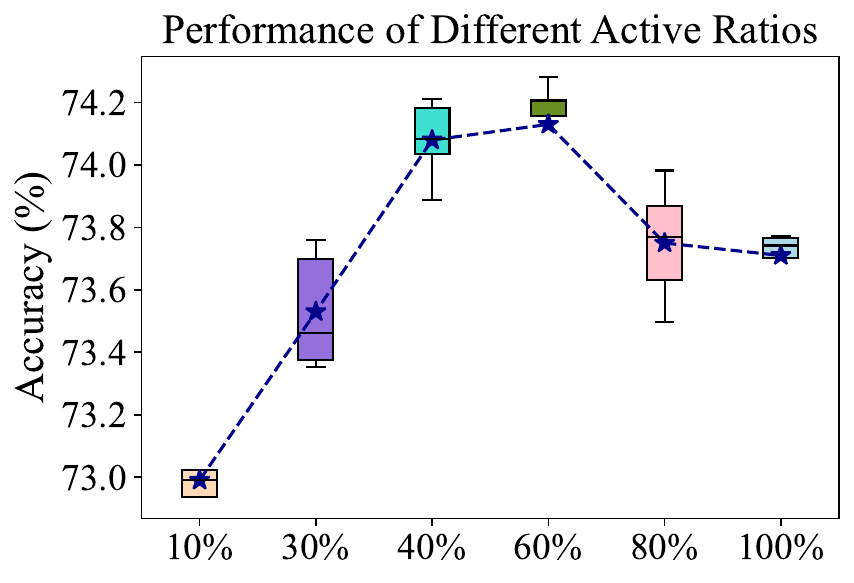}
        \label{cfl-c10-500-E5-active}
    }
    \quad
    %\vskip -9pt
    \subfloat[m\ =\ 100, E\ =\ 20]{
        \includegraphics[width=0.32\textwidth]{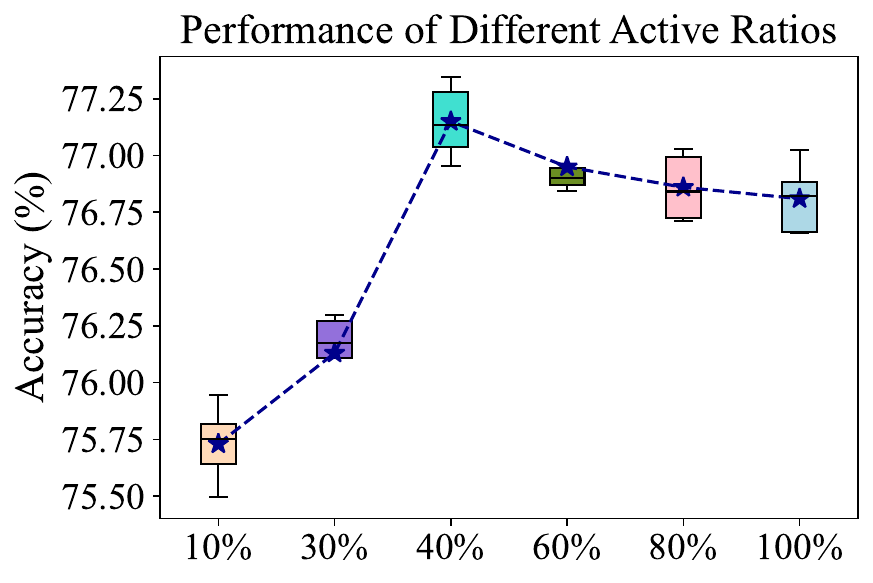}
        \label{cfl-c10-100-E20-active}
    }\!\!\!\!\!
    \subfloat[m\ =\ 200, E\ =\ 20]{
        \includegraphics[width=0.32\textwidth]{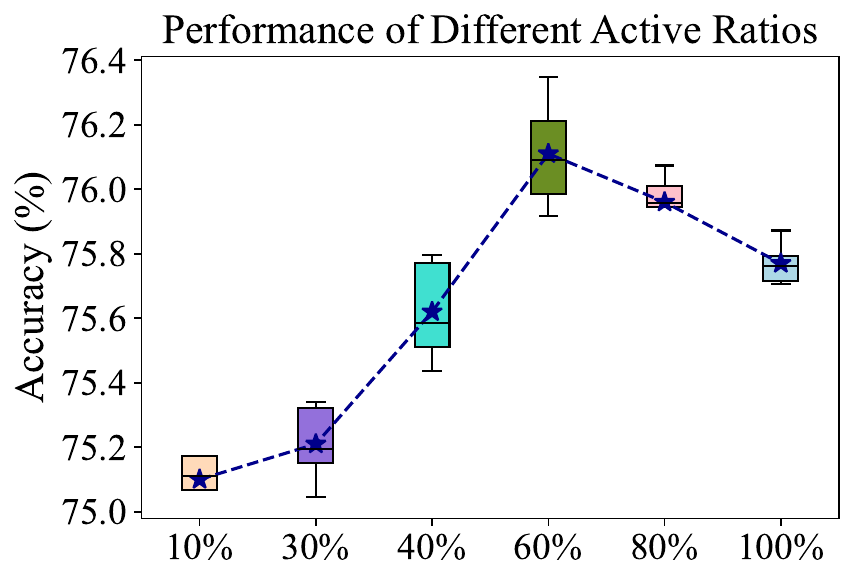}
        \label{cfl-c10-200-E20-active}
    }\!\!\!\!\!
    \subfloat[m\ =\ 500, E\ =\ 20]{
        \includegraphics[width=0.32\textwidth]{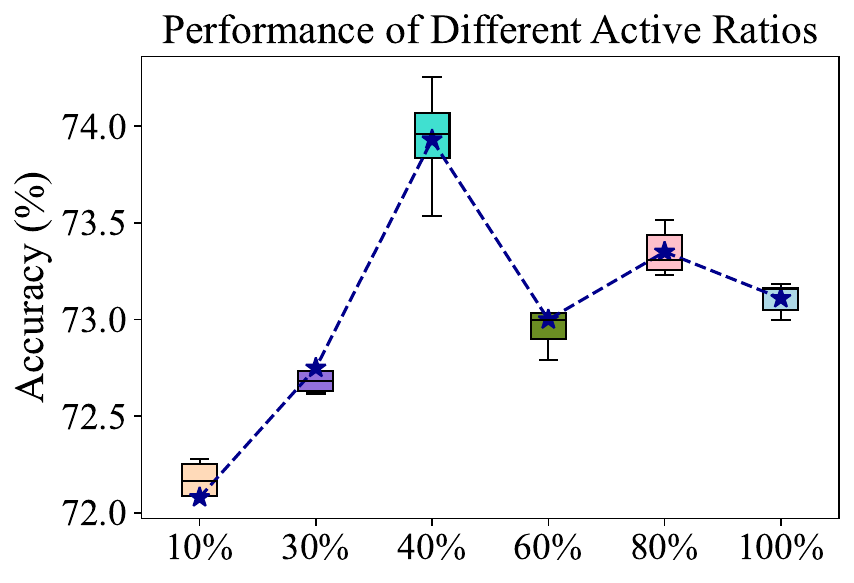}
        \label{cfl-c10-500-E20-active}
    }
    %\vskip -8pt
    \caption{We test different active ratios in CFL on the CIFAR-10 dataset with the ResNet-18 model. $m$ is the number of clients and $E$ is the number of local epochs. Each setup is repeated 5 times.}
    \label{fl_active_ratio}
    %\vskip -0.1cm
\end{figure*}

\textbf{Different Participation Ratios.} Our theoretical analysis in Theorem~\ref{thm1} indicates that active ratios in CFL balance the optimization and generalization errors. To achieve the best performance, it usually does not require all clients to participate in the training process per round. As shown in Figure~\ref{fl_active_ratio}, under the total $100$ clients and local epochs $E=5$, when it achieves the best performance the active ratio is approximately $30\%$. As the number of total clients increases to $500$, the best selection of the active ratio is approximately $40\%$. When this critical value is exceeded, continuing to increase the active ratio causes significant performance degradation. The optimal active ratio is roughly between $40\%$ and $80\%$. When all clients participating in the training as $n=m$, CFL could be considered as DFL on the {\ttfamily full}-connected topology. Therefore, we can intuitively see the poor generalization of decentralized in Figure~\ref{fl_active_ratio}, i.e., full participation is slightly worse than partial participation~\cite{charles2021large}. 

\quad

\textbf{Different Local Intervals.}
According to the Corollary~\ref{FL_excess_risk} and the corresponding discussions, we know the optimal number of the active ratio will decrease as the local interval $K$ increases. Figure~\ref{fl_active_ratio} also validates this in the experiments. When we fix the total clients $m$, we can see that the best performance corresponds to smaller active ratios when local epochs $E$ increase from $5$ to $20$. For instance, when $m=100$, the optimal selection of the active ratio approximately decreases from $80\%$ to $40\%$. And, when the local interval $K$ increases, we can see that test errors increase. As claimed by \cite{karimireddy2020scaffold}, FL suffers from the client-drift problem. When $K$ is large enough, local models will overfit the local optimums and get far away from the global optimum. Due to the space limitation, full curves of the loss and accuracy are stated in Appendix~\ref{ap:fl_local_interval}.

\begin{table*}[t]
\begin{center}
\renewcommand{\arraystretch}{1}
\caption{Comparison on the CFL and DFL. In the CFL setup, we assume that the global server can support $2\times$ the communication capabilities of local clients. {\ttfamily FedAvg-$n$} means the number of active clients equals $n$, which is determined by the corresponding topology in DFL. Each result is repeated $3$ times with different random seeds and smoothed on the last 100 iterations.}
\vspace{0.1cm}
\small
\begin{tabular}{@{}c|cccccc@{}}
\toprule
\multicolumn{1}{c}{} & \multicolumn{2}{c}{$m=100$} & \multicolumn{2}{c}{$m=200$} & \multicolumn{2}{c}{$m=500$} \\ 
\cmidrule(lr){2-3} \cmidrule(lr){4-5} \cmidrule(lr){6-7}
\multicolumn{1}{c}{} & \multicolumn{1}{c}{$E=5$} & \multicolumn{1}{c}{$E=20$} & \multicolumn{1}{c}{$E=5$} & \multicolumn{1}{c}{$E=20$} & \multicolumn{1}{c}{$E=5$} & \multicolumn{1}{c}{$E=20$}\\
\cmidrule(lr){1-1} \cmidrule(lr){2-7}
{\ttfamily D-FedAvg-ring}      & $\text{59.20}_{\pm.15}$ & $\text{46.65}_{\pm.17}$ & $\text{50.14}_{\pm.17}$ & $\text{50.70}_{\pm.12}$ & $\text{41.19}_{\pm.14}$ & $\text{45.20}_{\pm.11}$ \\
{\ttfamily FedAvg-3}       & $\text{67.71}_{\pm.48}$ & $\text{68.96}_{\pm.26}$ & $\text{67.06}_{\pm.33}$ & $\text{66.66}_{\pm.41}$ & $\text{63.33}_{\pm.24}$ & $\text{62.72}_{\pm.47}$ \\ 
{\ttfamily FedAvg-6}       & $\text{74.17}_{\pm.35}$ & $\text{73.63}_{\pm.39}$ & $\text{73.08}_{\pm.31}$ & $\text{72.00}_{\pm.27}$ & $\text{68.05}_{\pm.54}$ & $\text{67.50}_{\pm.34}$ \\ 
\cmidrule(lr){1-1} \cmidrule(lr){2-7}
{\ttfamily D-FedAvg-grid}      & $\text{73.27}_{\pm.19}$ & $\text{73.45}_{\pm.11}$ & $\text{67.70}_{\pm.09}$ & $\text{68.60}_{\pm.17}$ & $\text{58.20}_{\pm.39}$ & $\text{59.36}_{\pm.17}$ \\
{\ttfamily FedAvg-5}       & $\text{73.13}_{\pm.37}$ & $\text{72.97}_{\pm.33}$ & $\text{71.95}_{\pm.26}$ & $\text{70.84}_{\pm.35}$ & $\text{67.10}_{\pm.38}$ & $\text{66.27}_{\pm.55}$ \\ 
{\ttfamily FedAvg-10}       & $\text{75.48}_{\pm.44}$ & $\text{75.03}_{\pm.29}$ & $\text{73.97}_{\pm.18}$ & $\text{73.39}_{\pm.57}$ & $\text{71.06}_{\pm.27}$ & $\text{70.44}_{\pm.41}$ \\ 
\cmidrule(lr){1-1} \cmidrule(lr){2-7}
{\ttfamily D-FedAvg-exp}      & $\text{76.54}_{\pm.11}$ & $\textbf{76.12}_{\pm.08}$ & $\text{74.05}_{\pm.12}$ & $\text{74.43}_{\pm.15}$ & $\text{67.28}_{\pm.12}$ & $\text{68.11}_{\pm.13}$ \\
{\ttfamily FedAvg-$\log m$}      & $\text{75.26}_{\pm.42}$ & $\text{74.42}_{\pm.36}$ & $\text{74.11}_{\pm.29}$ & $\text{73.93}_{\pm.22}$ & $\text{70.04}_{\pm.28}$ & $\text{69.55}_{\pm.34}$ \\ 
{\ttfamily FedAvg-$2\log m$}      & $\textbf{77.19}_{\pm.17}$ & $\text{76.09}_{\pm.23}$ & $\textbf{75.57}_{\pm.26}$ & $\textbf{74.86}_{\pm.17}$ & $\textbf{71.61}_{\pm.34}$ & $\textbf{71.49}_{\pm.28}$ \\ 
\bottomrule
\end{tabular}
%}
\label{acc}
\end{center}
%\vspace{-0.4cm}
\end{table*}

\subsection{Decentralized Federated Learning}
\label{experiment2}

\textbf{CFL v.s. DFL.} Table~\ref{acc} shows the test accuracy between DFL with different topologies and corresponding CFL. To achieve fair comparisons, we copy the DFL's optimal hyperparameters to the CFL setup and select a proper partial participation ratio in CFL which follows that the communication bandwidth of the global server is $1\times$ or $2\times$ of the local clients. For instance, ring-topology is equivalent to $3$ local clients jointly train one model. Therefore, we test the number of active clients in CFL equals $3$~(1$\times$) and $6$~(2$\times$) respectively. In fact, the global server is much more than twice the communication capacity of local devices in practice. From the results, we can clearly see CFL always generalizes better than the DFL on $2\times$ bandwidth. Even if the global server has the same bandwidth as local clients, DFL is only slightly better than CFL on the $m=100$ setup. With the increase of $m$, DFL's performance degradation is very severe. This is also in line with our conclusions in Table~\ref{Comparison_FL_DFL}, which demonstrates the poor generalization and excess risk in DFL. Actually, it doesn't save as much bandwidth as one might think in practice, especially with high heterogeneity. When $m$ is large enough, even though CFL and DFL maintain similar communication costs, the generalization performance of CFL is much higher. This gives us some insights into the choice of modes. When the client scale is small, the advantages of decentralization are very prominent. It can easily achieve the same accuracy as centralized training while requiring fewer communication bits. In the large scale training, decentralized performance is significantly affected, ultimately leading to larger test errors.

\begin{table}[t]
  \caption{Performance collapse in DFL if the topology does not satisfy the minimal condition. We fix $S=100$~(upper) / $S=1000$~(lower) and increase the $m$ to test performance trends~(arrow) on common topologies.}
  \label{DFL_m}
  \centering
  %\vspace{0.1cm}
  \small
  \begin{tabular}{cccccccc}
    \toprule
    % \multicolumn{2}{c}{Part}                   \\
    % \cmidrule(r){1-2}
    total $m$     &   300   &  350 &  400 &  450 & 500 & Trends \\
    \midrule
    {\ttfamily full}  & 67.86 & 69.90 & 70.94 & 72.10 & \textbf{73.12} & $\uparrow$ \\
    {\ttfamily exp}   & 64.64 & 65.85 & 66.23 & 67.04 & \textbf{67.28} & $\uparrow$ \\
    \midrule
    {\ttfamily grid}  & 57.28 & 58.14 & \textbf{59.11} & 58.31 & 58.20 & $\downarrow$ \\
    {\ttfamily ring}  & 43.68 & \textbf{44.54} & 43.56 & 42.01 & 41.19 & $\downarrow$ \\
    \midrule
    \midrule
    total $m$     &  30 &  35 &  40 & 45 & 50 \\
    \midrule
    {\ttfamily full}  & 71.36 & 72.83 & 73.72 & 74.90 & \textbf{75.69} & $\uparrow$ \\
    {\ttfamily exp}   & 71.10 & 72.17 & 72.56 & 73.98 & \textbf{75.13} & $\uparrow$ \\
    \midrule
    {\ttfamily grid}  & 66.82 & \textbf{68.17} & 67.85 & 66.94 & 66.32 & $\downarrow$ \\
    {\ttfamily ring}  & \textbf{63.50} & 63.34 & 63.11 & 61.08 & 58.30 & $\downarrow$ \\
    \bottomrule
  \end{tabular}
\end{table}

\quad 

\textbf{Performance Collapse.} Another important point of our analysis is the potential performance collapse in DFL. In Table~\ref{acc}, because the total amount of data remains unchanged, the number of local data samples $S$ will decrease as $m$ increases. To eliminate this impact, we fix the local amount of data $S=100 / 1000$ for each client in the horizontal comparison to validate the minimal condition required to avoid performance collapse. $m\times S$ means the total data samples. Under a fixed $S$, increasing $m$ also means enlarging the dataset. As shown in Table~\ref{DFL_m}, we can clearly see that on the {\ttfamily full} and {\ttfamily exp} topologies, increasing clients can effectively increase the test accuracy. However, on the {\ttfamily grid} and {\ttfamily ring} topologies, since they do not satisfy the minimal condition of $\kappa_\lambda$ as shown in Table~\ref{DFL_kappa}, even increasing $m$~(enlarging dataset) will cause significant performance degradation. It can also be seen that increasing $S$ is more efficient than increasing $m$ under the total sample size of data $Sm$, which is also consistent with our theoretical analysis in terms of $S$ and $m$.
\section{Conclusion}
\label{conclusion}

In this paper, we provide the analysis of the uniform stability and excess risk between CFL and DFL without the idealized assumption of bounded gradients. Our analysis provides a comprehensive and novel understanding of the comparison between CFL and DFL. From the generalization perspective, CFL is always better than DFL. Furthermore, we prove that to achieve minimal excess risk and test error, CFL only requires partial local clients to participate in the training per round. Moreover, though decentralized approaches are adopted as the compromise of centralized ones which could significantly reduce the communication rounds theoretically, the topology must satisfy the minimal requirement to avoid performance collapse. In summary, our analysis clearly answers the \textit{question} in Sec.Introduction and points out how to choose the suitable training mode in real-world scenarios.

\section{Proofs of key lemmas}
In this part, we mainly introduce the proofs of the some important lemmas which are adopted in this work. Both centralized and decentralized FL setups minimize the finite-sum problem. Therefore, we denote $\Delta_k^t=\sum_{i\in\left[m\right]}\Vert w_{i,k}^t - \widetilde{w}_{i,k}^t \Vert$ as the average difference. Here we define an event $\xi$. If $\Delta_{k_0}^{t_0}=0$ happens at $k_0$-th iteration of $t_0$-th round, $\xi=1$ otherwise $0$. Because the index of the data samples on the joint dataset $\mathcal{C}$ and $\widetilde{\mathcal{C}}$ are selected simultaneously, it describes whether the different samples in the two datasets have been selected before $k_0$-th iteration of $t_0$-th round. We denote $\tau=tK+k$ as the index of training iterations. We denote $z_{i^\star,j^\star}$ and $\widetilde{z}_{i^\star,j^\star}$ are the only different data samples between the dataset $\mathcal{C}$ and $\widetilde{\mathcal{C}}$. The following Table~\ref{ap:notation} summarizes the details.
\begin{table}[H]
  \caption{Additional notations adopted in the proofs.}
  \label{ap:notation}
  \centering
  \begin{tabular}{ccc}
    \toprule
    % \multicolumn{2}{c}{Part}                   \\
    % \cmidrule(r){1-2}
    Notation     &   Formulation   & Description \\
    \midrule
    $\tau$ & $tK+k$ & iterations\\
    $\tau_0$ & $t_0K+k_0$ & observed iteration\\
    $\left(i^\star, j^\star\right)$ & - & different data sample \\
    $\Delta_k^t$   & $\sum_{i\in\left[m\right]}\Vert w_{i,k}^t - \widetilde{w}_{i,k}^t \Vert$ & local stability\\
    \bottomrule
  \end{tabular}
\end{table}

\begin{lemma}[Stability in {\ttfamily FedAvg}]
\label{lemma:stability_bound}
    Let function $f(w)$ satisfies Assumption~\ref{assumption:lipschitz}, the models $w^T=\mathcal{A}(\mathcal{C})$ and $\widetilde{w}^T=\mathcal{A}(\widetilde{\mathcal{C}})$ are generated after $T$ training rounds by the centralized {\ttfamily FedAvg} method~(Algorithm~\ref{algorithm:fedavg}), we can bound their objective difference:
    \begin{equation}
        \mathbb{E}\left[\vert f(w^T;z) - f(\widetilde{w}^T;z)\vert\right] \leq G\mathbb{E}\left[\Vert w^T - \widetilde{w}^T\Vert \ \vert \ \xi \right] + \frac{nU\tau_0}{mS}.
    \end{equation}
    where $U=\sup_{w,z}f(w;z)<+\infty$ is the upper bound of the loss and $\tau_0=t_0K+k_0$ is a specific index of the iterations. 
\end{lemma}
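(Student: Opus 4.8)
The plan is to run the coupling-and-conditioning argument of Hardt et al.\ adapted to the two-level sampling of Algorithm~\ref{algorithm:fedavg}. I would first fix the unique differing sample $z_{i^\star,j^\star}$ and define the event $E$ that this sample is never drawn in any SGD step before iteration $\tau_0 = t_0K+k_0$ (across all clients and rounds). On $E$, the two coupled runs of {\ttfamily FedAvg} on $\mathcal{C}$ and $\widetilde{\mathcal{C}}$ receive identical inputs through iteration $\tau_0$ (same activated subsets $\mathcal{N}^t$, same drawn samples), hence $w_{i,k}^t=\widetilde{w}_{i,k}^t$ for every client $i$ and every iteration up to $\tau_0$; in particular $\Delta_{k_0}^{t_0}=0$, so $E\subseteq\{\xi=1\}$ as events, which is the only structural fact about $\xi$ I need.

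Next I would estimate $\Pr[\neg E]$. In each round a subset of size $n$ is chosen uniformly from $[m]$, so client $i^\star$ is active with probability $n/m$; conditioned on being active, each of its $K$ local SGD steps draws $z_{i^\star,j^\star}$ with probability $1/S$. A union bound over the $\tau_0$ iterations (or, more carefully, over rounds with a within-round union bound over the $K$, resp.\ $k_0$, steps) then gives $\Pr[\neg E]\le \frac{n}{m}\cdot\frac{\tau_0}{S}=\frac{n\tau_0}{mS}$.

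Then I would split $\mathbb{E}\left[\vert f(w^T;z)-f(\widetilde{w}^T;z)\vert\right]=\mathbb{E}\left[\vert f(w^T;z)-f(\widetilde{w}^T;z)\vert\,\mathbf{1}_E\right]+\mathbb{E}\left[\vert f(w^T;z)-f(\widetilde{w}^T;z)\vert\,\mathbf{1}_{\neg E}\right]$. For the first term I would invoke Assumption~\ref{assumption:lipschitz} (read per-sample, as the Lipschitz-type control of $f(\cdot;z)$ at the returned parameters, consistently with how it is used in Theorem~\ref{thm1}) to get $\vert f(w^T;z)-f(\widetilde{w}^T;z)\vert\le G\Vert w^T-\widetilde{w}^T\Vert$, then use $\mathbf{1}_E\le\mathbf{1}_{\{\xi=1\}}$ and $\Pr[\xi=1]\le 1$ to bound it by $G\,\mathbb{E}\left[\Vert w^T-\widetilde{w}^T\Vert\mid\xi\right]$. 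For the second term I would bound $\vert f(w^T;z)-f(\widetilde{w}^T;z)\vert\le U$ pointwise (using $0\le f\le U$ with $U=\sup_{w,z}f(w;z)<+\infty$) and multiply by $\Pr[\neg E]\le\frac{n\tau_0}{mS}$. Summing the two pieces gives the stated bound.

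The main obstacle is not any single estimate but the probabilistic bookkeeping: correctly composing the two independent randomness sources (the per-round uniform $n$-subset of clients and the per-step uniform sample index), making the union bound tight enough to reach exactly $\frac{n\tau_0}{mS}$ rather than a looser constant, and checking that $\mathbf{1}_E\le\mathbf{1}_{\{\xi=1\}}$ points in the direction that survives dropping the factor $\Pr[\xi=1]$. A secondary point to settle cleanly is the use of Assumption~\ref{assumption:lipschitz} with a fixed test point $z$ inside the loss, which I would address by stating its per-sample reading once at the start of the proof.
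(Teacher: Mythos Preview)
Your proposal is correct and follows essentially the same route as the paper: condition on whether the differing sample has been drawn by iteration $\tau_0$, use Assumption~\ref{assumption:lipschitz} on the ``good'' event (with $P(\xi)\le 1$ absorbed), bound the loss by $U$ on the complement, and compute $\Pr[\neg E]\le \frac{n}{m}\cdot\frac{\tau_0}{S}$ via the two-level (client-selection then sample-selection) union bound. The paper's version splits directly on $\xi$ versus $\xi^c$ and then bounds $P(\xi^c)\le P(I\le\tau_0)$, which is logically the same as your $E\subseteq\{\xi=1\}$.
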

\begin{proof}
    Via the expansion of the probability we have:
\begin{align*}
    &\quad \ \ \mathbb{E}\left[\vert f(w^T;z) - f(\widetilde{w}^T;z)\vert\right]\\
    &= P(\xi)\mathbb{E}\left[\vert f(w^T;z) - f(\widetilde{w}^T;z)\vert \ \vert \ \xi \right] \\
    &\quad + P(\xi^c)\mathbb{E}\left[\vert f(w^T;z) - f(\widetilde{w}^T;z)\vert \ \vert \ \xi^c \right]\\
    &\leq G\mathbb{E}\left[\Vert w^T - \widetilde{w}^T\Vert \ \vert \ \xi \right] + UP(\xi^c).
\end{align*}
Let the variable $I$ assume the index of the first time to use the data sample $\widetilde{z}_{i^\star,j^\star}$ on the dataset $\widetilde{\mathcal{S}}_{i^\star}$. When $I>t_0K+k_0$, then $\Delta_{k_0}^{t_0}=0$ must happens. Thus we have:
\begin{equation}
    P(\xi^c)=P(\Delta_{k_0}^{t_0}>0)\leq P(I\leq t_0K+k_0).
\end{equation}
% For each communication round $t$, we know the dataset $\mathcal{S}_{i^\star}$ is selected with the probability of $N/C$. If the $\mathcal{S}_{i^\star}$ is not selected, $\delta_{k_0}^{t_0}=0$ must happens.
On each step $\tau$, the data sample is uniformly sampled from the local dataset. When the dataset $\mathcal{S}_{i^\star}$ is selected, the probability of sampling $z_{i^\star,j^\star}$ is $1/S$. Let $\chi$ denote the event that $\mathcal{S}_{i^\star}$ is selected or not. Thus we have the union bound:
\begin{align*}
    &\quad \ P(I\leq t_0K+k_0)\\ 
    &\leq \sum_{t=0}^{t_0-1}\sum_{k=0}^{K-1} P(I = tK+k;\chi) + \sum_{k=0}^{k_0}P(I = t_0K+k;\chi)\\
    &= \sum_{t=0}^{t_0-1}\sum_{k=0}^{K-1}\sum_{\chi} P(I = tK+k\vert\chi)P(\chi)\\
    &\quad + \sum_{k=0}^{k_0}\sum_{\chi} P(I = t_0K+k\vert\chi)P(\chi)\\
    &= \frac{n}{m}\left(\sum_{t=0}^{t_0-1}\sum_{k=0}^{K-1} P(I = tK+k) + \sum_{k=0}^{k_0}P(I = t_0K+k)\right)\\
    &= \frac{n(t_0K+k_0)}{mS} = \frac{n\tau_0}{mS}.
\end{align*}
The second equality adopts the fact of random active clients with the probability of $n/m$.
\end{proof}

\begin{lemma}[Stability in {\ttfamily D-FedAvg}]
\label{lemma:dfl_stability_bound}
    Let function $f(w)$ satisfies Assumption~\ref{assumption:lipschitz}, the models $w^T=\mathcal{A}(\mathcal{C})$ and $\widetilde{w}^T=\mathcal{A}(\widetilde{\mathcal{C}})$ are generated after $T$ training rounds by the centralized {\ttfamily D-FedAvg} method~(Algorithm~\ref{algorithm:D-FedAvg}), we can bound their objective difference as:
    \begin{equation}
        \mathbb{E}\left[\vert f(w^T;z) - f(\widetilde{w}^T;z)\vert\right] \leq G\mathbb{E}\left[\Vert w^T - \widetilde{w}^T\Vert \ \vert \ \xi \right] + \frac{U\tau_0}{S}.
    \end{equation}
\end{lemma}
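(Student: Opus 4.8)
The plan is to mirror the proof of Lemma~\ref{lemma:stability_bound} for the decentralized setting, the only genuine difference being the probability with which the distinguished data sample $\widetilde{z}_{i^\star,j^\star}$ can first be touched on any given iteration. First I would expand the expectation over the event $\xi$ exactly as before: write $\mathbb{E}[|f(w^T;z)-f(\widetilde{w}^T;z)|] = P(\xi)\,\mathbb{E}[|f(w^T;z)-f(\widetilde{w}^T;z)|\mid\xi] + P(\xi^c)\,\mathbb{E}[|f(w^T;z)-f(\widetilde{w}^T;z)|\mid\xi^c]$, then bound the first conditional expectation using Assumption~\ref{assumption:lipschitz} by $G\,\mathbb{E}[\Vert w^T-\widetilde{w}^T\Vert\mid\xi]$ and the second by $U=\sup_{w,z}f(w;z)$, which is finite by the bounded-loss hypothesis. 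This reduces everything to controlling $P(\xi^c)$.

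Next I would introduce the random variable $I$ = the index of the first iteration at which $\widetilde{z}_{i^\star,j^\star}$ is used, and observe that $\xi^c=\{\Delta_{k_0}^{t_0}>0\}$ forces $I\le t_0K+k_0=\tau_0$, so $P(\xi^c)\le P(I\le\tau_0)$. Then I would bound $P(I\le\tau_0)$ by a union bound over all iterations $\tau=tK+k$ with $\tau\le\tau_0$. The key observation distinguishing this lemma from the centralized one is that in {\ttfamily D-FedAvg} (Algorithm~\ref{algorithm:D-FedAvg}) there is no random subsampling of clients for the local SGD step — every client $i\in[m]$ performs the SGD-Opt process every round — so client $i^\star$ is always active. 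Hence on each iteration $\tau$ the probability of drawing $\widetilde{z}_{i^\star,j^\star}$ is simply $1/S$, with no $n/m$ factor. Summing over the (at most) $\tau_0$ relevant iterations gives $P(I\le\tau_0)\le\tau_0/S$, and therefore $P(\xi^c)\le\tau_0/S$. Substituting back yields $\mathbb{E}[|f(w^T;z)-f(\widetilde{w}^T;z)|]\le G\,\mathbb{E}[\Vert w^T-\widetilde{w}^T\Vert\mid\xi]+U\tau_0/S$, which is exactly the claim.

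The only subtlety worth double-checking — and what I expect to be the main point requiring care rather than a hard obstacle — is the role of the aggregation line $w_{i,0}^t\leftarrow\sum_{j\in\mathcal{A}_i}a_{ij}w_{i,K}^{t-1}$ and the final averaging step $w^{t+1}\leftarrow\frac{1}{n}\sum_{i\in\mathcal{N}}w_{i,K}^t$ in Algorithm~\ref{algorithm:D-FedAvg}. These mixing operations are deterministic given the models and do not themselves introduce any fresh use of the distinguished sample, so they do not affect the event $\{I\le\tau_0\}$; the stochasticity relevant to $P(\xi^c)$ comes entirely from the per-iteration SGD minibatch draws, just as in the centralized case. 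One should also confirm that the coupling convention — the same minibatch indices are drawn on $\mathcal{C}$ and $\widetilde{\mathcal{C}}$ at every iteration, as stated in the setup preceding Table~\ref{ap:notation} — is what makes $\Delta_{k_0}^{t_0}=0$ equivalent to the distinguished index not having been selected, so that the implication $\xi^c\Rightarrow\{I\le\tau_0\}$ is valid. Given these checks, the proof is a direct and short adaptation: the decentralized bound is in fact \emph{larger} than the centralized one by precisely the factor $m/n$ that was saved by random client selection in {\ttfamily FedAvg}, reflecting that in DFL the perturbed client participates every round.
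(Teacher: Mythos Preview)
Your proposal is correct and follows exactly the paper's approach: the paper's own proof simply states that the argument is identical to that of Lemma~\ref{lemma:stability_bound} except that $P(\chi)=1$ because in the decentralized setting every client (in particular client $i^\star$) participates in every round. Your expanded version spells out precisely this, and your additional remarks about the aggregation/mixing steps and the coupling convention are accurate sanity checks that do not deviate from the paper's reasoning.
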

\begin{proof}
    The most part is the same as the proof in Lemma~\ref{lemma:stability_bound} except the probability $P(\chi)=1$ in a decentralized federated learning setup~(because all clients participate in). 
\end{proof}

\begin{lemma}[Topology-aware coefficient $\kappa_\lambda$]
\label{lambda}
    For $0 < \lambda < 1$ and $0 < \alpha < 1$, we have the following inequality:
    \begin{equation}
        \sum_{s=0}^{t-1}\frac{\lambda^{t-s-1}}{\left(s+1\right)^\alpha}\leq \frac{\kappa_\lambda}{t^\alpha}, 
    \end{equation}
    where $\kappa_\lambda=\left(\frac{\alpha}{e}\right)^\alpha\frac{1}{\lambda\left(\ln\frac{1}{\lambda}\right)^\alpha} + \frac{2^\alpha}{\left(1-\alpha\right)e\lambda\ln\frac{1}{\lambda}} + \frac{2^\alpha}{\lambda\ln\frac{1}{\lambda}}$.
\end{lemma}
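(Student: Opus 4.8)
\textbf{Proof proposal for Lemma~\ref{lambda}.}

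The plan is to split the sum $\sum_{s=0}^{t-1}\lambda^{t-s-1}(s+1)^{-\alpha}$ at a suitable cutoff, bounding the ``early'' terms (where $s+1$ is small relative to $t$) by the geometric decay $\lambda^{t-s-1}$ and the ``late'' terms (where $s+1 \geq t/2$, say) by replacing $(s+1)^{-\alpha}$ with $(t/2)^{-\alpha}=2^\alpha t^{-\alpha}$ and summing the geometric tail. More precisely, I would write $\sum_{s=0}^{t-1}\lambda^{t-s-1}(s+1)^{-\alpha} = \sum_{s : s+1 < t/2}\lambda^{t-s-1}(s+1)^{-\alpha} + \sum_{s : s+1\geq t/2}\lambda^{t-s-1}(s+1)^{-\alpha}$, and treat the two pieces separately. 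For the second sum, since $(s+1)^{-\alpha}\leq (t/2)^{-\alpha}=2^\alpha t^{-\alpha}$ on that range, it is at most $2^\alpha t^{-\alpha}\sum_{j\geq 0}\lambda^{j}\cdot\frac{1}{\lambda}$ (after re-indexing $j=t-s-1$ and noting the smallest exponent is $\geq -1$ only in the degenerate boundary case, so more carefully $\sum_{s=0}^{t-1}\lambda^{t-s-1}=\sum_{j=0}^{t-1}\lambda^j\leq \frac{1}{1-\lambda}$), giving a term of order $2^\alpha t^{-\alpha}/(1-\lambda)$. To match the stated constant with $\ln\frac{1}{\lambda}$ in the denominator rather than $1-\lambda$, I would instead bound $\frac{1}{1-\lambda}\leq\frac{1}{\lambda\ln\frac1\lambda}$ using the elementary inequality $1-\lambda\geq\lambda\ln\frac1\lambda$ for $0<\lambda<1$ (equivalently $\ln x \leq x-1$ at $x=1/\lambda$ times $\lambda$), which produces the $\frac{2^\alpha}{\lambda\ln\frac1\lambda}$ term.

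For the first sum, over $s+1<t/2$, I would bound $\lambda^{t-s-1}\leq \lambda^{t/2}$ crudely only if that sufficed, but to get a clean $t^{-\alpha}$ factor I'd instead keep the sum as $\sum_{s\geq 0}\lambda^{t-s-1}(s+1)^{-\alpha}$ and compare it to an integral $\int_0^{\infty}\lambda^{t-x-1}(x+1)^{-\alpha}\,dx$, or more directly use the pointwise estimate: for each term, $\lambda^{t-s-1}(s+1)^{-\alpha}\leq \max_{u\geq 1}\lambda^{t/2 - (u-1)}\cdot\ldots$ — actually the cleanest route is to observe that on $1\leq s+1<t/2$ we have $t-s-1 > t/2 - 1$, so $\lambda^{t-s-1}=\lambda^{(t-s-1)-\alpha\log_\lambda(\cdot)}$; I would bound each early term by $\sup_{r\geq 1}\lambda^{t/2}\cdot\lambda^{-r}r^{-\alpha}$ type expression. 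The honest approach: factor $\lambda^{t-s-1}(s+1)^{-\alpha} \leq t^{-\alpha}\cdot (s+1)^{-\alpha}t^{\alpha}\lambda^{t-s-1}$ and maximize $g(s)=((s+1)/t)^{-\alpha}\lambda^{t-s-1}$ over $0\leq s\leq t/2$; calculus gives the max near where $\frac{d}{ds}[-\alpha\ln(s+1) + (t-s-1)\ln\lambda]=0$, i.e. $s+1 = \alpha/\ln\frac1\lambda$, yielding a bound of shape $(\alpha/e)^\alpha \cdot \frac{1}{(\ln\frac1\lambda)^\alpha}\cdot\frac{1}{\lambda}\cdot t^{-\alpha}\cdot(\text{number of terms or a geometric factor})$. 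Summing the remaining geometric-type factor over the early indices contributes the $\frac{2^\alpha}{(1-\alpha)e\lambda\ln\frac1\lambda}$ term after again converting $\frac{1}{1-\lambda}$ to $\frac{1}{\lambda\ln\frac1\lambda}$ and absorbing a $\frac{1}{(1-\alpha)e}$ from a $\int x^{-\alpha}$-type tail.

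The main obstacle I anticipate is bookkeeping the three constants so that they assemble exactly into the claimed $\kappa_\lambda=\left(\frac{\alpha}{e}\right)^\alpha\frac{1}{\lambda(\ln\frac1\lambda)^\alpha} + \frac{2^\alpha}{(1-\alpha)e\lambda\ln\frac1\lambda} + \frac{2^\alpha}{\lambda\ln\frac1\lambda}$ — in particular, getting the $(\alpha/e)^\alpha$ factor requires carefully evaluating the maximum of $x^{-\alpha}e^{-cx}$ (which is $(\alpha/(ec))^\alpha$ at $x=\alpha/c$ with $c=\ln\frac1\lambda$), and the $\frac{1}{1-\alpha}$ and $e$ factors come from an integral comparison $\sum \leq \int$ over the tail plus one boundary term. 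The geometric-series-to-$\ln\frac1\lambda$ conversion via $1-\lambda\geq\lambda\ln\frac1\lambda$ must be applied consistently in each piece. None of the individual estimates is deep; the difficulty is purely in choosing the split point ($t/2$ versus $\alpha/\ln\frac1\lambda$) and the comparison functions so the constants land precisely on the stated closed form rather than merely being of the same order.
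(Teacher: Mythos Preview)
Your split at $t/2$ and your treatment of the late range $s+1\geq t/2$ are essentially what the paper does; the only cosmetic difference is that the paper compares the sum to an integral $\int_1^t\lambda^{t-s-1}s^{-\alpha}\,ds$ and integrates $\lambda^{t-s-1}$ directly on $[t/2,t]$, so the $\frac{1}{\lambda\ln\frac{1}{\lambda}}$ appears without your $\frac{1}{1-\lambda}\leq\frac{1}{\lambda\ln\frac{1}{\lambda}}$ conversion. Either route yields the third term $\frac{2^\alpha}{\lambda\ln\frac{1}{\lambda}}$.

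The gap is in the early range. Maximizing the single summand $(s+1)^{-\alpha}\lambda^{t-s-1}$ over $s$ bounds one term, not the whole sum; multiplying by ``number of terms'' is too crude (order $t$ kills the $t^{-\alpha}$ rate), and your ``geometric factor'' fix is not spelled out. More importantly, the $(\alpha/e)^\alpha$ constant does \emph{not} come from an interior optimum in $s$. The paper's decomposition is: (i) peel off the $s=0$ boundary term $\lambda^{t-1}$; (ii) on the integral over $[1,t/2]$ use the uniform bound $\lambda^{t-s-1}\leq\lambda^{t/2-1}$ and integrate $s^{-\alpha}$ to get $\frac{(t/2)^{1-\alpha}}{1-\alpha}$; then (iii) after factoring out $t^{-\alpha}$, maximize the residual $t$-dependent coefficients $\lambda^{t-1}t^\alpha$ and $\lambda^{t/2-1}t$ \emph{over $t$}, not over $s$. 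These two one-variable maximizations (of the form $\sup_{t>0}e^{-ct}t^\beta=(\beta/(ec))^\beta$) are what produce exactly $\frac{(\alpha/e)^\alpha}{\lambda(\ln\frac{1}{\lambda})^\alpha}$ and $\frac{2}{e\lambda\ln\frac{1}{\lambda}}$, respectively. Once you isolate the $s=0$ term and switch the optimization variable from $s$ to $t$, the three constants fall out with no further bookkeeping.
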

\begin{proof}
    According to the accumulation, we have:
\begin{align*}
    &\quad \ \sum_{s=0}^{t-1}\frac{\lambda^{t-s-1}}{\left(s+1\right)^\alpha}\\
    &= \lambda^{t-1} + \sum_{s=1}^{t-1}\frac{\lambda^{t-s-1}}{\left(s+1\right)^\alpha}\leq \lambda^{t-1} + \int_{s=1}^{s=t}\frac{\lambda^{t-s-1}}{s^\alpha}ds\\
    &= \lambda^{t-1} + \int_{s=1}^{s=\frac{t}{2}}\frac{\lambda^{t-s-1}}{s^\alpha}ds + \int_{s=\frac{t}{2}}^{s=t}\frac{\lambda^{t-s-1}}{s^\alpha}ds\\
    &\leq \lambda^{t-1} + \lambda^{\frac{t}{2}-1}\int_{s=1}^{s=\frac{t}{2}}\frac{1}{s^\alpha}ds + \left(\frac{2}{t}\right)^\alpha\int_{s=\frac{t}{2}}^{s=t}\lambda^{t-s-1}ds\\
    &\leq \lambda^{t-1} + \lambda^{\frac{t}{2}-1}\frac{1}{1-\alpha}\left(\frac{t}{2}\right)^{1-\alpha} + \left(\frac{2}{t}\right)^\alpha\frac{\lambda^{-1}}{\ln\frac{1}{\lambda}}.
\end{align*}
Thus we have $\text{LHS}\leq\frac{1}{t^\alpha}\left(\lambda^{t-1}t^\alpha+\lambda^{\frac{t}{2}-1}\frac{t}{\left(1-\alpha\right)2^{1-\alpha}} + \frac{2^\alpha}{\lambda\ln\frac{1}{\lambda}}\right)$. The first term can be bounded as $\lambda^{t-1}t^\alpha\leq\left(\frac{\alpha}{e}\right)^\alpha\frac{1}{\lambda\left(\ln\frac{1}{\lambda}\right)^\alpha}$ and the second term can be bounded as $\lambda^{\frac{t}{2}-1}t\leq\frac{2}{e\lambda\ln\frac{1}{\lambda}}$, which indicates the selection of the constant $\kappa_\lambda=\left(\frac{\alpha}{e}\right)^\alpha\frac{1}{\lambda\left(\ln\frac{1}{\lambda}\right)^\alpha} + \frac{2^\alpha}{\left(1-\alpha\right)e\lambda\ln\frac{1}{\lambda}} + \frac{2^\alpha}{\lambda\ln\frac{1}{\lambda}}$. Furthermore, if $0<\alpha\leq \frac{1}{2}< 1$, we have $\kappa_\lambda\leq\frac{1}{\lambda\left(\ln\frac{1}{\lambda}\right)^\alpha}+\frac{2\sqrt{2}}{e\lambda\ln\frac{1}{\lambda}}+\frac{\sqrt{2}}{\lambda\ln\frac{1}{\lambda}}\leq\max\left\{\frac{1}{\lambda},\frac{1}{\lambda\sqrt{\ln\frac{1}{\lambda}}}\right\}+\frac{\left(2+e\right)\sqrt{2}}{e\lambda\ln\frac{1}{\lambda}}=\mathcal{O}\left(\max\left\{\frac{1}{\lambda},\frac{1}{\lambda\sqrt{\ln\frac{1}{\lambda}}}\right\}+\frac{1}{\lambda\ln\frac{1}{\lambda}}\right)$ w.r.t. $\lambda$.
\end{proof}

\begin{lemma}[Same Sample]
\label{same_data_update}
    Let the function $f_i$ satisfies Assumption~\ref{assumption:smooth}, and the local updates be $w_{i,k+1}^t=w_{i,k}^t-\eta g_{i,k}^t$ and $\widetilde{w}_{i,k+1}^t=\widetilde{w}_{i,k}^t-\eta\widetilde{g}_{i,k}^t$, by sampling the same data $z$~(not the $z_{i^\star,j^\star}$), we have:
    \begin{equation}
        \mathbb{E}\Vert w_{i,k+1}^t - \widetilde{w}_{i,k+1}^t\Vert\leq(1+\eta L)\mathbb{E}\Vert w_{i,k}^t - \widetilde{w}_{i,k}^t\Vert.
    \end{equation}
\end{lemma}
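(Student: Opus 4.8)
The statement is the standard ``non-expansiveness'' step in uniform stability analysis, specialized to one SGD update in {\ttfamily D-FedAvg} with the same sampled data point. The plan is to start from the two update rules $w_{i,k+1}^t = w_{i,k}^t - \eta g_{i,k}^t$ and $\widetilde{w}_{i,k+1}^t = \widetilde{w}_{i,k}^t - \eta \widetilde{g}_{i,k}^t$, subtract them, and take norms. Since the same data sample $z$ (which is not the differing sample $z_{i^\star,j^\star}$) is used on both trajectories, we have $g_{i,k}^t = \nabla f_i(w_{i,k}^t; z)$ and $\widetilde{g}_{i,k}^t = \nabla f_i(\widetilde{w}_{i,k}^t; z)$ with the \emph{same} loss function $f_i(\cdot; z)$.

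\textbf{Key steps.} First I would write
\begin{equation*}
    w_{i,k+1}^t - \widetilde{w}_{i,k+1}^t = \left(w_{i,k}^t - \widetilde{w}_{i,k}^t\right) - \eta\left(\nabla f_i(w_{i,k}^t; z) - \nabla f_i(\widetilde{w}_{i,k}^t; z)\right).
\end{equation*}
Second, apply the triangle inequality to split the right-hand side into $\Vert w_{i,k}^t - \widetilde{w}_{i,k}^t\Vert + \eta \Vert \nabla f_i(w_{i,k}^t; z) - \nabla f_i(\widetilde{w}_{i,k}^t; z)\Vert$. Third, invoke the $L$-smoothness of $f_i$ from Assumption~\ref{assumption:smooth} to bound the gradient difference by $L \Vert w_{i,k}^t - \widetilde{w}_{i,k}^t\Vert$, which collapses the bound to $(1 + \eta L)\Vert w_{i,k}^t - \widetilde{w}_{i,k}^t\Vert$. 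Finally, take expectations on both sides; since the bound holds pointwise in the randomness of the shared sample, $\mathbb{E}$ passes through the inequality directly, yielding the claim.

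\textbf{Main obstacle.} Honestly, there is no real obstacle here — this is a routine one-line computation once the update rules are written out. The only subtle point worth being careful about is making sure the \emph{same} function $f_i(\cdot; z)$ appears in both stochastic gradients; this is exactly the hypothesis ``by sampling the same data $z$ (not the $z_{i^\star,j^\star}$)'', so the two gradients are evaluated at different points of the \emph{same} smooth function, which is precisely what lets us apply Assumption~\ref{assumption:smooth}. If instead the differing sample were drawn, one would get an additive term involving the gradient norm (handled in a separate ``different sample'' lemma), which is why that case is excluded here. One might also note that the contraction factor is $(1 + \eta L)$ rather than a genuine contraction $(1 - \eta L)$; this is fine for the non-convex analysis since the accumulated product $\prod (1 + \eta_\tau L)$ is controlled by the decaying step size $\eta_\tau = \mu/\tau$ together with the observed-iteration trick from Lemma~\ref{lemma:dfl_stability_bound}.
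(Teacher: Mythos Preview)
Your proposal is correct and matches the paper's proof essentially line for line: subtract the two updates, apply the triangle inequality, use $L$-smoothness of $f_i(\cdot;z)$ to bound the gradient difference, and take expectations. The paper carries the expectation from the first line rather than inserting it at the end, but this is a cosmetic difference only.
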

\begin{proof}
    In each round $t$, we have:
\begin{align*}
    &\quad \ \mathbb{E}\Vert w_{i,k+1}^t - \widetilde{w}_{i,k+1}^t\Vert\\
    &= \mathbb{E}\Vert w_{i,k}^t - \widetilde{w}_{i,k}^t - \eta(g_{i,k}^t - \widetilde{g}_{i,k}^t)\Vert\\
    &\leq \mathbb{E}\Vert w_{i,k}^t - \widetilde{w}_{i,k}^t\Vert + \eta\mathbb{E}\Vert \nabla f_{i}(w_{i,k}^t,z) - \nabla f_{i}(\widetilde{w}_{i,k}^t,z)\Vert\\
    &\leq (1+\eta L)\mathbb{E}\Vert w_{i,k}^t - \widetilde{w}_{i,k}^t\Vert.
\end{align*}
\end{proof}

\begin{lemma}[Different Sample]
\label{different_data_update}
    Let the function $f_i$ satisfies Assumption~\ref{assumption:smooth} and \ref{assumption:stochastic}, and the local updates be $w_{i^\star,k+1}^t=w_{i^\star,k}^t-\eta g_{i^\star,k}^t$ and $\widetilde{w}_{i^\star,k+1}^t=\widetilde{w}_{i^\star,k}^t-\eta\widetilde{g}_{i^\star,k}^t$, by sampling the different data samples $z_{i^\star,j^\star}$ and $\widetilde{z}_{i^\star,j^\star}$~(simplified to $z$ and $\widetilde{z}$), we have:
    \begin{equation}
        \mathbb{E}\Vert w_{i^\star,k+1}^t - \widetilde{w}_{i^\star,k+1}^t\Vert\leq (1+\eta L)\mathbb{E}\Vert w_{i^\star,k}^t - \widetilde{w}_{i^\star,k}^t\Vert + 2\eta\sigma_l.
    \end{equation}
\end{lemma}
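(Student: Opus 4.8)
\textbf{Proof plan for Lemma~\ref{different_data_update}.}
The plan is to mirror the argument of Lemma~\ref{same_data_update}, but now accounting for the fact that the two trajectories evaluate their gradients at \emph{different} data points $z$ and $\widetilde{z}$. First I would write out the one-step recursion explicitly:
\begin{equation*}
    \mathbb{E}\Vert w_{i^\star,k+1}^t - \widetilde{w}_{i^\star,k+1}^t\Vert
    = \mathbb{E}\Vert w_{i^\star,k}^t - \widetilde{w}_{i^\star,k}^t - \eta\left(g_{i^\star,k}^t - \widetilde{g}_{i^\star,k}^t\right)\Vert,
\end{equation*}
where $g_{i^\star,k}^t = \nabla f_{i^\star}(w_{i^\star,k}^t, z)$ and $\widetilde{g}_{i^\star,k}^t = \nabla f_{i^\star}(\widetilde{w}_{i^\star,k}^t, \widetilde{z})$. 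The key trick is to insert and subtract a common reference gradient so that the ``same-point'' part and the ``different-sample'' part decouple. A clean choice is to add and subtract $\nabla f_{i^\star}(\widetilde{w}_{i^\star,k}^t, z)$, i.e. the full gradient at the \emph{tilded} iterate but evaluated on the \emph{untilded} sample $z$.

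With that insertion, the triangle inequality gives three pieces: the displacement term $\mathbb{E}\Vert w_{i^\star,k}^t - \widetilde{w}_{i^\star,k}^t\Vert$, the term $\eta\,\mathbb{E}\Vert \nabla f_{i^\star}(w_{i^\star,k}^t, z) - \nabla f_{i^\star}(\widetilde{w}_{i^\star,k}^t, z)\Vert$, which by the $L$-smoothness of Assumption~\ref{assumption:smooth} is bounded by $\eta L\,\mathbb{E}\Vert w_{i^\star,k}^t - \widetilde{w}_{i^\star,k}^t\Vert$, and finally the ``sample-difference'' term $\eta\,\mathbb{E}\Vert \nabla f_{i^\star}(\widetilde{w}_{i^\star,k}^t, z) - \nabla f_{i^\star}(\widetilde{w}_{i^\star,k}^t, \widetilde{z})\Vert$. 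Collecting the first two yields the $(1+\eta L)\mathbb{E}\Vert w_{i^\star,k}^t - \widetilde{w}_{i^\star,k}^t\Vert$ factor, exactly as in Lemma~\ref{same_data_update}.

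It remains to bound the sample-difference term by $2\eta\sigma_l$. Here I would use Assumption~\ref{assumption:stochastic}: both $\nabla f_{i^\star}(\widetilde{w}_{i^\star,k}^t, z)$ and $\nabla f_{i^\star}(\widetilde{w}_{i^\star,k}^t, \widetilde{z})$ are stochastic gradients at the same point $\widetilde{w}_{i^\star,k}^t$ with the same full gradient $\nabla f_{i^\star}(\widetilde{w}_{i^\star,k}^t)$ as their common mean, so by the triangle inequality and Jensen ($\mathbb{E}\Vert X\Vert \le \sqrt{\mathbb{E}\Vert X\Vert^2}$) each deviates from $\nabla f_{i^\star}(\widetilde{w}_{i^\star,k}^t)$ by at most $\sigma_l$ in expected norm, giving the bound $2\eta\sigma_l$. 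The only mild subtlety — and the step I expect to require the most care — is handling the conditioning correctly: the variance bound applies to a fresh draw, so one should condition on $\widetilde{w}_{i^\star,k}^t$ (and $w_{i^\star,k}^t$), apply Assumption~\ref{assumption:stochastic} to the two independent sample draws, and then take the outer expectation; the decoupling via the reference gradient $\nabla f_{i^\star}(\widetilde{w}_{i^\star,k}^t, z)$ is precisely what makes the smoothness step and the variance step cleanly separable despite this conditioning. Everything else is routine.
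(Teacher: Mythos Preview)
Your proposal is correct and matches the paper's proof essentially line for line: the same add--subtract of $\nabla f_{i^\star}(\widetilde{w}_{i^\star,k}^t, z)$, the same use of $L$-smoothness for the ``same-sample'' piece, and the same split of the ``different-sample'' piece through the full gradient $\nabla f_{i^\star}(\widetilde{w}_{i^\star,k}^t)$ together with $\mathbb{E}\Vert X\Vert\le\sqrt{\mathbb{E}\Vert X\Vert^2}$ to invoke Assumption~\ref{assumption:stochastic}. Your remark about conditioning is a welcome bit of extra care, but no new idea is needed beyond what you wrote.
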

\begin{proof}
    In each round $t$, we have:
\begin{align*}
    &\quad \ \mathbb{E}\Vert w_{i^\star,k+1}^t - \widetilde{w}_{i^\star,k+1}^t\Vert\\
    &= \mathbb{E}\Vert w_{i^\star,k}^t - \widetilde{w}_{i^\star,k}^t - \eta(g_{i^\star,k}^t - \widetilde{g}_{i^\star,k}^t)\Vert\\
    &\leq \mathbb{E}\Vert w_{i^\star,k}^t - \widetilde{w}_{i^\star,k}^t\Vert + \eta\mathbb{E}\Vert \nabla f_{i^\star}(w_{i^\star,k}^t,z) - \nabla f_{i^\star}(\widetilde{w}_{i^\star,k}^t,\widetilde{z})\Vert\\
    &\leq \mathbb{E}\Vert w_{i^\star,k}^t - \widetilde{w}_{i^\star,k}^t\Vert + \eta\mathbb{E}\Vert \nabla f_{i^\star}(w_{i^\star,k}^t,z) - \nabla f_{i^\star}(\widetilde{w}_{i^\star,k}^t,z)\Vert\\
    &\quad + \eta\mathbb{E}\Vert\nabla f_{i^\star}(\widetilde{w}_{i^\star,k}^t,z) - \nabla f_{i^\star}(\widetilde{w}_{i^\star,k}^t,\widetilde{z})\Vert\\
    &\leq (1+\eta L)\mathbb{E}\Vert w_{i^\star,k}^t - \widetilde{w}_{i^\star,k}^t\Vert \\
    &\quad + \eta\mathbb{E}\Vert\nabla f_{i^\star}(\widetilde{w}_{i^\star,k}^t,z) - \nabla f_{i^\star}(\widetilde{w}_{i^\star,k}^t)\Vert\\
    &\quad + \eta\mathbb{E}\Vert\nabla f_{i^\star}(\widetilde{w}_{i^\star,k}^t,\widetilde{z}) - \nabla f_{i^\star}(\widetilde{w}_{i^\star,k}^t)\Vert\\
    &\leq (1+\eta L)\mathbb{E}\Vert w_{i^\star,k}^t - \widetilde{w}_{i^\star,k}^t\Vert + 2\eta\sigma_l.
\end{align*}
The last inequality adopts the factor $\mathbb{E}\left[x\right]=\sqrt{\left(\mathbb{E}\left[x\right]\right)^2}=\sqrt{\mathbb{E}\left[x^2\right]-\mathbb{E}\left[x-\mathbb{E}\left[x\right]\right]^2}\leq\sqrt{\mathbb{E}\left[x^2\right]}$.
\end{proof}

% \section*{Acknowledgments}
% This should be a simple paragraph before the References to thank those individuals and institutions who have supported your work on this article.

\bibliographystyle{IEEEtran} 
\bibliography{main}

\section{Biography Section}
\vspace{11pt}

\begin{IEEEbiography}[{\includegraphics[width=1in,height=1.25in,clip,keepaspectratio]{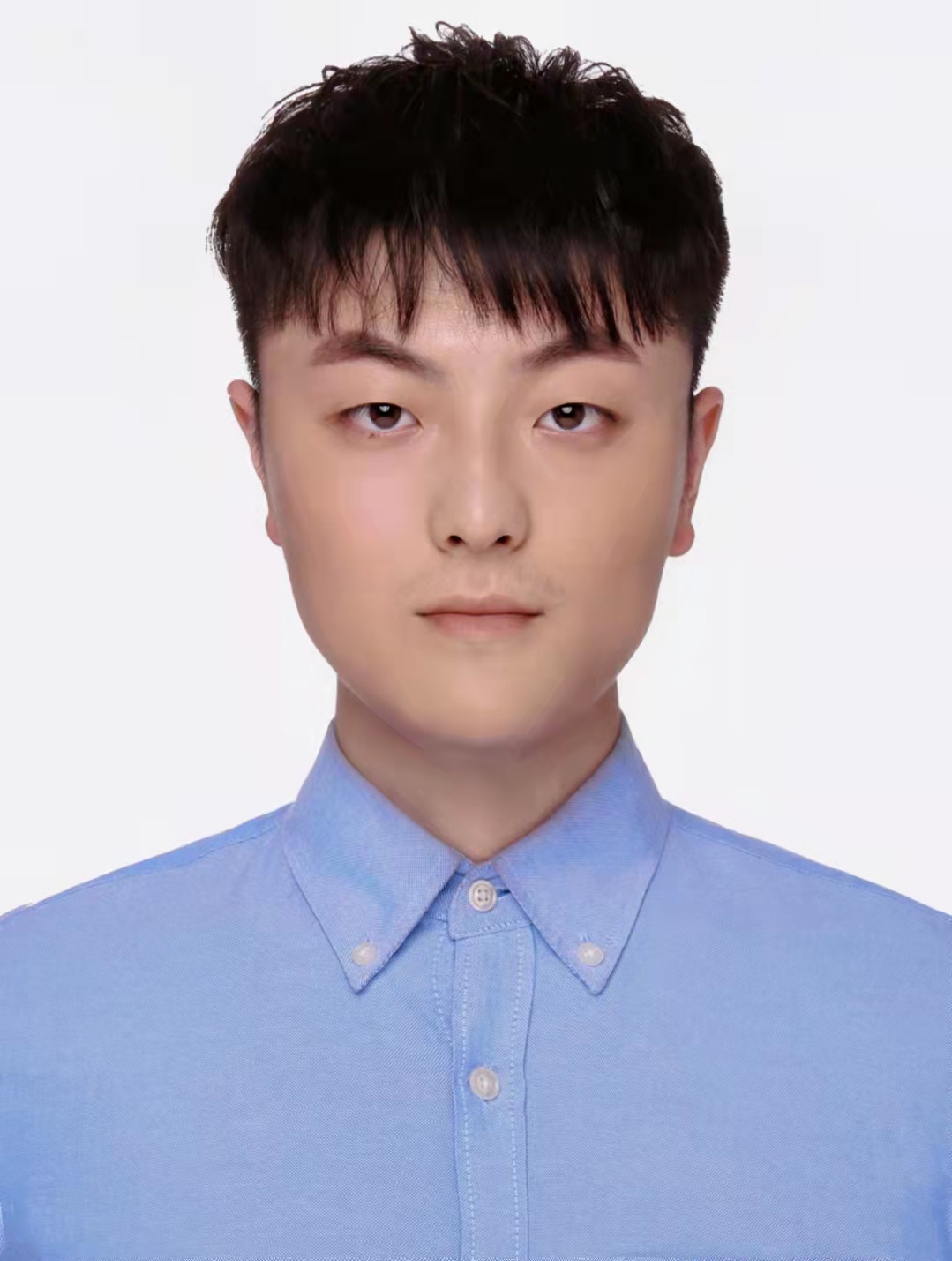}}]{Yan~Sun} received his B.Eng. degree in electronic and information engineering from University of Science and Technology of China (USTC) in 2018. He is currently a Ph.D. candidate at the University of Sydney (USYD). His current research area includes large-scale training, federated learning, machine learning, and learning theory.
\end{IEEEbiography}

\begin{IEEEbiography}
[{\includegraphics[height=1.25in,clip,keepaspectratio]{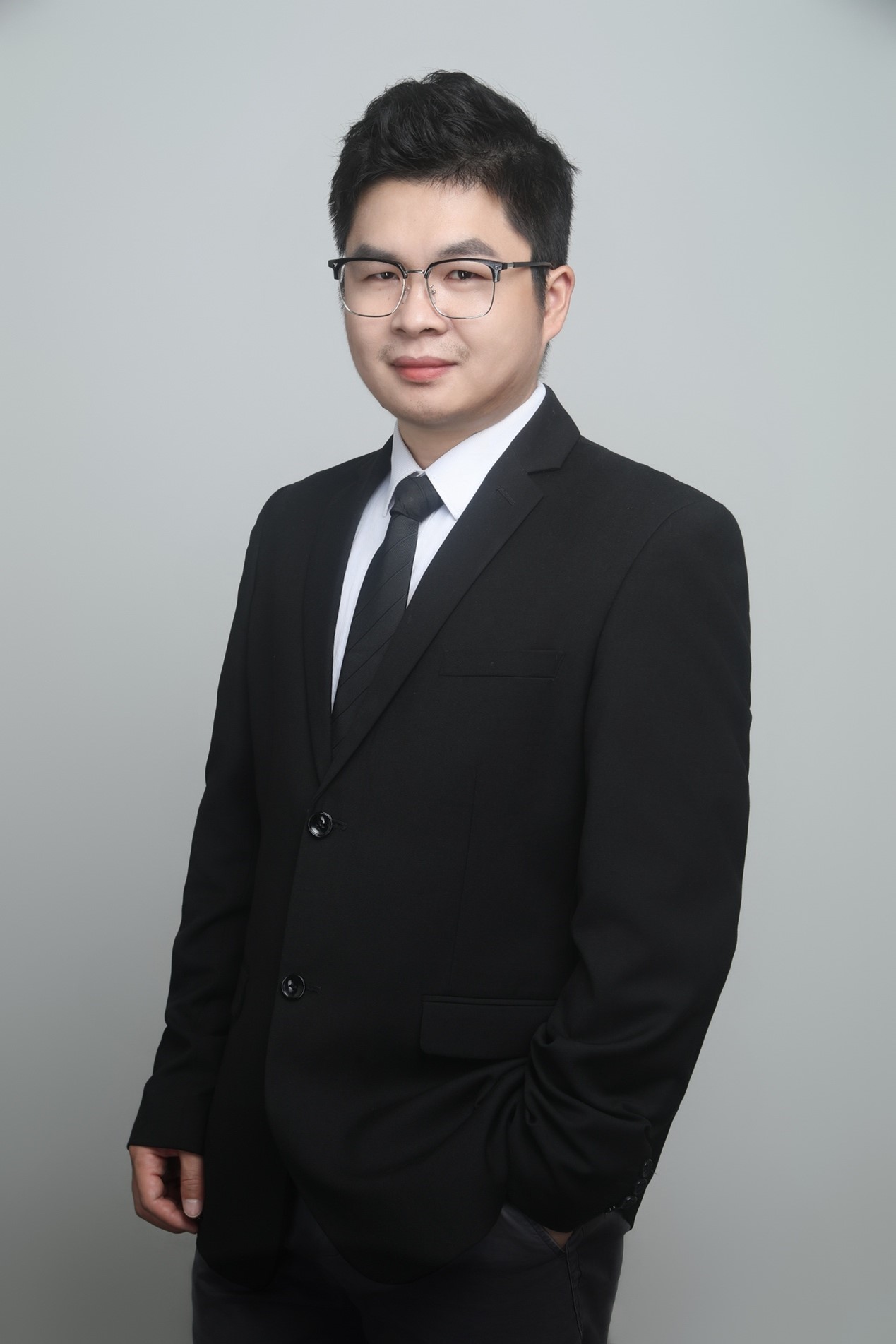}}]
{Li~Shen} is currently an associate professor at Sun Yat-sen University. Previously, he was a research scientist at JD Explore Academy, Beijing, and a senior researcher at Tencent AI Lab, Shenzhen. He received his bachelor's degree and Ph.D. from the School of Mathematics, South China University of Technology. His research interests include theory and algorithms for nonsmooth convex and nonconvex optimization, and their applications in trustworthy artificial intelligence, deep learning, and reinforcement learning. He has published more than 100 papers in peer-reviewed top-tier journal papers (JMLR, IEEE TPAMI, IJCV, IEEE TSP, IEEE TIP, IEEE TKDE, etc.) and conference papers (ICML, NeurIPS, ICLR, CVPR, ICCV, etc.). He has also served as the senior program committee for AAAI and area chairs for ICML, ICLR, ACML, and ICPR.
\end{IEEEbiography}

\begin{IEEEbiography}
[{\includegraphics[height=1.25in,clip,keepaspectratio]{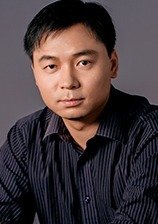}}]
{Dacheng~Tao} is currently a Distinguished University Professor in the College of Computing \& Data Science at Nanyang Technological University. He mainly applies statistics and mathematics to artificial intelligence and data science, and his research is detailed in one monograph and over 200 publications in prestigious journals and proceedings at leading conferences, with best paper awards, best student paper awards, and test-of-time awards. His publications have been cited over 112K times and he has an h-index 160+ in Google Scholar. He received the 2015 and 2020 Australian Eureka Prize, the 2018 IEEE ICDM Research Contributions Award, and the 2021 IEEE Computer Society McCluskey Technical Achievement Award. He is a Fellow of the Australian Academy of Science, AAAS, ACM and IEEE. 
\end{IEEEbiography}
\vspace{11pt}
\vfill

\newpage
%%%%%%%%%%%%%%%%%%%%%%%%%%%%%%%%%%%%%%%%%%%%%%%%%%%%%%%%%%%%%%%%%%%%%%%%%%%%%%%
%%%%%%%%%%%%%%%%%%%%%%%%%%%%%%%%%%%%%%%%%%%%%%%%%%%%%%%%%%%%%%%%%%%%%%%%%%%%%%%
% APPENDIX
%%%%%%%%%%%%%%%%%%%%%%%%%%%%%%%%%%%%%%%%%%%%%%%%%%%%%%%%%%%%%%%%%%%%%%%%%%%%%%%
%%%%%%%%%%%%%%%%%%%%%%%%%%%%%%%%%%%%%%%%%%%%%%%%%%%%%%%%%%%%%%%%%%%%%%%%%%%%%%%
\newpage
\onecolumn
\appendix

In the appendix, we state additional experiments~(Section~\ref{ap:additional_exp}) and introduce detailed proofs of the main theorems~(Section~\ref{ap:proof}). Our experimental results correspond to the optimal results of the each framework on each corresponding selection of the batchsize instead of a absolute comparison on the same batchsize. The optimal selection process are shown in Fig.~\ref{ap:fl_batchsize} and \ref{ap:dfl_batchsize}.

\section{Additional Setups and Experiments}
\label{ap:additional_exp}

\begin{figure}[H]
        \centering
        \subfloat[m=500, E=5]{
            \includegraphics[width=0.32\textwidth]{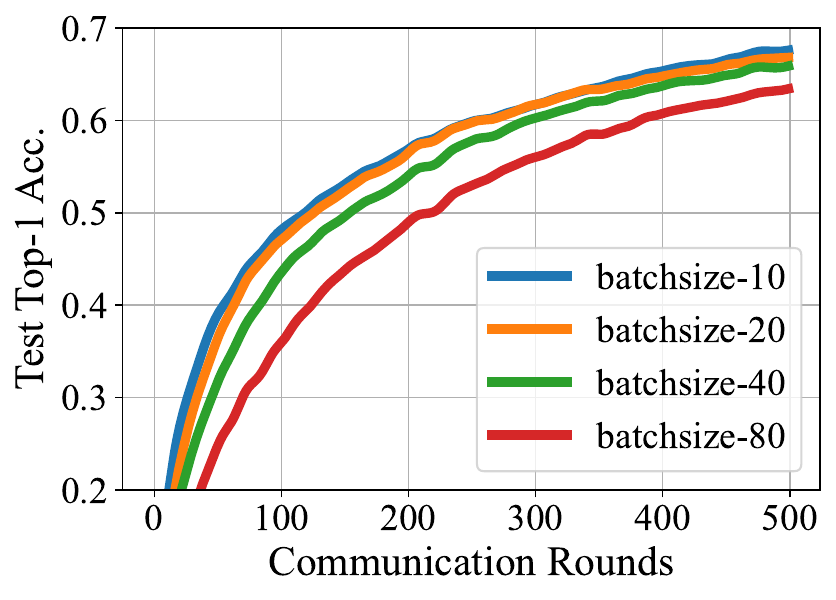}
            \label{ap:batchsizec1}
        }\!\!\!\!\!
        \subfloat[m=200, E=5]{
            \includegraphics[width=0.32\textwidth]{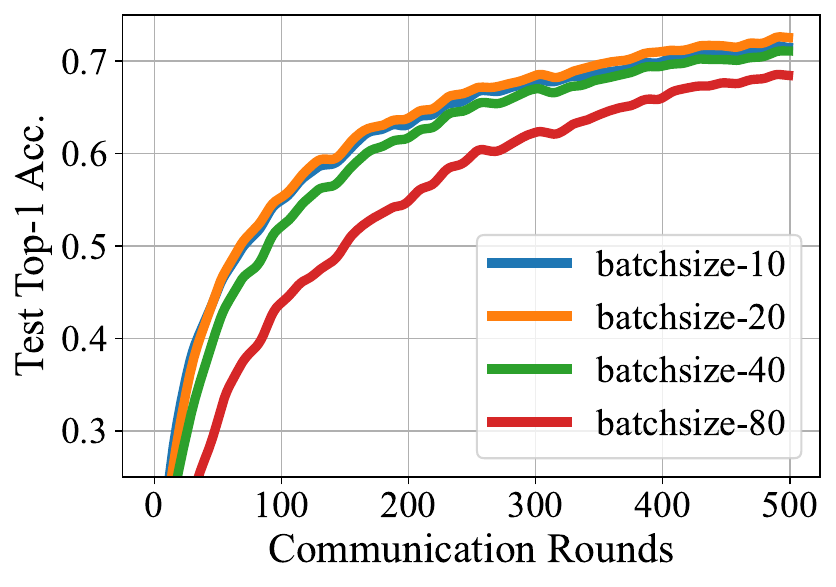}
            \label{ap:batchsizec2}
        }\!\!\!\!\!
        \subfloat[m=100, E=5]{
            \includegraphics[width=0.32\textwidth]{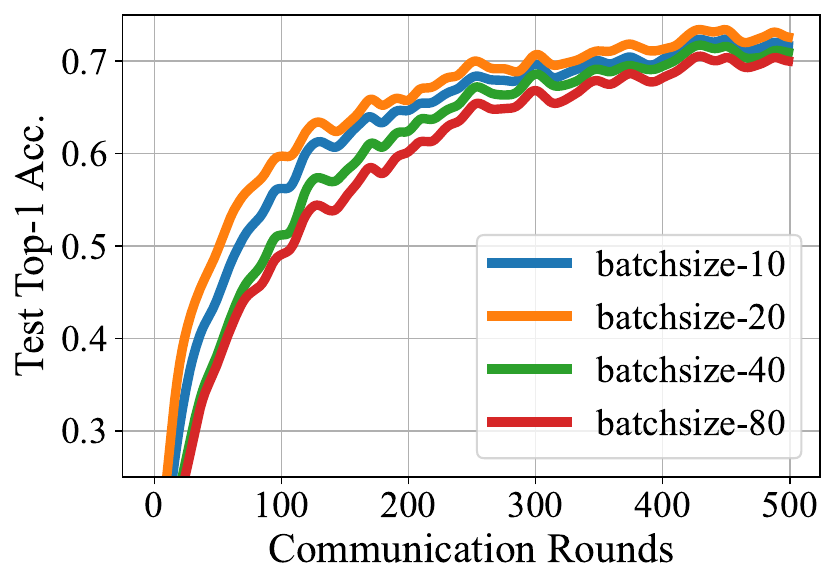}
            \label{ap:batchsizec3}
        }
        \quad
        \subfloat[m=500, E=20]{
            \includegraphics[width=0.32\textwidth]{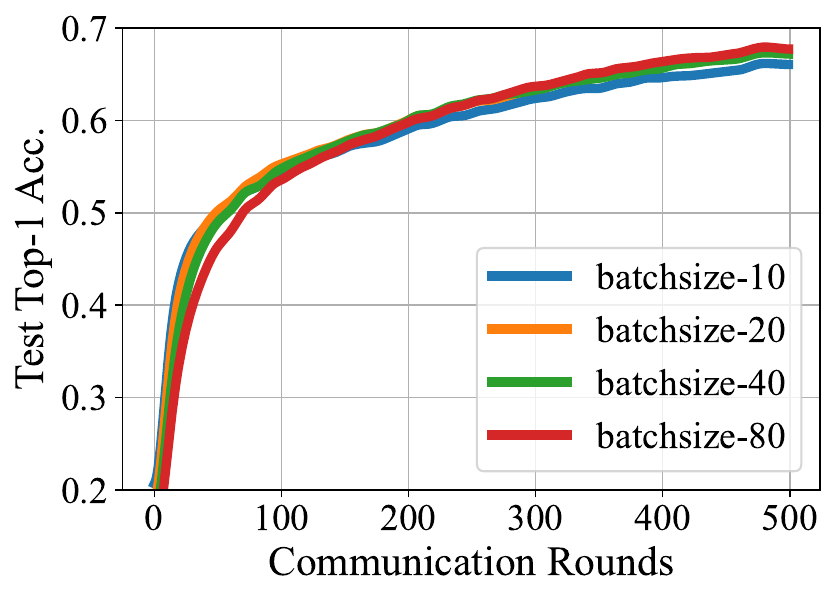}
            \label{ap:batchsizec4}
        }\!\!\!\!\!
        \subfloat[m=200, E=20]{
            \includegraphics[width=0.32\textwidth]{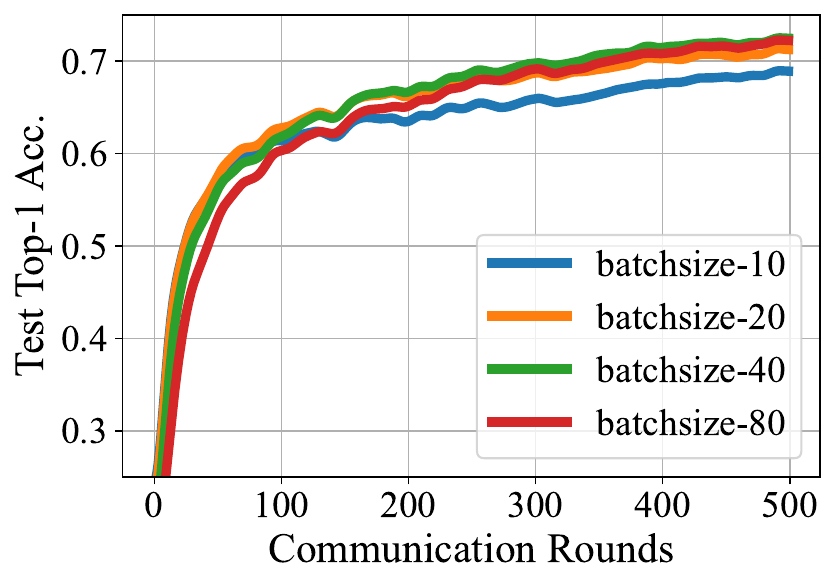}
            \label{ap:batchsizec5}
        }\!\!\!\!\!
        \subfloat[m=100, E=20]{
            \includegraphics[width=0.32\textwidth]{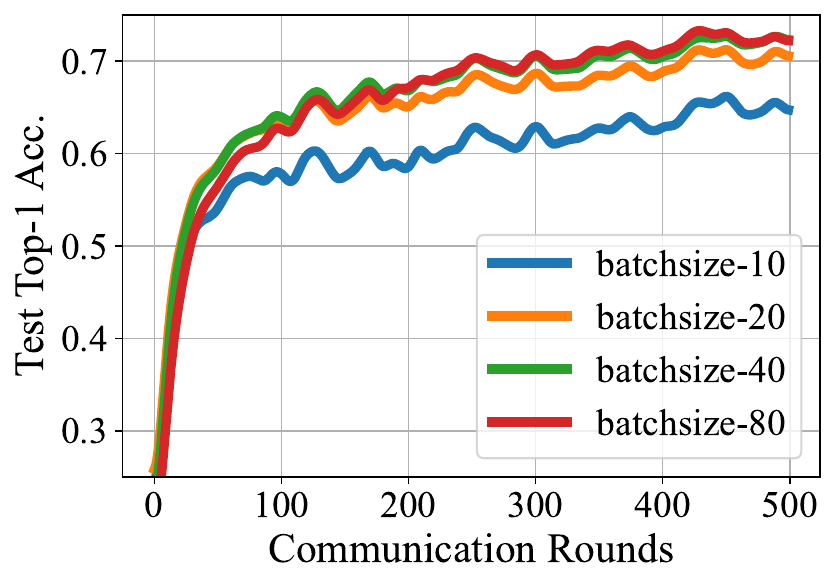}
            \label{ap:batchsizec6}
        }
        \caption{We test different batchsizes in \textbf{CFL} on the Dirichlet-0.1 split of the CIFAR-10 dataset with the ResNet-18 models. $m$ is the number of clients and $E$ is the number of local epochs.}
        \label{ap:fl_batchsize}
    \end{figure}
\begin{figure}[H]
        \centering
        \subfloat[m=500, E=5]{
            \includegraphics[width=0.32\textwidth]{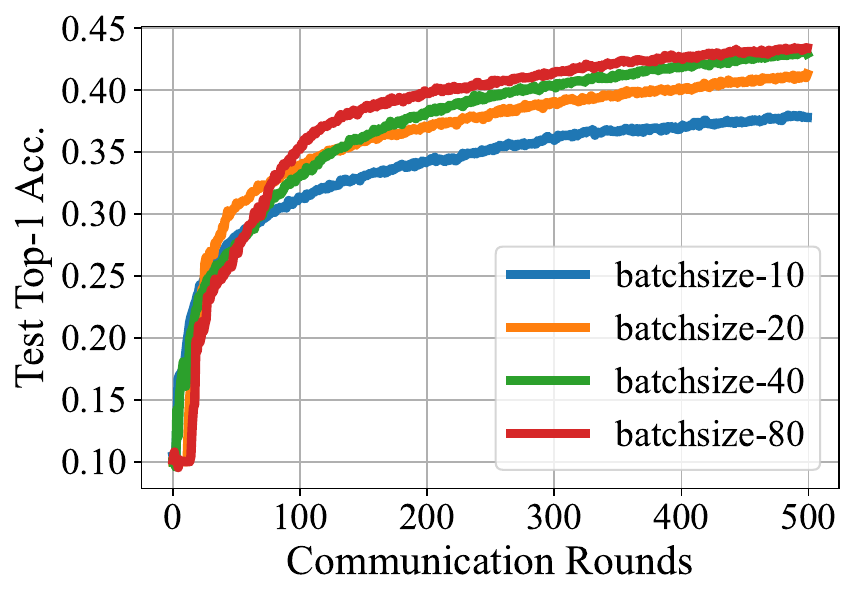}
            \label{ap:batchsized1}
        }\!\!\!\!\!
        \subfloat[m=200, E=5]{
            \includegraphics[width=0.32\textwidth]{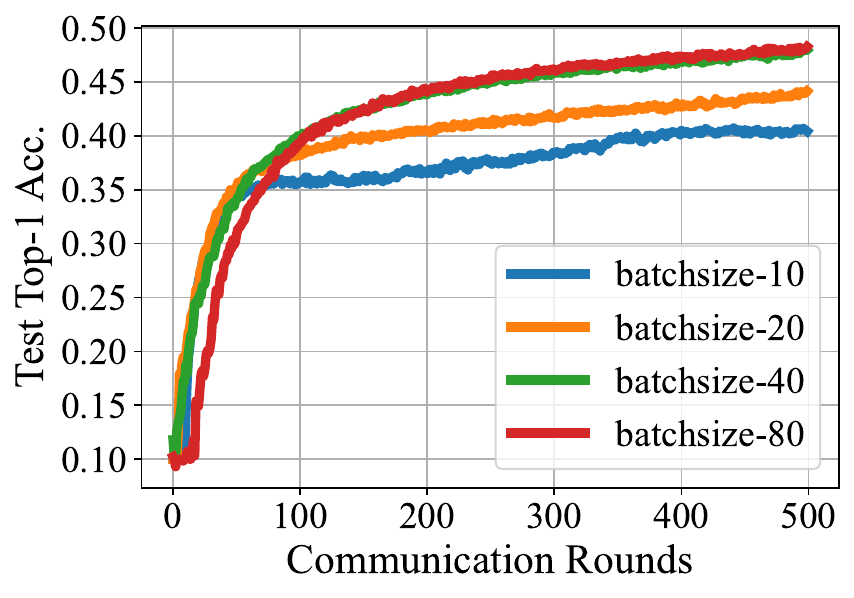}
            \label{ap:batchsized2}
        }\!\!\!\!\!
        \subfloat[m=100, E=5]{
            \includegraphics[width=0.32\textwidth]{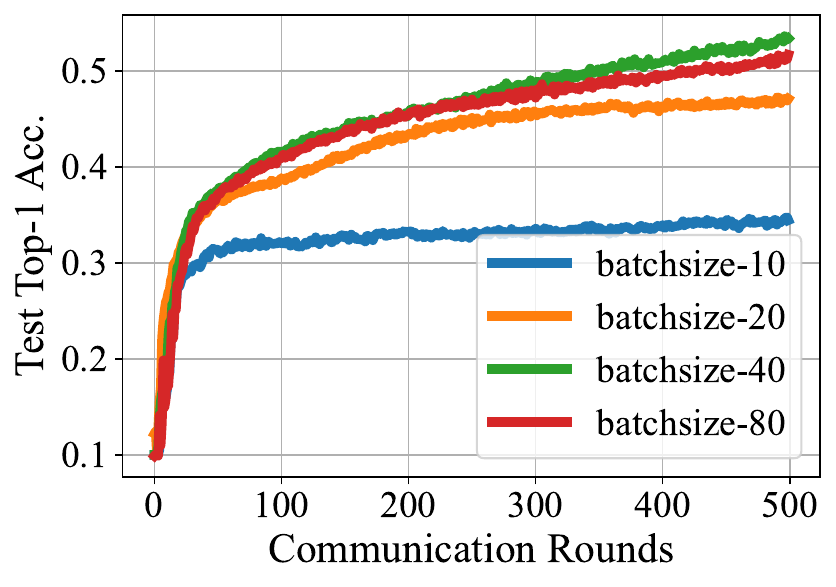}
            \label{ap:batchsized3}
        }
        \quad
        \subfloat[m=500, E=20]{
            \includegraphics[width=0.32\textwidth]{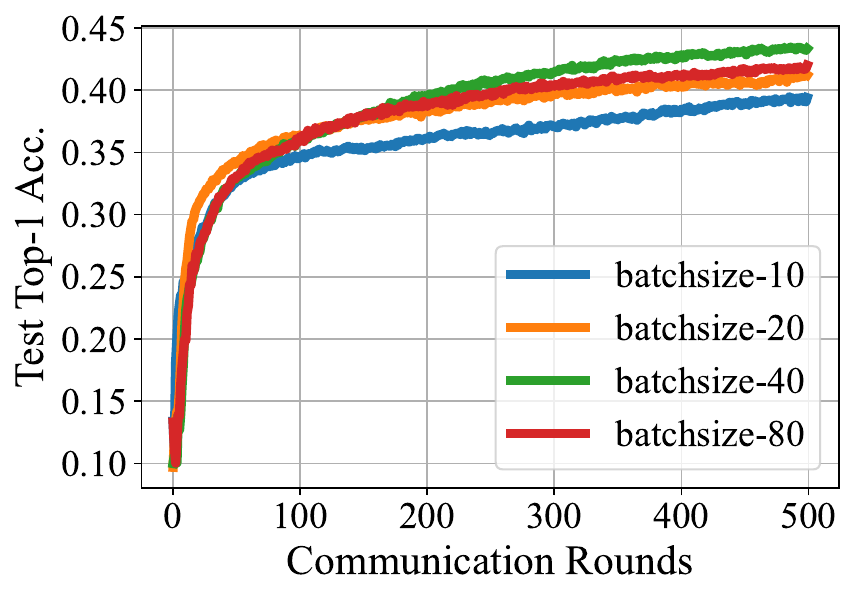}
            \label{ap:batchsized4}
        }\!\!\!\!\!
        \subfloat[m=200, E=20]{
            \includegraphics[width=0.32\textwidth]{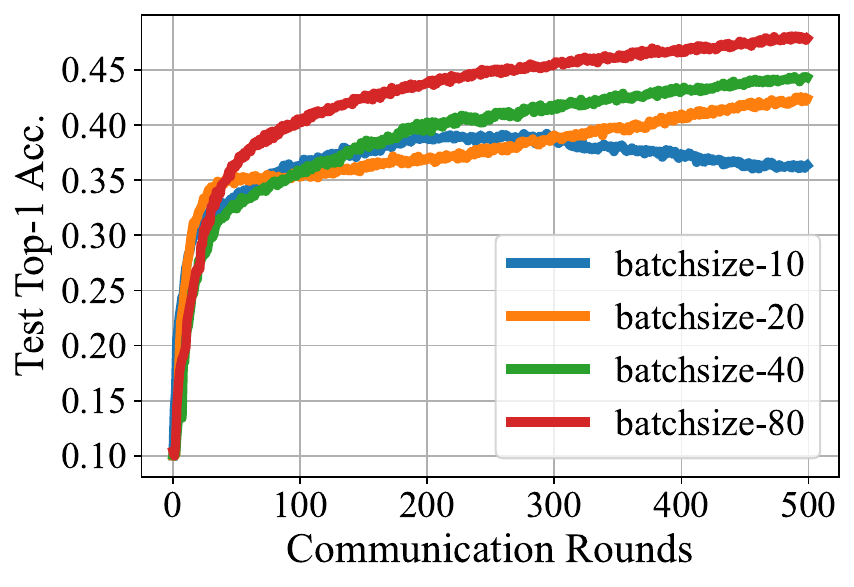}
            \label{ap:batchsized5}
        }\!\!\!\!\!
        \subfloat[m=100, E=20]{
            \includegraphics[width=0.32\textwidth]{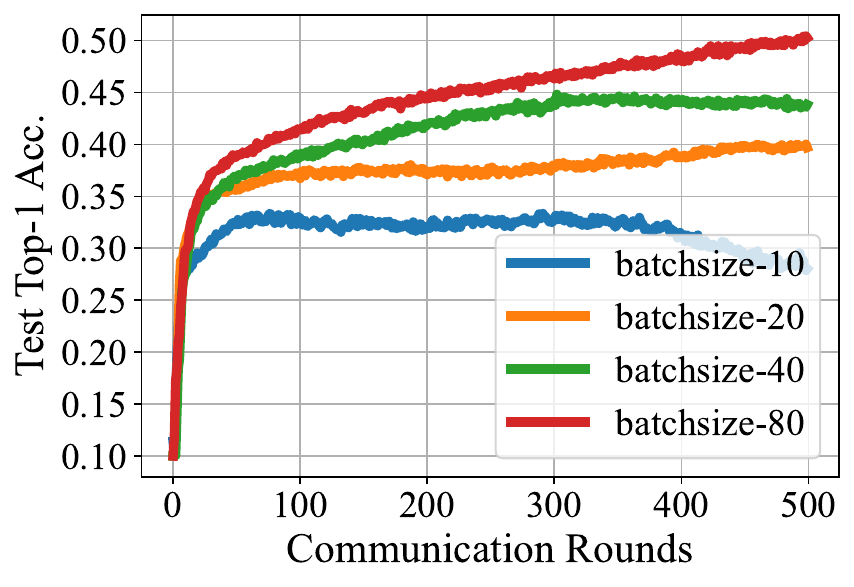}
            \label{ap:batchsized6}
        }
        \caption{We test different batchsizes in \textbf{DFL} on the Dirichlet-0.1 split of the CIFAR-10 dataset with the ResNet-18 models. $m$ is the number of clients and $E$ is the number of local epochs.}
        \label{ap:dfl_batchsize}
    \end{figure}

\subsection{Additional Experiments}
\subsubsection{Different Active Ratios in CFL}
\label{ap:fl_local_interval}
\begin{figure}[t]
        \centering
        \subfloat[m=500, E=5]{
            \includegraphics[width=0.32\textwidth]{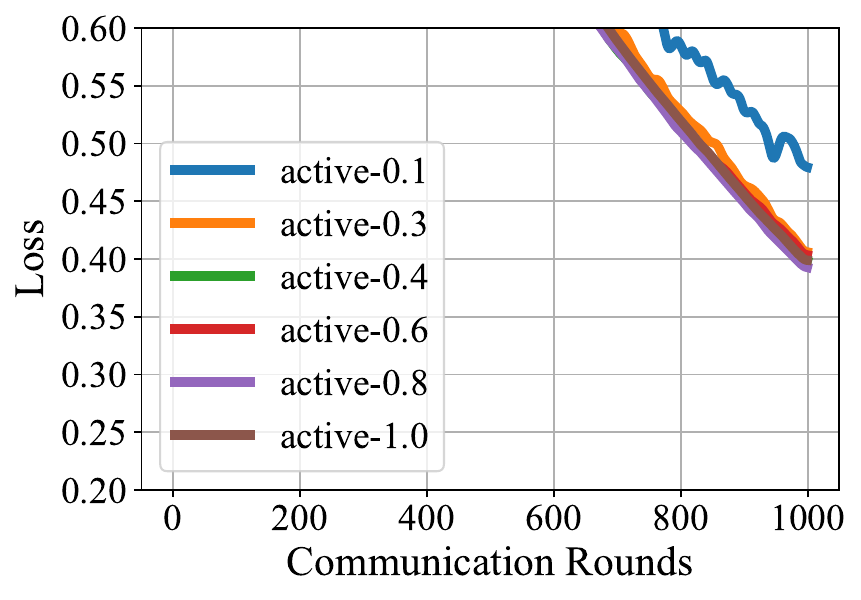}
            \label{ap:ratio1_loss}
        }\!\!\!\!\!
        \subfloat[m=200, E=5]{
            \includegraphics[width=0.32\textwidth]{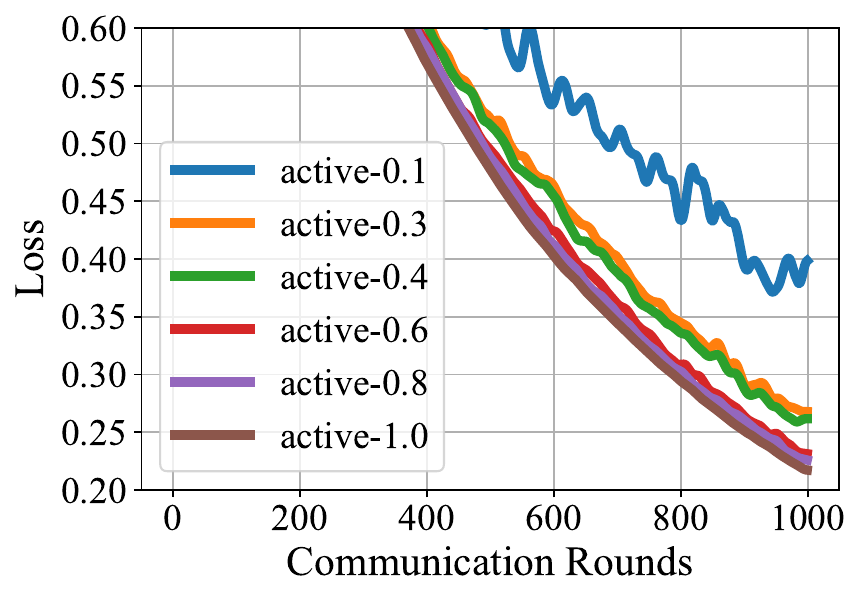}
            \label{ap:ratio2_loss}
        }\!\!\!\!\!
        \subfloat[m=100, E=5]{
            \includegraphics[width=0.32\textwidth]{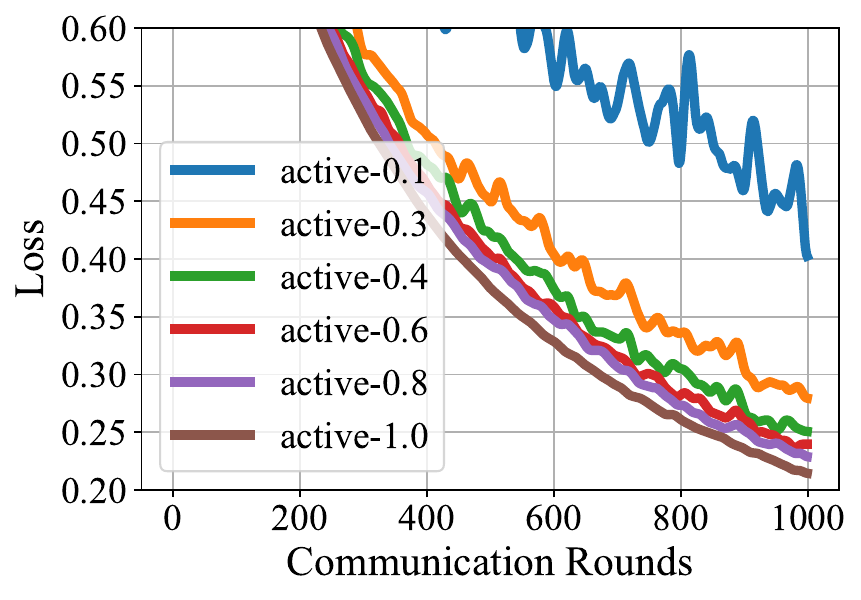}
            \label{ap:ratio3_loss}
        }
        \quad
        \subfloat[m=500, E=20]{
            \includegraphics[width=0.32\textwidth]{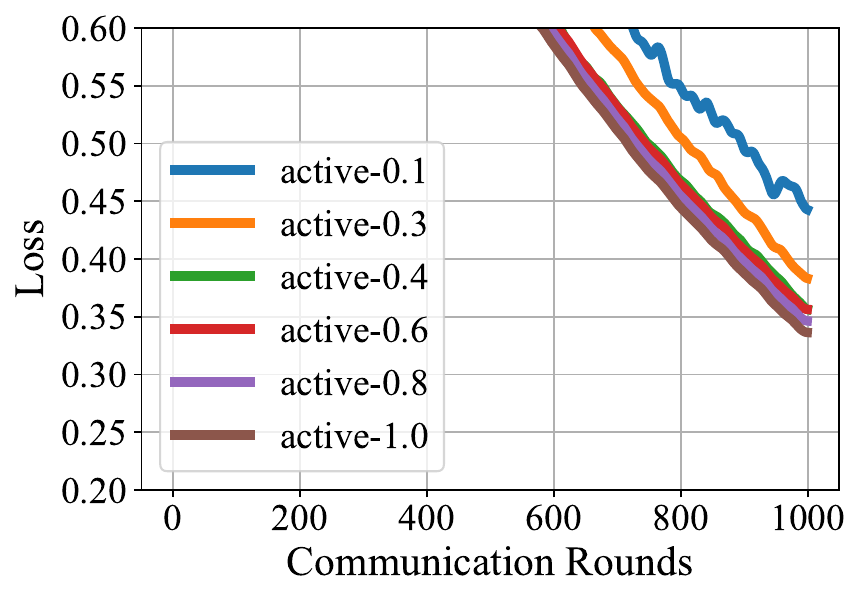}
            \label{ap:ratio4_loss}
        }\!\!\!\!\!
        \subfloat[m=200, E=20]{
            \includegraphics[width=0.32\textwidth]{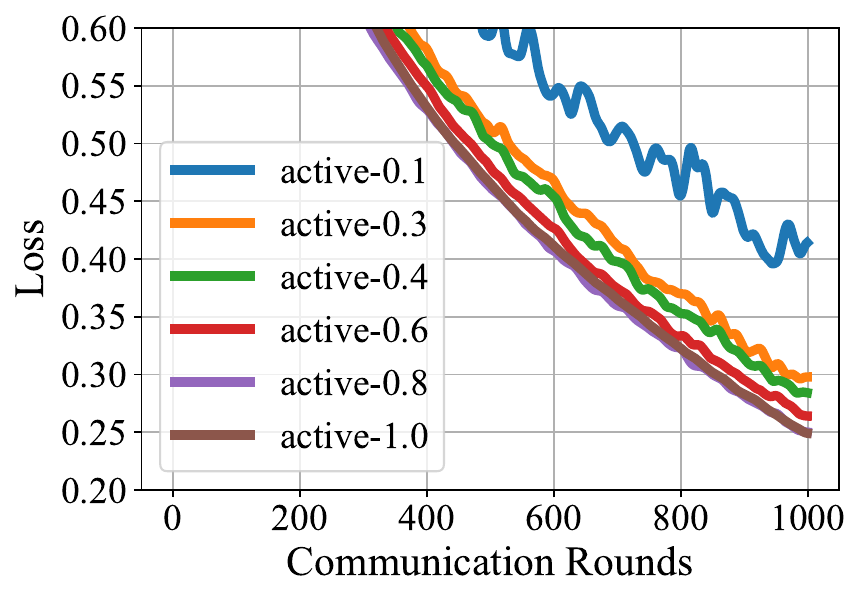}
            \label{ap:ratio5_loss}
        }\!\!\!\!\!
        \subfloat[m=100, E=20]{
            \includegraphics[width=0.32\textwidth]{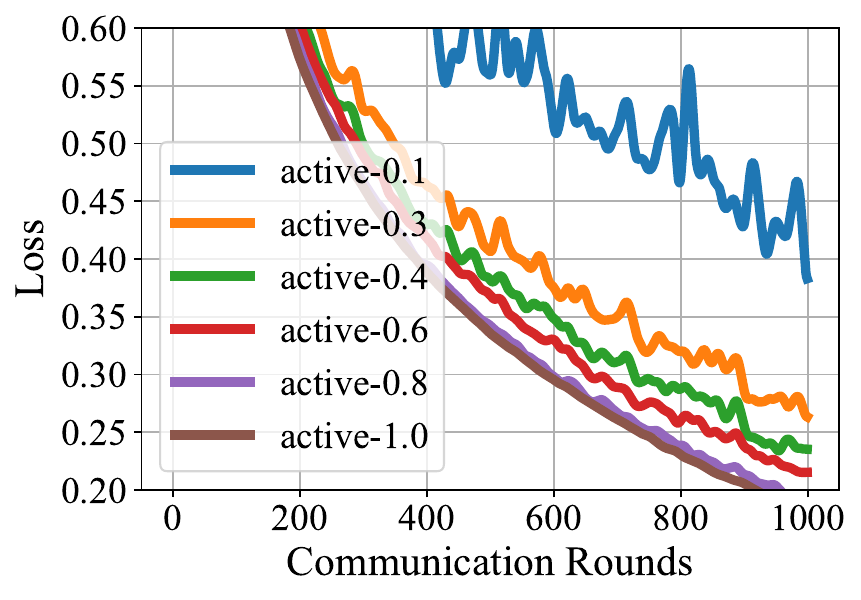}
            \label{ap:ratio6_loss}
        }
        \caption{\textbf{Loss} curves of different active ratios in CFL.}
        \label{ap:fl_active_loss}
        \vskip -0.4cm
    \end{figure}
Obviously, increasing $n$ helps to accelerate the optimization. A larger active ratio means a faster convergence rate. As shown in Figure~\ref{ap:fl_active_loss}, we can see this phenomenon very clearly in the subfloat~(c) and (f). Though the real acceleration is not as fast as linear speedup, from the optimization perspective, increasing the active ratio can truly achieve a lower loss value. This is also consistent with the conclusions of previous work in the optimization process analysis.
\begin{figure}[t]
        \centering
        \subfloat[m=500, E=5]{
            \includegraphics[width=0.32\textwidth]{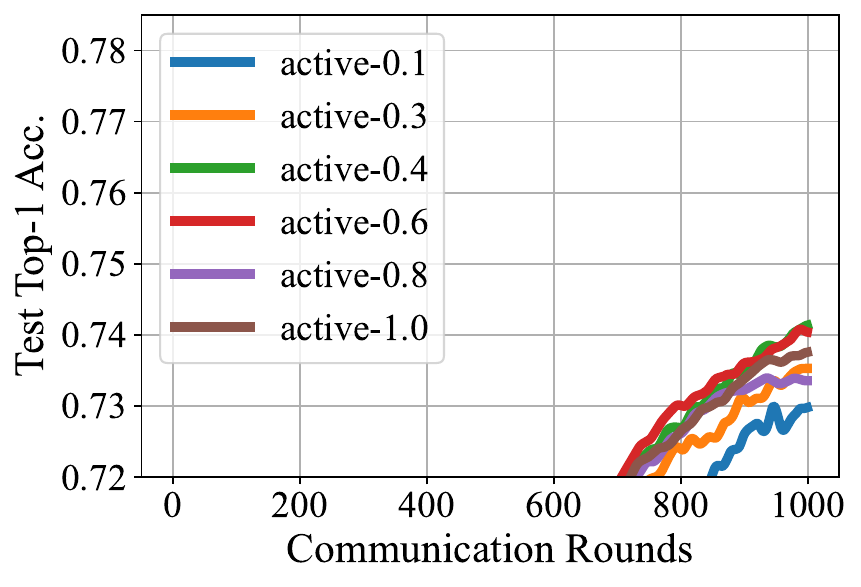}
            \label{ap:ratio1_acc}
        }\!\!\!\!\!
        \subfloat[m=200, E=5]{
            \includegraphics[width=0.32\textwidth]{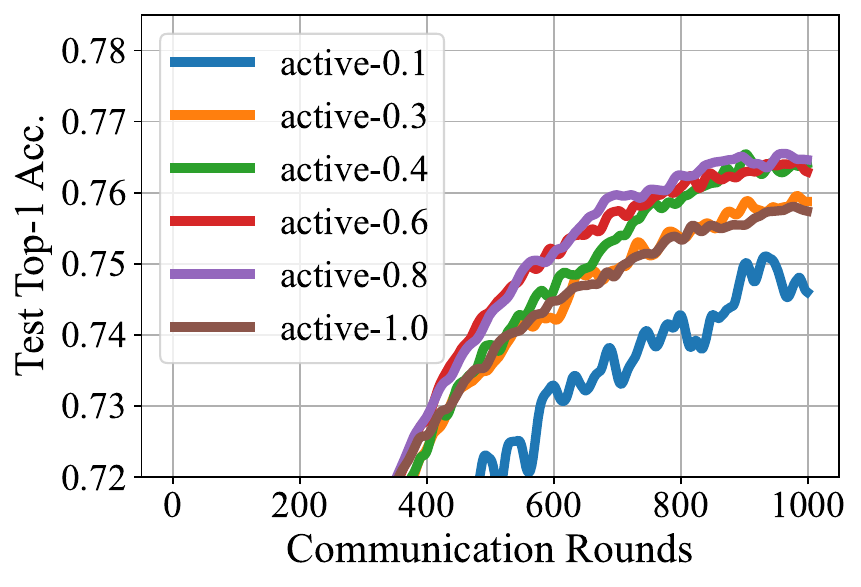}
            \label{ap:ratio2_acc}
        }\!\!\!\!\!
        \subfloat[m=100, E=5]{
            \includegraphics[width=0.32\textwidth]{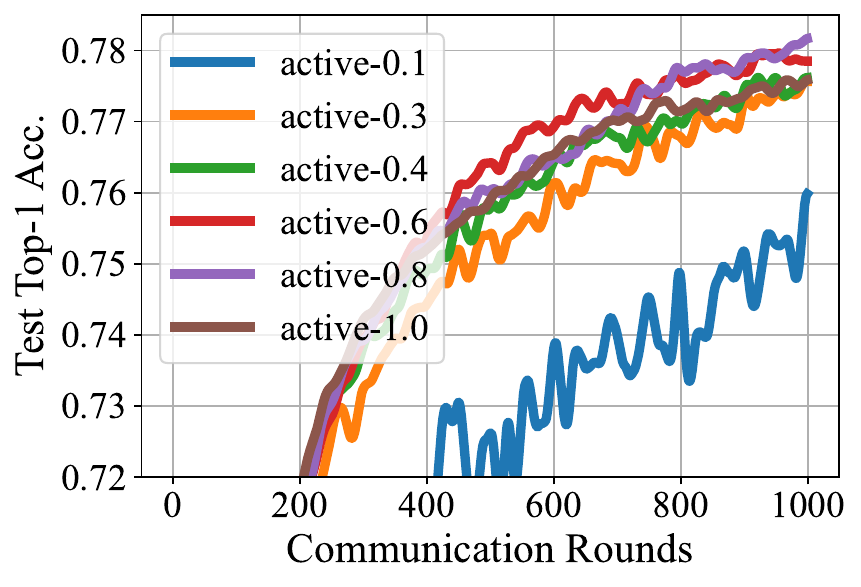}
            \label{ap:ratio3_acc}
        }
        \quad
        \subfloat[m=500, E=20]{
            \includegraphics[width=0.32\textwidth]{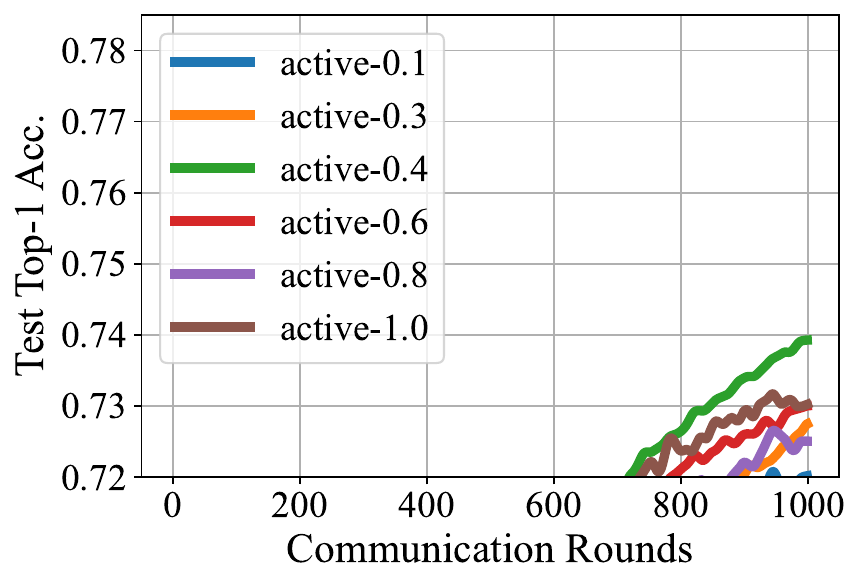}
            \label{ap:ratio4_acc}
        }\!\!\!\!\!
        \subfloat[m=200, E=20]{
            \includegraphics[width=0.32\textwidth]{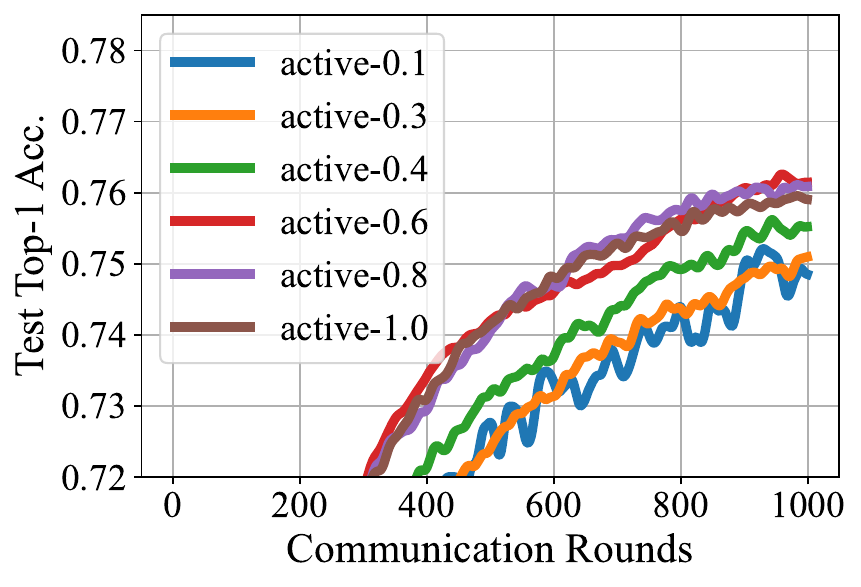}
            \label{ap:ratio5_acc}
        }\!\!\!\!\!
        \subfloat[m=100, E=20]{
            \includegraphics[width=0.32\textwidth]{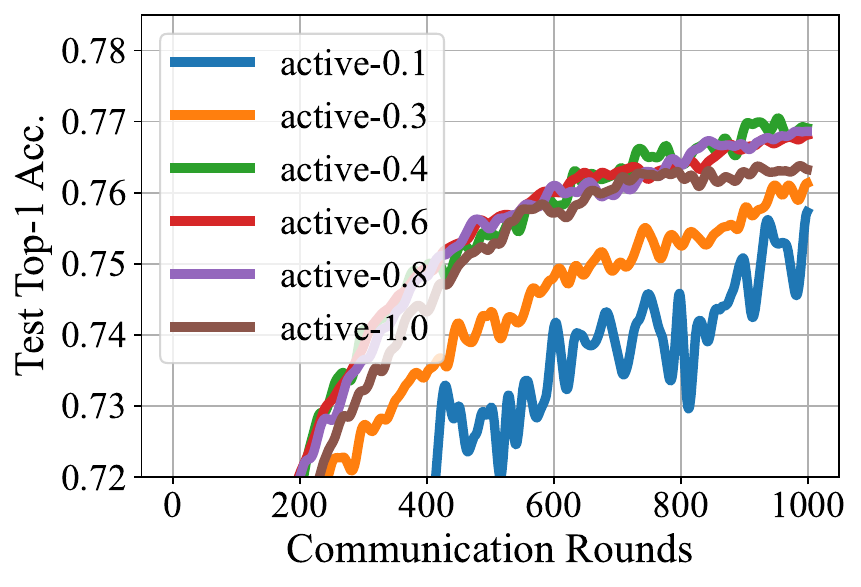}
            \label{ap:ratio6_acc}
        }
        \caption{\textbf{Accuracy} curves of different active ratios in CFL.}
        \label{ap:fl_active_acc}
        \vskip -0.4cm
    \end{figure}
However, as shown in Figure~\ref{ap:fl_active_acc}, increasing $n$ does not always mean higher test accuracy. In CFL, there is an optimal active ratio, which means the active number of clients is limited.

\subsubsection{Different Topology in DFL}
\label{ap:experiment2}
\begin{figure}[t]
        \centering
        \subfloat[m=500, E=5]{
            \includegraphics[width=0.32\textwidth]{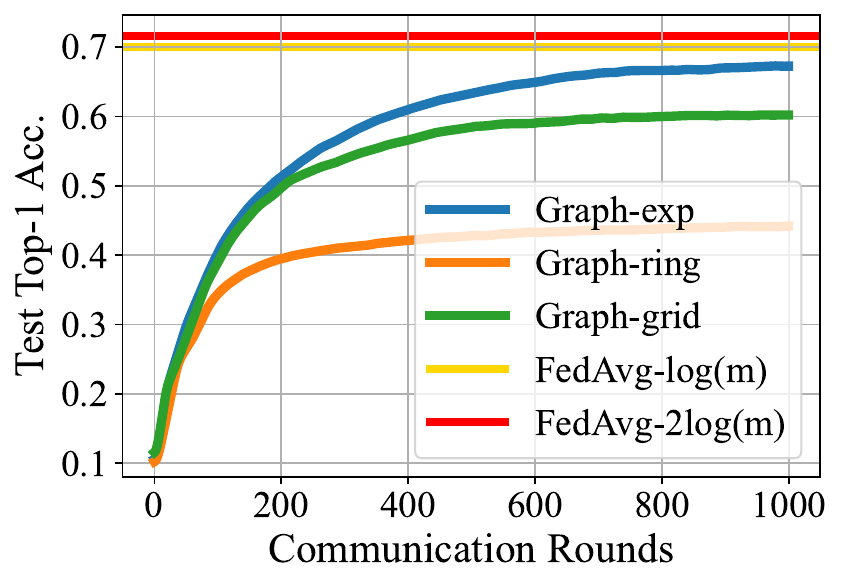}
            \label{ap:cfl_dfl1}
        }\!\!\!\!\!
        \subfloat[m=200, E=5]{
            \includegraphics[width=0.32\textwidth]{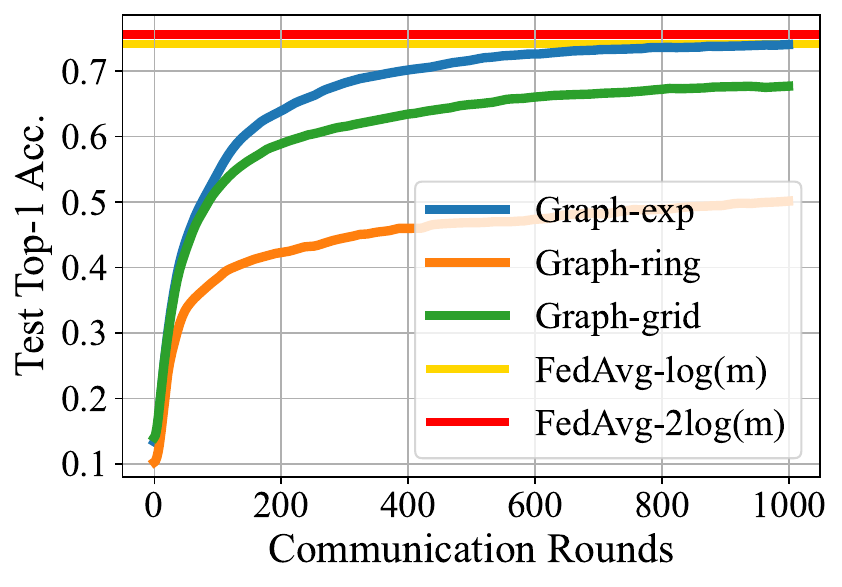}
            \label{ap:cfl_dfl2}
        }\!\!\!\!\!
        \subfloat[m=100, E=5]{
            \includegraphics[width=0.32\textwidth]{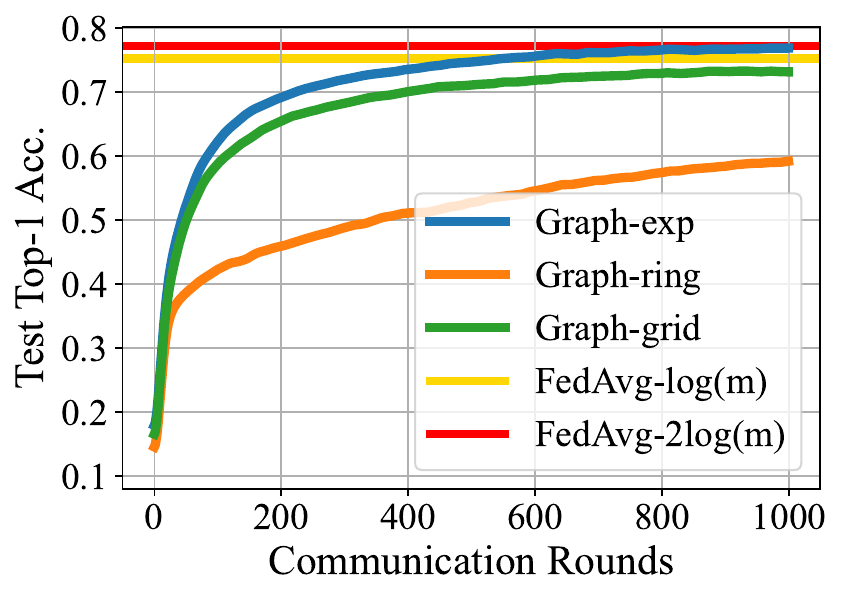}
            \label{ap:cfl_dfl3}
        }
        \quad
        \subfloat[m=500, E=20]{
            \includegraphics[width=0.32\textwidth]{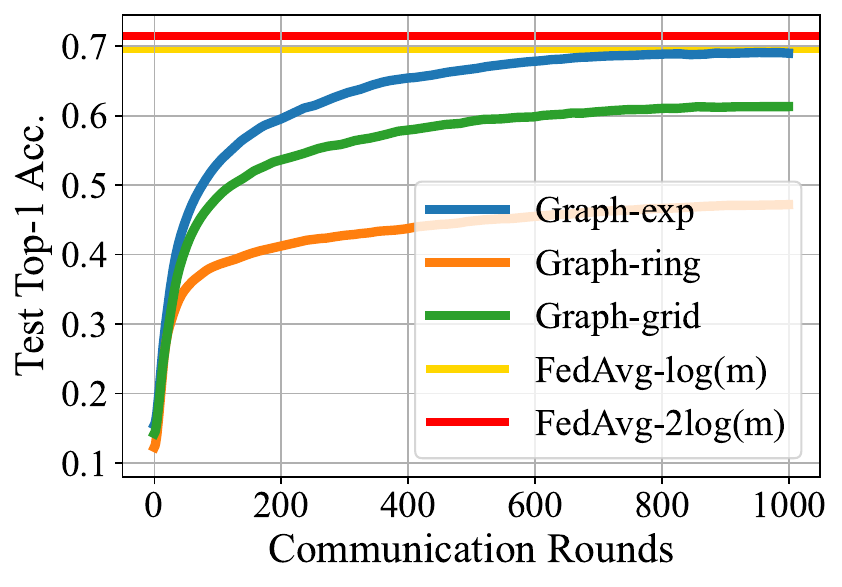}
            \label{ap:cfl_dfl4}
        }\!\!\!\!\!
        \subfloat[m=200, E=20]{
            \includegraphics[width=0.32\textwidth]{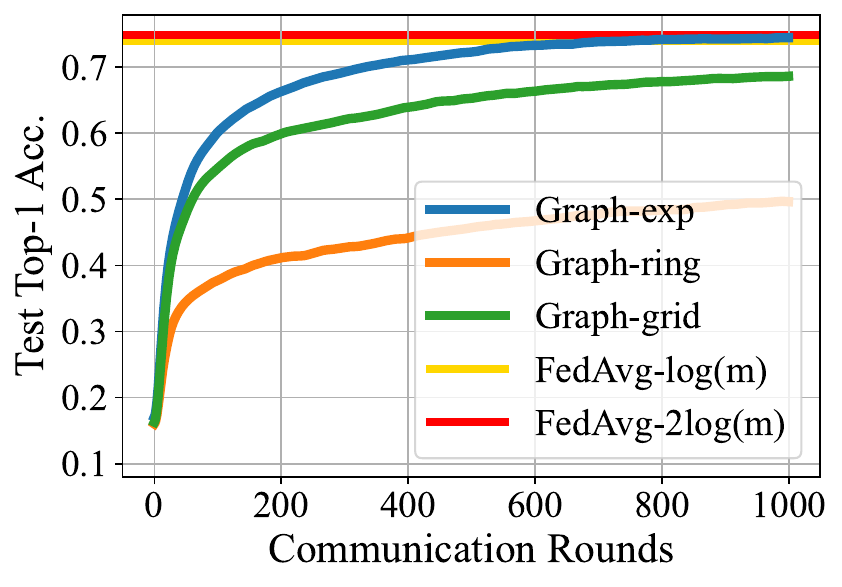}
            \label{ap:cfl_dfl5}
        }\!\!\!\!\!
        \subfloat[m=100, E=20]{
            \includegraphics[width=0.32\textwidth]{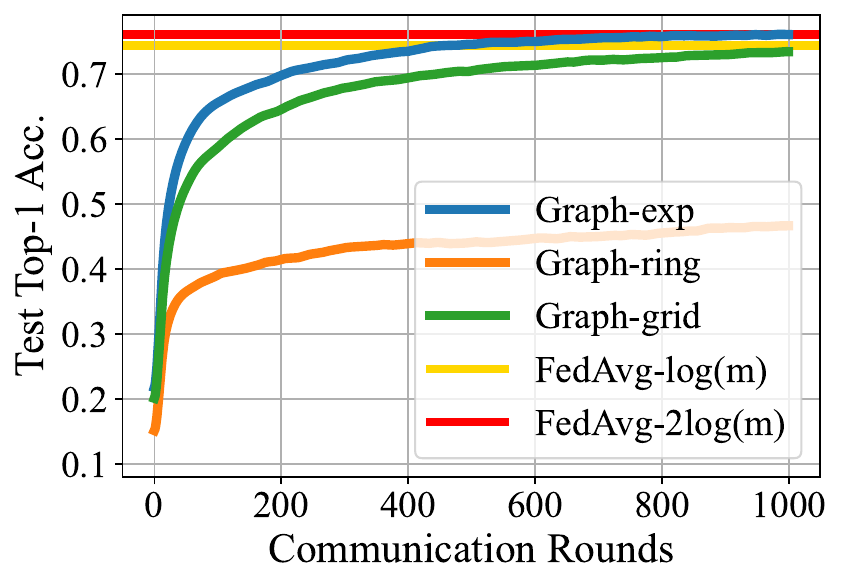}
            \label{ap:cfl_dfl6}
        }
        \caption{\textbf{Accuracy} curves of different topologies in DFL and corresponding CFL.}
        \label{ap:cfl_dfl}
        \vskip -0.4cm
    \end{figure}
As Figure~\ref{ap:cfl_dfl} shows, under similar communication costs, DFL can not be better than CFL. FL always maintains stronger generalization performance, except at very low active ratios where its performance is severely compromised. We can also observe a more significant phenomenon. The gap between CFL and DFL will be larger as $m$ increases.
\begin{figure}[t]
        \centering
        \subfloat[ring topology]{
            \includegraphics[width=0.32\textwidth]{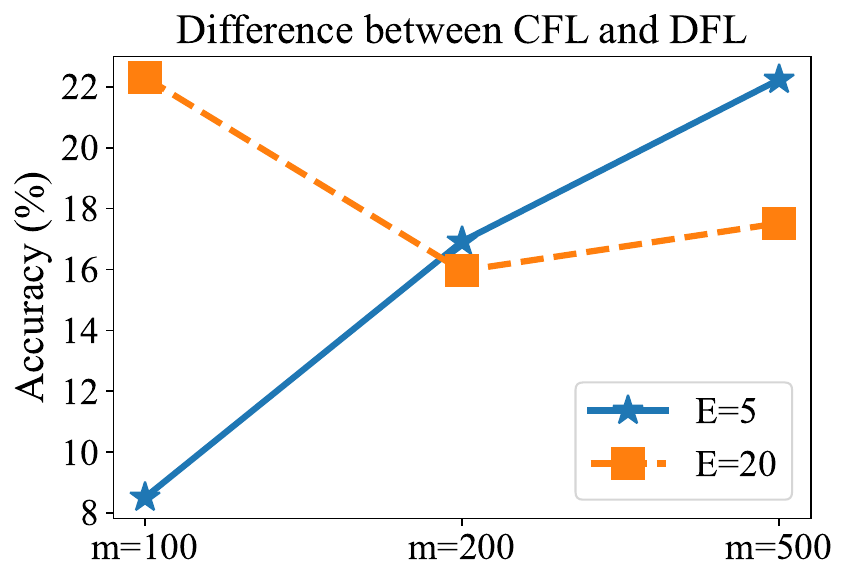}
            \label{ap:d_cfl_dfl1}
        }\!\!\!\!\!
        \subfloat[grid topology]{
            \includegraphics[width=0.32\textwidth]{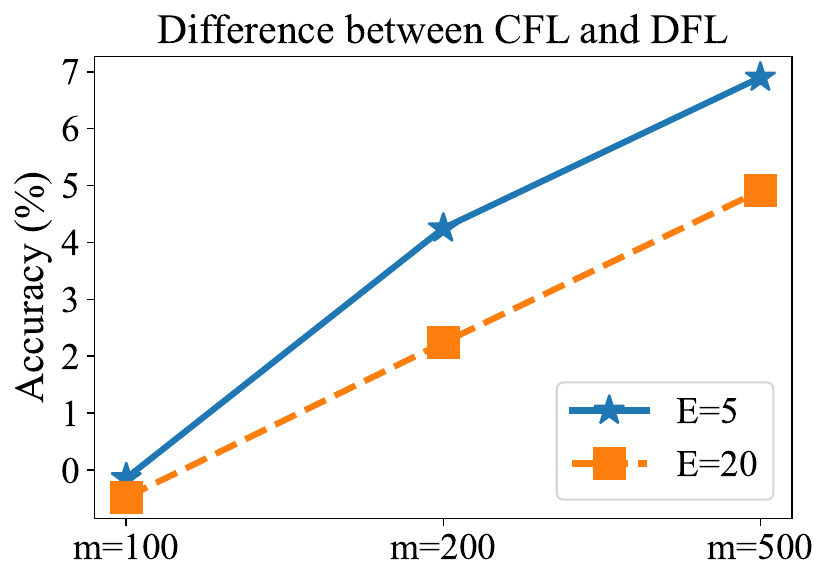}
            \label{ap:d_cfl_dfl2}
        }\!\!\!\!\!
        \subfloat[exp topology]{
            \includegraphics[width=0.32\textwidth]{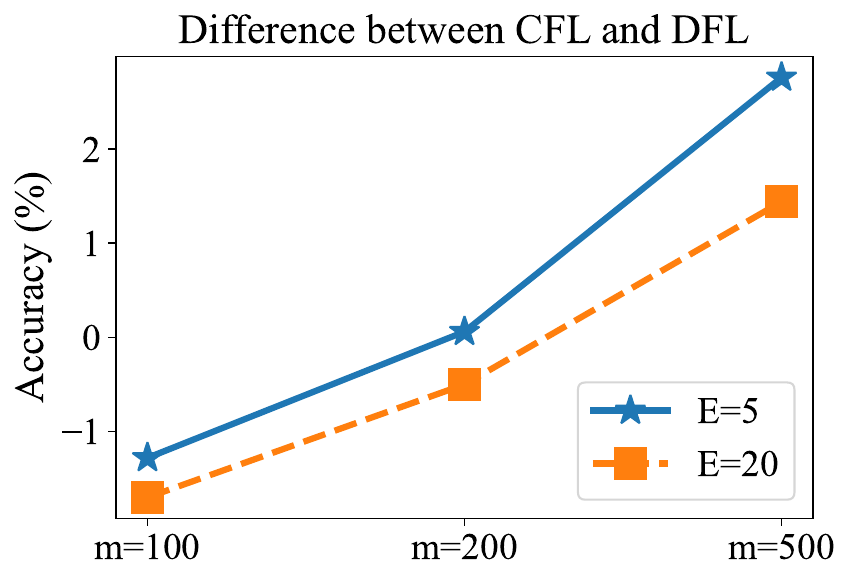}
            \label{ap:d_cfl_dfl3}
        }
        \caption{\textbf{Accuracy} difference of different topologies between DFL and corresponding CFL.}
        \label{ap:d_cfl_dfl}
        \vskip -0.4cm
    \end{figure}
As shown in Figure~\ref{ap:d_cfl_dfl}, we can clearly see that the difference between CFL and DFL increases significantly as $m$ increases. We calculate the difference by (\textit{test accuracy} of CFL - \textit{test accuracy} of DFL) under the same communication costs. According to our analysis, the excess risk of CFL is much smaller than that of DFL, which is $\mathcal{O}(m^{-\frac{1+\mu L}{1+2\mu L}})$ and $\mathcal{O}(m^{-\frac{1}{1+\mu L}})$ respectively. Therefore, we can know that CFL generalizes better than DFL at most with the rate of $\mathcal{O}(m^{\frac{\left(\mu L\right)^2}{\left(1+\mu L\right)\left(1+2\mu L\right)}})$. In general cases, the worst generalization of CFL equals to the best generalization of DFL. Therefore, when the generalization error dominates the test accuracy, CFL is always better than DFL.

\subsection{Proof of Theorems}
\label{ap:proof}

We first supplement two additional lemmas to facilitate the description of the proof for the conclusions.

\begin{lemma}[Upper Bound of Aggregation Gaps]
\label{aggregation}
    According to Algorithm~\ref{algorithm:fedavg} and \ref{algorithm:D-FedAvg}, the aggregation of centralized federated learning is $w_{i,0}^{t+1}=w^{t+1}=\frac{1}{n}\sum_{\mathcal{N}}w_{i,K}^{t}$, and the aggregation of decentralized federated learning is $w_{i,0}^{t+1}=\sum_{j\in\mathcal{A}_i}a_{ij}w_{i,K}^{t}$. On both setups, we can upper bound the aggregation gaps by:
    \begin{equation}
        \Delta_0^{t+1} \leq \Delta_K^{t}.
    \end{equation}
\end{lemma}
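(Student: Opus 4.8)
The plan is to recognize that both aggregation rules express each new local initialization as a weighted average of the post-local-training iterates, with nonnegative weights that sum to one along the relevant index; the bound then reduces to the triangle inequality (Jensen for the norm) together with this stochasticity. I would prove the two cases in parallel and read off $\Delta_0^{t+1}\leq\Delta_K^{t}$ in each.

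First I would handle DFL. By the mixing step of Algorithm~\ref{algorithm:D-FedAvg} (read as $w_{i,0}^{t+1}=\sum_{j\in\mathcal{A}_i}a_{ij}w_{j,K}^{t}$, extending $a_{ij}=0$ for $j\notin\mathcal{A}_i$), the same identity holds for the perturbed trajectory $\widetilde w$; subtracting and applying the triangle inequality gives
\begin{equation}
\Delta_0^{t+1}=\sum_{i\in[m]}\Big\Vert\sum_{j\in[m]}a_{ij}\big(w_{j,K}^{t}-\widetilde w_{j,K}^{t}\big)\Big\Vert\leq\sum_{j\in[m]}\Big(\sum_{i\in[m]}a_{ij}\Big)\big\Vert w_{j,K}^{t}-\widetilde w_{j,K}^{t}\big\Vert ,
\end{equation}
and then I would invoke the column-stochasticity $\sum_{i}a_{ij}=1$ from the double-stochastic property in Definition~\ref{def_adjacent_matrix} to collapse the inner sum to $1$, yielding $\Delta_0^{t+1}\leq\Delta_K^{t}$.

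Next I would handle CFL. By line~8 of Algorithm~\ref{algorithm:fedavg} every client that is active in round $t+1$ is initialized at the common global model $w^{t+1}=\frac1n\sum_{i\in\mathcal{N}}w_{i,K}^{t}$, so $\Delta_0^{t+1}=n\,\Vert w^{t+1}-\widetilde w^{t+1}\Vert$; writing the difference as $\frac1n\sum_{i\in\mathcal{N}}(w_{i,K}^{t}-\widetilde w_{i,K}^{t})$ and applying the triangle inequality cancels the factor $n$ against the $\frac1n$ and leaves $\Delta_0^{t+1}\leq\sum_{i\in\mathcal{N}}\Vert w_{i,K}^{t}-\widetilde w_{i,K}^{t}\Vert\leq\Delta_K^{t}$. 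This is the same averaging argument as in DFL, with the uniform weights $1/n$ on the active set $\mathcal{N}$ playing the role of a stochastic row.

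I do not expect a genuine obstacle; the only care needed is the bookkeeping of the index set over which $\Delta_\bullet^\bullet$ is summed in the centralized case — reconciling the ``$[m]$'' in its definition with the $n$ clients that actually carry updated state, so that replacing $\sum_{i\in\mathcal{N}}$ by $\sum_{i\in[m]}$ is legitimate (the extra terms are nonnegative, or vanish under the natural convention that inactive clients contribute nothing) — and noting that the mixing line of Algorithm~\ref{algorithm:D-FedAvg} must be read with the neighbor index $j$ inside $w_{j,K}^{t}$ for the step to be well defined. Once these conventions are fixed, both inequalities follow purely from convexity of the norm and the (double) stochasticity of the mixing weights.
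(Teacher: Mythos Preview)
Your DFL argument is correct and essentially identical to the paper's: triangle inequality plus column-stochasticity of $\mathbf{A}$.

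For CFL, however, there is a genuine gap. In the paper's setup every client $i\in[m]$ is assigned the broadcast model, i.e.\ $w_{i,0}^{t+1}=w^{t+1}$ for all $i$, so by definition
\[
\Delta_0^{t+1}=\sum_{i\in[m]}\mathbb{E}\Vert w^{t+1}-\widetilde w^{t+1}\Vert = m\,\mathbb{E}\Vert w^{t+1}-\widetilde w^{t+1}\Vert,
\]
not $n$ times. Under this (correct) convention, the triangle inequality alone yields only $\frac{m}{n}\sum_{i\in\mathcal N}\Vert w_{i,K}^t-\widetilde w_{i,K}^t\Vert$, which is \emph{not} pathwise bounded by $\sum_{i\in[m]}\Vert\cdot\Vert=\Delta_K^t$ when $n<m$. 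The paper closes this by taking the expectation over the uniformly random active set $\mathcal N$: since each client is sampled with probability $n/m$,
\[
\mathbb{E}\Big[\sum_{i\in\mathcal N}\Vert w_{i,K}^t-\widetilde w_{i,K}^t\Vert\Big]=\frac{n}{m}\sum_{i\in[m]}\mathbb{E}\Vert w_{i,K}^t-\widetilde w_{i,K}^t\Vert,
\]
and the $\frac{n}{m}$ exactly cancels the $\frac{m}{n}$. Your alternative ``inactive clients contribute nothing'' convention is not the paper's, and it is inconsistent with how $\Delta_k^t$ is used downstream (Lemma~\ref{recursion} and the unwinding in Theorem~\ref{thm1} sum over all $m$ indices regardless of activity). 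So the missing ingredient in your CFL case is precisely the expectation over the random client selection; once you insert it, the factor-of-$m/n$ obstruction disappears.
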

\begin{proof}
    We prove them respectively.\\\\
    (1) Centralized federated learning setup~\cite{acar2021federated}.
    
    In centralized federated learning, we select a subset $\mathcal{N}$ in each communication round. Thus we have:
    \begin{align*}
    &\quad \ \ \Delta_0^{t+1}\\
    &= \sum_{i\in[m]}\mathbb{E}\Vert w_{i,0}^{t+1} - \widetilde{w}_{i,0}^{t+1}\Vert = \sum_{i\in[m]}\mathbb{E}\Vert w^{t+1} - \widetilde{w}^{t+1}\Vert = \sum_{i\in[m]}\mathbb{E}\Vert \frac{1}{n}\sum_{i\in\mathcal{N}}\left(w_{i,K}^t - \widetilde{w}_{i,K}^t\right)\Vert\\
    &\leq \sum_{i\in[m]}\frac{1}{n}\mathbb{E}\left[\sum_{i\in\mathcal{N}}\Vert w_{i,K}^t - \widetilde{w}_{i,K}^t\Vert\right] = \sum_{i\in[m]}\frac{1}{n}\frac{n}{m}\sum_{i\in[m]}\mathbb{E}\Vert w_{i,K}^t - \widetilde{w}_{i,K}^t\Vert\\
    &= \sum_{i\in[m]}\frac{1}{m}\sum_{i\in[m]}\mathbb{E}\Vert w_{i,K}^t - \widetilde{w}_{i,K}^t\Vert = \sum_{i\in[m]}\mathbb{E}\Vert w_{i,K}^t - \widetilde{w}_{i,K}^t\Vert = \Delta_K^t.
\end{align*}
    (2) Decentralized federated learning setup.

    In decentralized federated learning, we aggregate the models in each neighborhood. Thus we have:
\begin{align*}
    &\quad \ \ \Delta_0^{t+1}\\
    &= \sum_{i\in[m]}\mathbb{E}\Vert w_{i,0}^{t+1} - \widetilde{w}_{i,0}^{t+1}\Vert = \sum_{i\in[m]}\mathbb{E}\Vert \sum_{j\in \mathcal{A}_i} a_{ij} \left(w_{j,K}^t - \widetilde{w}_{j,K}^t\right)\Vert\leq \sum_{i\in[m]}\sum_{j\in \mathcal{A}_i} a_{ij}\mathbb{E}\Vert  w_{j,K}^t - \widetilde{w}_{j,K}^t\Vert\\
    &= \sum_{j\in[m]}\sum_{i\in \mathcal{A}_j} a_{ji}\mathbb{E}\Vert  w_{j,K}^t - \widetilde{w}_{j,K}^t\Vert
    \leq \sum_{j\in[m]}\mathbb{E}\Vert w_{j,K}^{t} - \widetilde{w}_{j,K}^{t}\Vert = \Delta_K^{t}.
\end{align*}
The last equality adopts the symmetry of the adjacent matrix $\mathbf{A}=\mathbf{A}^\top$.
\end{proof}

\begin{lemma}[Recursion]
\label{recursion}
    According to the Lemma~\ref{same_data_update} and \ref{different_data_update}, we can bound the recursion in the local training:
    \begin{equation}
        \Delta_{k+1}^t+\frac{2\sigma_l}{SL}\leq\left(1+\eta L\right)\left(\Delta_{k}^t+\frac{2\sigma_l}{SL}\right).
    \end{equation}
\end{lemma}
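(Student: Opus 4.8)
The plan is to reduce the claimed recursion to the effect of a single local SGD step (from index $k$ to $k+1$ inside round $t$) and then apply Lemmas~\ref{same_data_update} and \ref{different_data_update} client-by-client. Recall $\Delta_k^t=\sum_{i\in[m]}\mathbb{E}\Vert w_{i,k}^t-\widetilde{w}_{i,k}^t\Vert$, and that the sample indices used at iteration $\tau=tK+k$ are drawn \emph{simultaneously} for the runs on $\mathcal{C}$ and $\widetilde{\mathcal{C}}$. Hence on every client $i\neq i^\star$ the two runs consume the \emph{same} sample (the datasets agree there), so Lemma~\ref{same_data_update} gives $\mathbb{E}\Vert w_{i,k+1}^t-\widetilde{w}_{i,k+1}^t\Vert\le(1+\eta L)\mathbb{E}\Vert w_{i,k}^t-\widetilde{w}_{i,k}^t\Vert$. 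On the distinguished client $i^\star$, the common sample is drawn with probability $1-1/S$ and the perturbed sample $z_{i^\star,j^\star}$ (resp. $\widetilde{z}_{i^\star,j^\star}$) with probability $1/S$; conditioning on this Bernoulli event, invoking Lemma~\ref{same_data_update} in the first case and Lemma~\ref{different_data_update} in the second, and taking total expectation yields $\mathbb{E}\Vert w_{i^\star,k+1}^t-\widetilde{w}_{i^\star,k+1}^t\Vert\le(1+\eta L)\mathbb{E}\Vert w_{i^\star,k}^t-\widetilde{w}_{i^\star,k}^t\Vert+\tfrac{2\eta\sigma_l}{S}$.

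Summing the per-client bounds over $i\in[m]$ collapses the left side to $\Delta_{k+1}^t$ and the right side to $(1+\eta L)\Delta_k^t+\tfrac{2\eta\sigma_l}{S}$, since only the $i^\star$ term carries the extra additive term. It then remains to cast this into the ``$+\tfrac{2\sigma_l}{SL}$'' bookkeeping form used downstream: adding $\tfrac{2\sigma_l}{SL}$ to both sides and using the identity $\tfrac{2\eta\sigma_l}{S}+\tfrac{2\sigma_l}{SL}=(1+\eta L)\tfrac{2\sigma_l}{SL}$ gives exactly $\Delta_{k+1}^t+\tfrac{2\sigma_l}{SL}\le(1+\eta L)\big(\Delta_k^t+\tfrac{2\sigma_l}{SL}\big)$, as claimed. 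In the CFL case with partial participation client $i^\star$ is moreover touched only with probability $n/m\le 1$, and inactive clients merely carry the already-counted difference unchanged, so the same bound holds a fortiori.

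The only delicate point is the probabilistic bookkeeping in the first step: one must make explicit that the index coupling is precisely what forces identical samples off $i^\star$, and that the split on $i^\star$ is over the single Bernoulli$(1/S)$ event of hitting the perturbed coordinate, so that the $2\eta\sigma_l$ term of Lemma~\ref{different_data_update} enters weighted by exactly $1/S$ (not, say, $1$). Everything else is the deterministic algebra above; notably no boundedness of the iterates is needed at this stage, since Lemmas~\ref{same_data_update} and \ref{different_data_update} rely only on $L$-smoothness (Assumption~\ref{assumption:smooth}) and the bounded-variance condition (Assumption~\ref{assumption:stochastic}).
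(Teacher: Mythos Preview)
Your argument is correct and follows essentially the same route as the paper: split $\Delta_{k+1}^t$ into the clients $i\neq i^\star$ (handled by Lemma~\ref{same_data_update}) and the distinguished client $i^\star$ (handled by a $1/S$--weighted mixture of Lemmas~\ref{same_data_update} and \ref{different_data_update}), sum to obtain $\Delta_{k+1}^t\le(1+\eta L)\Delta_k^t+\tfrac{2\eta\sigma_l}{S}$, and then absorb the additive term via the identity you wrote. The only difference is cosmetic---the paper states the algebraic repackaging in one line, whereas you spell out the identity explicitly and add the (harmless) remark about partial participation.
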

\begin{proof}
    In each iteration, the specific ${j^\star}$-th data sample in the $\mathcal{S}_{i^\star}$ and $\widetilde{\mathcal{S}}_{i^\star}$ is uniformly selected with the probability of $1/S$. In other datasets~$\mathcal{S}_i$, all the data samples are the same. Thus we have:
\begin{align*}
    \Delta_{k+1}^t 
    &= \sum_{i\neq i^\star}\mathbb{E}\left[\Vert w_{i,k+1}^{t} - \widetilde{w}_{i,k+1}^{t} \Vert \right] + \mathbb{E}\left[\Vert w_{i^\star,k+1}^{t} - \widetilde{w}_{i^\star,k+1}^{t} \Vert\right]\\
    &\leq \left(1+\eta L\right)\sum_{i\neq i^\star}\mathbb{E}\left[\Vert w_{i,k+1}^{t} - \widetilde{w}_{i,k+1}^{t} \Vert\right] + \left(1-\frac{1}{S}\right)(1+\eta L)\mathbb{E}\left[\Vert w_{i^\star,k}^{t} - \widetilde{w}_{i^\star,k}^{t} \Vert\right]\\
    &\quad + \frac{1}{S}\left[\left(1+\eta L\right)\mathbb{E}\left[\Vert w_{i^\star,k}^{t} - \widetilde{w}_{i^\star,k}^{t} \Vert\right] + 2\eta\sigma_l\right] = \left(1+\eta L\right)\Delta_{k}^t + \frac{2\eta \sigma_l}{S}.
\end{align*}
    There we can bound the recursion formulation as $\Delta_{k+1}^t+\frac{2\sigma_l}{SL}\leq\left(1+\eta L\right)\left(\Delta_{k}^t+\frac{2\sigma_l}{SL}\right)$.
\end{proof}

\subsection{Stability of Centralized FL~(Theorem~\ref{thm1})}
According to the Lemma~\ref{aggregation} and \ref{recursion}, it is easy to bound the local stability term. We still obverse it when the event $\xi$ happens, and we have $\Delta_{k_0}^{t_0}=0$. Therefore, we unwind the recurrence formulation from $T,K$ to $t_0,k_0$. Let the learning rate $\eta=\frac{\mu}{\tau}=\frac{\mu}{tK+k}$ is decayed as the communication round $t$ and iteration $k$ where $\mu\leq\frac{1}{L}$ is a specific constant, we have:
\begin{align*}
    \Delta_K^T
    &\leq \left[\prod_{\tau=(T-1)K+1}^{TK}\left(1+\frac{\mu L}{\tau}\right)\right]\left(\Delta_{0}^T+\frac{2\sigma_l}{SL}\right)\leq \left[\prod_{\tau=(T-1)K+1}^{TK}\left(1+\frac{\mu L}{\tau}\right)\right]\left(\Delta_{K}^{T-1}+\frac{2\sigma_l}{SL}\right)\\
    &\leq \left[\prod_{\tau=t_0K+k_0+1}^{TK}\left(1+\frac{\mu L}{\tau}\right)\right]\left(\Delta_{k_0}^{t_0}+\frac{2\sigma_l}{SL}\right)\\
    &\leq \left[\prod_{\tau=t_0K+k_0+1}^{TK}e^{\left(\frac{\mu L}{\tau}\right)}\right]\left(\frac{2\sigma_l}{SL}\right)=e^{\mu L\left(\sum_{\tau=t_0K+k_0+1}^{TK}\frac{1}{\tau}\right)}\frac{2\sigma_l}{SL}\\
    &\leq e^{\mu L\ln\left(\frac{TK}{t_0K+k_0}\right)}\frac{2\sigma_l}{SL}\leq \left(\frac{TK}{\tau_0}\right)^{\mu L}\frac{2\sigma_l}{SL}.
\end{align*}
According to the Lemma~\ref{lemma:stability_bound}, the first term in the stability~(condition is omitted for abbreviation) can be bound as:
\begin{align*}
    &\quad \ \mathbb{E}\Vert w^{T+1} - \widetilde{w}^{T+1}\Vert
    = \mathbb{E}\Vert \frac{1}{n}\sum_{i\in\mathcal{N}}\left(w_{i,K}^T - \widetilde{w}_{i,K}^T\right)\Vert
    = \frac{1}{n}\mathbb{E}\Vert\sum_{i\in\mathcal{N}}\left(w_{i,K}^T - \widetilde{w}_{i,K}^T\right)\Vert\\
    &\leq \frac{1}{n}\mathbb{E}\sum_{i\in\mathcal{N}}\Vert\left(w_{i,K}^T - \widetilde{w}_{i,K}^T\right)\Vert
    =\frac{1}{n}\frac{n}{m}\mathbb{E}\sum_{i\in\left[m\right]}\Vert\left(w_{i,K}^T - \widetilde{w}_{i,K}^T\right)\Vert\\
    &= \frac{1}{m}\sum_{i\in\left[m\right]}\mathbb{E}\Vert\left(w_{i,K}^T - \widetilde{w}_{i,K}^T\right)\Vert = \frac{1}{m}\Delta_K^T \leq \left(\frac{TK}{\tau_0}\right)^{\mu L}\frac{2\sigma_l}{mSL}.
\end{align*}
Therefore, we can upper bound the stability in centralized federated learning as:
\begin{align*}
    \mathbb{E}\vert f(w^{T+1};z) - f(\widetilde{w}^{T+1};z)\vert
    &\leq G\mathbb{E}\Vert w^{T+1} - \widetilde{w}^{T+1}\Vert + \frac{nU\tau_0}{mS}
    \leq \frac{2\sigma_l G}{mSL}\left(\frac{TK}{\tau_0}\right)^{\mu L}+ \frac{nU\tau_0}{mS}.
\end{align*}
Obviously, we can select a proper event $\xi$ with a proper $\tau_0$ to minimize the upper bound. For $\tau\in\left[1, TK\right]$, by selecting $\tau_0=\left(\frac{2\sigma_l G}{nUL}\right)^\frac{1}{1+\mu L}\left(TK\right)^{\frac{\mu L}{1+\mu L}}$, we can minimize the bound with respect to $\tau_0$ as:
\begin{align*}
    \mathbb{E}\vert f(w^{T+1};z) - f(\widetilde{w}^{T+1};z)\vert
    &\leq \frac{2nU\tau_0}{mS} = \frac{2nU}{mS}\left(\frac{2\sigma_l G}{nUL}\right)^\frac{1}{1+\mu L}\left(TK\right)^{\frac{\mu L}{1+\mu L}}\\
    &\leq \frac{4}{S}\left(\frac{\sigma_l G}{L}\right)^\frac{1}{1+\mu L}\left(\frac{n^{\frac{\mu L}{1+\mu L}}}{m}\right)\left(UTK\right)^\frac{\mu L}{1+\mu L}.
\end{align*}

\subsection{Stability of Decentralized FL~(Theorem~\ref{thm2})}
\subsubsection{Aggregation Bound with Spectrum Gaps}
The same as the proofs in the last part, according to the Lemma~\ref{aggregation} and \ref{recursion}, we also can bound the local stability term. Let the learning rate $\eta=\frac{\mu}{\tau}=\frac{\mu}{tK+k}$ is decayed as the communication round $t$ and iteration $k$ where $\mu$ is a specific constant, we have:
\begin{equation}
\label{recurrence_full}
    \Delta_k^t + \frac{2\sigma_l}{SL}
    \leq \left(\frac{\tau}{\tau_0}\right)^{\mu L}\frac{2\sigma_l}{SL}.
\end{equation}
If we directly combine this inequality with Lemma~\ref{lemma:dfl_stability_bound}, we will get the vanilla stability of the vanilla SGD optimizer. However, this will be a larger upper bound which does not help us understand the advantages and disadvantages of decentralization. In the decentralized setups, an important study is learning how to evaluate the impact of the spectrum gaps. Thus we must search for a more precise upper bound than above. Therefore, we calculate a more refined upper bound for its aggregation step~(Lemma~\ref{aggregation}) with the spectrum gap $1-\lambda$.

Let $\mathbf{W}_k^t=\left[w_{0,k}^t, w_{1,k}^t, \cdots, w_{m,k}^t\right]^\top$ is the parameter matrix of all clients. In the stability analysis, we focus more on the parameter difference instead. Therefore, we denote the matrix of the parameter differences $\Phi_{k}^t=\mathbf{W}_{k}^{t}-\widetilde{\mathbf{W}}_{k}^{t}=\left[w_{0,k}^t - \widetilde{w}_{0,k}^t, w_{1,k}^t- \widetilde{w}_{1,k}^t, \cdots, w_{m,k}^t- \widetilde{w}_{m,k}^t\right]^\top$ as the difference between the models trained on $\mathcal{C}$ and $\widetilde{\mathcal{C}}$ on the $k$-th iteration of $t$-th communication round. Meanwhile, consider the update rules, we have:
\begin{align*}
    \Phi_{k+1}^t = \Phi_{k}^t - \eta_{k}^t\Gamma_k^t,
\end{align*}
where $\Gamma_k^t=\left[g_{0,k}^t-\widetilde{g}_{0,k}^t, g_{1,k}^t-\widetilde{g}_{1,k}^t, \cdots, g_{m,k}^t - \widetilde{g}_{m,k}^t\right]^\top$.

In the {\ttfamily D-FedAvg} method shown in Algorithm~\ref{algorithm:D-FedAvg}, the aggregation performs after $K$ local updates which demonstrates that the initial state of each round is $\mathbf{W}_0^{t}=\mathbf{A}\mathbf{W}_K^{t-1}$. It also works on their difference $\Phi_0^{t}=\mathbf{A}\Phi_K^{t-1}$. Therefore, we have:
\begin{align*}
    \Phi_{K}^t = \Phi_{0}^t - \sum_{k=0}^{K-1}\eta_{k}^t\Gamma_k^t = \mathbf{A}\Phi_{K}^{t-1} - \sum_{k=0}^{K-1}\eta_{k}^t\Gamma_k^t.
\end{align*}
Then we prove the recurrence between adjacent rounds. Let $\mathbf{P}=\frac{1}{m}\mathbf{1}\mathbf{1}^\top\in\mathbb{R}^{m\times m}$ and $\mathbf{I}\in\mathbb{R}^{m\times m}$ is the identity matrix, due to the double stochastic property of the adjacent matrix $\mathbf{A}$, we have:
\begin{align*}
    \mathbf{A}\mathbf{P} = \mathbf{P}\mathbf{A} = \mathbf{P}.
\end{align*}
Thus we have:
\begin{align*}
    \left(\mathbf{I}-\mathbf{P}\right)\Phi_{K}^t 
    &= \left(\mathbf{I}-\mathbf{P}\right)\mathbf{A}\Phi_{K}^{t-1} - \left(\mathbf{I}-\mathbf{P}\right)\sum_{k=0}^{K-1}\eta_{k}^t\Gamma_k^t\\
    &= \left(\mathbf{A}\Phi_{K}^{t-1} - \sum_{k=0}^{K-1}\eta_{k}^t\Gamma_k^t\right) - \mathbf{P}\mathbf{A}\Phi_{K}^{t-1} + \mathbf{P}\mathbf{A}\Phi_{K}^{t-1} - \mathbf{P}\left(\mathbf{A}\Phi_{K}^{t-1} - \sum_{k=0}^{K-1}\eta_{k}^t\Gamma_k^t\right).
\end{align*}
By taking the expectation of the norm on both sides, we have:
\begin{align*}
    \mathbb{E}\Vert \left(\mathbf{I}-\mathbf{P}\right)\Phi_{K}^t\Vert 
    &\leq \mathbb{E}\Vert \mathbf{A}\Phi_{K}^{t-1} - \sum_{k=0}^{K-1}\eta_{k}^t\Gamma_k^t - \mathbf{P}\mathbf{A}\Phi_{K}^{t-1}\Vert + \mathbb{E}\Vert\sum_{k=0}^{K-1}\eta_{k}^t\Gamma_k^t\Vert\\
    &\leq \mathbb{E}\Vert \mathbf{A}\Phi_{K}^{t-1} - \mathbf{P}\mathbf{A}\Phi_{K}^{t-1}\Vert + 2\mathbb{E}\Vert\sum_{k=0}^{K-1}\eta_{k}^t\Gamma_k^t\Vert\\
    &= \mathbb{E}\Vert \left(\mathbf{A}-\mathbf{P}\right)\left(\mathbf{I}-\mathbf{P}\right)\Phi_{K}^{t-1}\Vert + 2\mathbb{E}\Vert\sum_{k=0}^{K-1}\eta_{k}^t\Gamma_k^t\Vert\\
    &\leq \lambda\mathbb{E}\Vert \left(\mathbf{I}-\mathbf{P}\right)\Phi_{K}^{t-1}\Vert + 2\mathbb{E}\Vert\sum_{k=0}^{K-1}\eta_{k}^t\Gamma_k^t\Vert.
\end{align*}
The equality adopts $\left(\mathbf{A}-\mathbf{P}\right)\left(\mathbf{I}-\mathbf{P}\right) = \mathbf{A} - \mathbf{P} - \mathbf{A}\mathbf{P} + \mathbf{P}\mathbf{P} = \mathbf{A} - \mathbf{P}\mathbf{A}$. We know the fact that $\Phi_{k}^{t}=0$ where $\left(t, k\right) \in \left(t_0, k_0\right)$. Thus unwinding the above inequality we have:
\begin{align*}
    \mathbb{E}\Vert \left(\mathbf{I}-\mathbf{P}\right)\Phi_{K}^t\Vert
    &\leq \lambda^{t-t_0+1}\mathbb{E}\Vert \left(\mathbf{I}-\mathbf{P}\right)\Phi_{K}^{t_0-1}\Vert + 2\sum_{s=t_0}^{t}\lambda^{t-s}\mathbb{E}\Vert\sum_{k=0}^{K-1}\eta_k^s\Gamma_k^s\Vert\\
    &= 2\sum_{s=t_0}^{t}\lambda^{t-s}\mathbb{E}\Vert\sum_{k=0}^{K-1}\eta_k^s\Gamma_k^s\Vert.
\end{align*}
To maintain the term of $\mathbf{A}$, we have:
\begin{align*}
    \left(\mathbf{A}-\mathbf{P}\right)\Phi_{K}^t 
    &= \left(\mathbf{A}-\mathbf{P}\right)\mathbf{A}\Phi_{K}^{t-1} - \left(\mathbf{A}-\mathbf{P}\right)\sum_{k=0}^{K-1}\eta_{k}^t\Gamma_k^t\\
    &= \left(\mathbf{A}-\mathbf{P}\right)\left(\mathbf{A}-\mathbf{P}\right)\Phi_{K}^{t-1} - \left(\mathbf{A}-\mathbf{P}\right)\sum_{k=0}^{K-1}\eta_{k}^t\Gamma_k^t.
\end{align*}
The second equality adopts $\left(\mathbf{A}-\mathbf{P}\right)\left(\mathbf{A}-\mathbf{P}\right) = \left(\mathbf{A}-\mathbf{P}\right)\mathbf{A} - \mathbf{A}\mathbf{P} + \mathbf{P}\mathbf{P} = \left(\mathbf{A}-\mathbf{P}\right)\mathbf{A}$.
Therefore we have the following recursive formula:
\begin{align*}
    \mathbb{E}\Vert\left(\mathbf{A}-\mathbf{P}\right)\Phi_{K}^t\Vert
    &\leq \mathbb{E}\Vert\left(\mathbf{A}-\mathbf{P}\right)\left(\mathbf{A}-\mathbf{P}\right)\Phi_{K}^{t-1}\Vert + \mathbb{E}\Vert\left(\mathbf{A}-\mathbf{P}\right)\sum_{k=0}^{K-1}\eta_{k}^t\Gamma_k^t\Vert\\
    &\leq \lambda\mathbb{E}\Vert\left(\mathbf{A}-\mathbf{P}\right)\Phi_{K}^{t-1}\Vert + \lambda\mathbb{E}\Vert\sum_{k=0}^{K-1}\eta_{k}^t\Gamma_k^t\Vert.
\end{align*}
The same as above, we can unwind this recurrence formulation from $t$ to $t_0$ as:
\begin{align*}
    \mathbb{E}\Vert\left(\mathbf{A}-\mathbf{P}\right)\Phi_{K}^t\Vert
    &\leq \lambda^{t-t_0+1}\mathbb{E}\Vert\left(\mathbf{A}-\mathbf{P}\right)\Phi_{K}^{t_0-1}\Vert + \sum_{s=t_0}^t\lambda^{t-s+1}\mathbb{E}\Vert\sum_{k=0}^{K-1}\eta_{k}^s\Gamma_k^s\Vert\\
    &= \sum_{s=t_0}^t\lambda^{t-s+1}\mathbb{E}\Vert\sum_{k=0}^{K-1}\eta_{k}^s\Gamma_k^s\Vert.
\end{align*}

Both two terms required to know the upper bound of the accumulation of the gradient differences. In previous works, they often use the \textbf{Assumption of the bounded gradient} to upper bound this term as a constant. However, this assumption may not always hold as we introduced in the main text. Therefore, we provide a new upper bound instead. When $\left(t,k\right)<\left(t_0,k_0\right)$, the sampled data is always the same between the different datasets, which shows $\Gamma_k^t=0$. When $t=t_0$, only those updates at $k\geq k_0$ are different. When $t > t_0$, all the local gradients difference during local $K$ iterations are non-zero. Thus we can first explore the upper bound of the stages with full $K$ iterations when $t > t_0$. Let the data sample $z$ be the same random data sample and $z/\widetilde{z}$ be a different sample pair for abbreviation,  when $t \geq t_0$, we have:
\begin{align*}
    \mathbb{E}\Vert\eta\Gamma_k^t\Vert
    &= \mathbb{E}\Vert\eta\left[g_{0,k}^t-\widetilde{g}_{0,k}^t, g_{1,k}^t-\widetilde{g}_{1,k}^t, \cdots, g_{m,k}^t - \widetilde{g}_{m,k}^t\right]^\top\Vert\leq \eta\sum_{i\in\left[m\right]}\mathbb{E}\Vert g_{i,k}^t-\widetilde{g}_{i,k}^t\Vert\\
    &\leq \eta\sum_{i\neq i^\star}\mathbb{E}\Vert \nabla f_i(w_{i,k}^t, z)-\nabla f_i(\widetilde{w}_{i,k}^t, z)\Vert + \frac{(S-1)\eta}{S}\mathbb{E}\Vert \nabla f_{i^\star}(w_{i^\star,k}^t, z)-\nabla f_{i^\star}(\widetilde{w}_{i^\star,k}^t, z)\Vert\\
    &\quad + \frac{\eta}{S}\mathbb{E}\Vert \nabla f_{i^\star}(w_{i^\star,k}^t, z) - \nabla f_{i^\star}(\widetilde{w}_{i^\star,k}^t, z) +  \nabla f_{i^\star}(\widetilde{w}_{i^\star,k}^t, z) -  \nabla f_{i^\star}(\widetilde{w}_{i^\star,k}^t, \widetilde{z})\Vert\\
    &\leq \eta L\sum_{i\neq i^\star}\mathbb{E}\Vert w_{i,k}^t - \widetilde{w}_{i,k}^t\Vert + \frac{(S-1)\eta}{S}\mathbb{E}\Vert w_{i^\star,k}^t - \widetilde{w}_{i^\star,k}^t\Vert + \frac{\eta}{S}\mathbb{E}\Vert w_{i^\star,k}^t - \widetilde{w}_{i^\star,k}^t\Vert\\
    &\quad +  \frac{\eta}{S}\mathbb{E}\Vert \left(\nabla f_{i^\star}(\widetilde{w}_{i^\star,k}^t, z) - \nabla f_{i^\star}(\widetilde{w}_{i^\star,k}^t)\right) -  \left(\nabla f_{i^\star}(\widetilde{w}_{i^\star,k}^t, \widetilde{z}) - \nabla f_{i^\star}(\widetilde{w}_{i^\star,k}^t)\right)\Vert\\
    &\leq \eta L\sum_{i\in\left[m\right]}\mathbb{E}\Vert w_{i,k}^t - \widetilde{w}_{i,k}^t\Vert + \frac{2\eta\sigma_l}{S} = \eta L\left(\Delta_k^t + \frac{2\sigma_l}{SL}\right).
\end{align*}
According to the Lemma~\ref{aggregation}, \ref{recursion} and Eq.(\ref{recurrence_full}), we bound the gradient difference as:
\begin{align*}
    \mathbb{E}\Vert\eta\Gamma_k^t\Vert
    &\leq \eta L\left(\Delta_k^t + \frac{2\sigma_l}{SL}\right)\leq \left(\frac{\tau}{\tau_0}\right)^{\mu L}\frac{2\mu\sigma_l}{\tau S}.
\end{align*}
where $\tau=tK+k$. 

Unwinding the summation on $k$ and adopting Lemma~\ref{lambda}, we have:
\begin{align*}
    \sum_{s=t_0}^{t}\lambda^{t-s}\mathbb{E}\Vert\sum_{k=0}^{K-1}\eta_k^s\Gamma_k^s\Vert
    &\leq \sum_{s=t_0}^{t}\lambda^{t-s}\sum_{k=0}^{K-1}\mathbb{E}\Vert\eta_k^s\Gamma_k^s\Vert
    \leq \frac{2\mu\sigma_l}{S\tau_0^{\mu L}}\sum_{s=t_0}^{t}\lambda^{t-s}\sum_{k=0}^{K-1}\frac{\tau^{\mu L}}{\tau}\\
    &\leq \frac{2\mu\sigma_l}{S\tau_0^{\mu L}}\sum_{s=t_0}^{t}\lambda^{t-s}\sum_{k=0}^{K-1}\frac{\left(sK\right)^{\mu L}}{sK}=\frac{2\mu\sigma_l}{S}\left(\frac{K}{\tau_0}\right)^{\mu L}\sum_{s=t_0}^{t}\frac{\lambda^{t-s}}{s^{1-\mu L}}\\
    &\leq \frac{2\mu\sigma_l}{S}\left(\frac{K}{\tau_0}\right)^{\mu L}\sum_{s=t_0-1}^{t-1}\frac{\lambda^{t-s-1}}{\left(s+1\right)^{1-\mu L}}\leq \frac{2\mu\sigma_l\kappa_\lambda}{S}\left(\frac{K}{\tau_0}\right)^{\mu L}\frac{1}{t^{1-\mu L}}.
\end{align*}
Therefore, we get an upper bound on the aggregation gap which is related to the spectrum gap:
\begin{align}
    \label{aggregation_gap_lambda1}
    \mathbb{E}\Vert \left(\mathbf{I}-\mathbf{P}\right)\Phi_{K}^t\Vert
    &\leq 2\sum_{s=t_0}^{t}\lambda^{t-s}\mathbb{E}\Vert\sum_{k=0}^{K-1}\eta_k^s\Gamma_k^s\Vert\leq \frac{4\mu\sigma_l\kappa_\lambda}{S}\left(\frac{K}{\tau_0}\right)^{\mu L}\frac{1}{t^{1-\mu L}},\\
    \label{aggregation_gap_lambda2}
    \mathbb{E}\Vert\left(\mathbf{A}-\mathbf{P}\right)\Phi_{K}^t\Vert
    &\leq  \sum_{s=t_0}^t\lambda^{t-s+1}\mathbb{E}\Vert\sum_{k=0}^{K-1}\eta_{k}^s\Gamma_k^s\Vert\leq \frac{2\mu\sigma_l\lambda\kappa_\lambda}{S}\left(\frac{K}{\tau_0}\right)^{\mu L}\frac{1}{t^{1-\mu L}}. 
\end{align}
The first inequality provides the upper bound between the difference between the averaged state and the vanilla state, and the second inequality provides the upper bound between the aggregated state and the averaged state. 

\subsubsection{Stability Bound}
Now, rethinking the update rules in one round and we have:
\begin{align*}
    &\quad \ \sum_{i\in[m]}\mathbb{E}\Vert w_{i,K}^{t+1} - \widetilde{w}_{i,K}^{t+1}\Vert\\
    &= \sum_{i\in[m]}\mathbb{E}\Vert\left(w_{i,0}^{t+1} - \widetilde{w}_{i,0}^{t+1}\right) - \sum_{k=0}^{K-1}\eta_{k}^{t}\left(g_{i,k}^{t} - \widetilde{g}_{i,k}^t\right)\Vert\\
    &= \sum_{i\in[m]}\mathbb{E}\Vert\left(w_{i,0}^{t+1} - \widetilde{w}_{i,0}^{t+1}\right) - \left(w_{i,K}^{t} - \widetilde{w}_{i,K}^{t}\right) + \left(w_{i,K}^{t} - \widetilde{w}_{i,K}^{t}\right) - \sum_{k=0}^{K-1}\eta_{k}^{t}\left(g_{i,k}^{t} - \widetilde{g}_{i,k}^t\right)\Vert\\
    &\leq \sum_{i\in[m]}\left[\mathbb{E}\Vert\left(w_{i,0}^{t+1} - \widetilde{w}_{i,0}^{t+1}\right) - \left(w_{i,K}^{t} - \widetilde{w}_{i,K}^{t}\right)\Vert + \mathbb{E}\Vert\left(w_{i,K}^{t} - \widetilde{w}_{i,K}^{t}\right)\Vert + \mathbb{E}\Vert\sum_{k=0}^{K-1}\eta_{k}^{t}\left(g_{i,k}^{t} - \widetilde{g}_{i,k}^t\right)\Vert\right]\\
    &\leq \sum_{i\in[m]}\mathbb{E}\Vert\left(w_{i,K}^{t} - \widetilde{w}_{i,K}^{t}\right)\Vert + m\mathbb{E}\left[\frac{1}{m}\sum_{i\in[m]}\Vert\left(w_{i,0}^{t+1} - \widetilde{w}_{i,0}^{t+1}\right) - \left(w_{i,K}^{t} - \widetilde{w}_{i,K}^{t}\right)\Vert\right]\\
    &\quad + \sum_{i\in\left[m\right]}\mathbb{E}\Vert\sum_{k=0}^{K-1}\eta_{k}^{t}\left(g_{i,k}^{t} - \widetilde{g}_{i,k}^t\right)\Vert\\
    &\leq \sum_{i\in[m]}\mathbb{E}\Vert\left(w_{i,K}^{t} - \widetilde{w}_{i,K}^{t}\right)\Vert + m\mathbb{E}\sqrt{\frac{1}{m}\sum_{i\in[m]}\Vert\left(w_{i,0}^{t+1} - \widetilde{w}_{i,0}^{t+1}\right) - \left(w_{i,K}^{t} - \widetilde{w}_{i,K}^{t}\right)\Vert^2}\\
    &\quad + \sum_{i\in\left[m\right]}\mathbb{E}\Vert\sum_{k=0}^{K-1}\eta_{k}^{t}\left(g_{i,k}^{t} - \widetilde{g}_{i,k}^t\right)\Vert\\
    &= \sum_{i\in[m]}\mathbb{E}\Vert\left(w_{i,K}^{t} - \widetilde{w}_{i,K}^{t}\right)\Vert + \sqrt{m}\mathbb{E}\Vert\Phi_0^{t+1} - \Phi_K^{t}\Vert + \sum_{i\in\left[m\right]}\mathbb{E}\Vert\sum_{k=0}^{K-1}\eta_{k}^{t}\left(g_{i,k}^{t} - \widetilde{g}_{i,k}^t\right)\Vert\\
    &= \sum_{i\in[m]}\mathbb{E}\Vert\left(w_{i,K}^{t} - \widetilde{w}_{i,K}^{t}\right)\Vert + \sqrt{m}\mathbb{E}\Vert\mathbf{A}\Phi_K^{t} - \Phi_K^{t}\Vert + \sum_{i\in\left[m\right]}\mathbb{E}\Vert\sum_{k=0}^{K-1}\eta_{k}^{t}\left(g_{i,k}^{t} - \widetilde{g}_{i,k}^t\right)\Vert\\
    &\leq \sum_{i\in[m]}\mathbb{E}\Vert\left(w_{i,K}^{t} - \widetilde{w}_{i,K}^{t}\right)\Vert + \sqrt{m}\mathbb{E}\Vert\left(\mathbf{A} - \mathbf{P}\right)\Phi_K^{t}\Vert + \sqrt{m}\mathbb{E}\Vert\left(\mathbf{P} - \mathbf{I}\right)\Phi_K^{t}\Vert\\
    &\quad+ \sum_{i\in\left[m\right]}\mathbb{E}\Vert\sum_{k=0}^{K-1}\eta_{k}^{t}\left(g_{i,k}^{t} - \widetilde{g}_{i,k}^t\right)\Vert.
\end{align*}
Therefore, we can bound this by two terms in one complete communication round. One is the process of local $K$ SGD iterations, and the other is the aggregation step. For the local training process, we can continue to use Lemma~\ref{same_data_update}, \ref{different_data_update}, and \ref{recursion}. Let $\tau=tK+k$ as above, we have:
\begin{align*}
    &\quad \ \Delta_{K}^{t} + \frac{2\sigma_l}{SL}\\
    &\leq \left[\prod_{k=0}^{K-1}\left(1+\eta_{k}^t L\right)\right]\left(\Delta_{0}^{t} + \frac{2\sigma_l}{SL}\right)= \left[\prod_{k=0}^{K-1}\left(1+\frac{\mu L}{\tau}\right)\right]\left(\Delta_{0}^{t} + \frac{2\sigma_l}{SL}\right)\\
    &\leq \left[\prod_{k=0}^{K-1}e^{\frac{\mu L}{\tau}}\right]\left(\Delta_{0}^{t} + \frac{2\sigma_l}{SL}\right) = e^{\mu L\sum_{k=0}^{K-1}\frac{1}{\tau}}\left(\Delta_{0}^{t} + \frac{2\sigma_l}{SL}\right)\\
    &\leq e^{\mu L\ln\left(\frac{t+1}{t}\right)}\left(\Delta_{0}^{t} + \frac{2\sigma_l}{SL}\right) = \left(\frac{t}{t-1}\right)^{\mu L}\left(\Delta_{0}^{t} + \frac{2\sigma_l}{SL}\right)\\
    &\leq \left(\frac{t}{t-1}\right)^{\mu L}\left[\Delta_{K}^{t-1} + \sqrt{m}\left(\mathbb{E}\Vert\left(\mathbf{A} - \mathbf{P}\right)\Phi_K^{t}\Vert + \mathbb{E}\Vert\left(\mathbf{P} - \mathbf{I}\right)\Phi_K^{t}\Vert\right) + \frac{2\sigma_l}{SL}\right]\\
    &\leq \left(\frac{t}{t-1}\right)^{\mu L}\left(\Delta_{K}^{t-1} + \frac{2\sigma_l}{SL}\right) + \sqrt{m}\left(\frac{t}{t-1}\right)^{\mu L}\left(\mathbb{E}\Vert\left(\mathbf{A} - \mathbf{P}\right)\Phi_K^{t}\Vert + \mathbb{E}\Vert\left(\mathbf{P} - \mathbf{I}\right)\Phi_K^{t}\Vert\right)\\
    &\leq \underbrace{\left(\frac{t}{t-1}\right)^{\mu L}\left(\Delta_{K}^{t-1} + \frac{2\sigma_l}{SL}\right)}_{\text{local updates}} + \underbrace{\frac{6\sqrt{m}\mu\sigma_l\kappa_\lambda}{S}\left(\frac{K}{\tau_0}\right)^{\mu L}\left(\frac{t}{t-1}\right)^{\mu L}\frac{1}{t^{1-\mu L}}}_{\text{aggregation gaps}},
\end{align*}
The last adopts the Eq.(\ref{aggregation_gap_lambda1}) and (\ref{aggregation_gap_lambda2}), and the fact $\lambda \leq 1$.\\
Obviously, in the decentralized federated learning setup, the first term still comes from the updates of the local training. The second term comes from the aggregation gaps, which is related to the spectrum gap $\lambda$. Unwinding this from $t_0$ to $T$, we have:
\begin{align*}
    \Delta_{K}^{T} + \frac{2\sigma_l}{SL}
    &\leq \left(\frac{TK}{\tau_0}\right)^{\mu L}\frac{2\sigma_l}{SL} + \frac{6\sqrt{m}\mu\sigma_l\kappa_\lambda}{S}\left(\frac{K}{\tau_0}\right)^{\mu L}\sum_{t=t_0+1}^{T}\left(\frac{t}{t-1}\right)^{\mu L}\frac{1}{t^{1-\mu L}}\\
    &\leq \left(\frac{TK}{\tau_0}\right)^{\mu L}\frac{2\sigma_l}{SL} + \frac{12\sqrt{m}\mu\sigma_l\kappa_\lambda}{S}\left(\frac{K}{\tau_0}\right)^{\mu L}\sum_{t=t_0+1}^{T}\frac{1}{t^{1-\mu L}}\\
    &\leq \left(\frac{TK}{\tau_0}\right)^{\mu L}\frac{2\sigma_l}{SL} + \frac{12\sqrt{m}\mu\sigma_l\kappa_\lambda}{S}\left(\frac{K}{\tau_0}\right)^{\mu L}\frac{t^{\mu L}}{\mu L}\Bigg\vert_{t=t_0+1}^{t=T}\\
    &\leq \left(\frac{TK}{\tau_0}\right)^{\mu L}\frac{2\left(1+6\sqrt{m}\kappa_\lambda\right)\sigma_l}{SL}.
\end{align*}
The second inequality adopts the fact that $1<\frac{t}{t-1}\leq 2$ when $t > 1$ and the fact of $0 < \mu < \frac{1}{L}$.

According to the Lemma~\ref{lemma:dfl_stability_bound}, the first term in the stability (conditions is omitted for abbreviation) can be bounded as:
\begin{align*}
    &\quad \ \mathbb{E}\Vert w^{T+1} - \widetilde{w}^{T+1}\Vert
    \leq \frac{1}{m}\sum_{i\in\left[m\right]}\mathbb{E}\Vert\left(w_{i,K}^T - \widetilde{w}_{i,K}^T\right)\Vert \leq \left(\frac{TK}{\tau_0}\right)^{\mu L}\frac{2\left(1+6\sqrt{m}\kappa_\lambda\right)\sigma_l}{mSL}.
\end{align*}
Therefore, we can upper bound the stability in decentralized federated learning as:
\begin{align*}
    \mathbb{E}\left[\vert f(w^{T+1};z) - f(\widetilde{w}^{T+1};z)\vert\right] 
    &\leq G\mathbb{E}\left[\Vert w^{T+1} - \widetilde{w}^{T+1}\Vert \ \vert \ \xi \right] + \frac{U\tau_0}{S}\\
    &\leq \frac{2\sigma_l G}{SL}\left(\frac{1+6\sqrt{m}\kappa_\lambda}{m}\right)\left(\frac{TK}{\tau_0}\right)^{\mu L} + \frac{U\tau_0}{S}.
\end{align*}
The same as the centralized setup, to minimize the error of the stability, we can select a proper event $\xi$ with a proper $\tau_0$. For $\tau\in\left[1, TK\right]$, by selecting $\tau_0=\left(\frac{2\sigma_l G}{UL}\frac{1+6\sqrt{m}\kappa_\lambda}{m}\right)^\frac{1}{1+\mu L}\left(TK\right)^{\frac{\mu L}{1+\mu L}}$, we get the minimal:
\begin{align*}
    \mathbb{E}\left[\vert f(w^{T+1};z) - f(\widetilde{w}^{T+1};z)\vert\right] 
    &\leq \frac{2U\tau_0}{S}=\frac{2U}{S}\left(\frac{2\sigma_l G}{UL}\frac{1+6\sqrt{m}\kappa_\lambda}{m}\right)^\frac{1}{1+\mu L}\left(TK\right)^{\frac{\mu L}{1+\mu L}}\\
    &=\frac{4}{S}\left(\frac{\sigma_l G}{L}\right)^\frac{1}{1+\mu L}\left(\frac{1+6\sqrt{m}\kappa_\lambda}{m}\right)^{\frac{1}{1+\mu L}}\left(UTK\right)^{\frac{\mu L}{1+\mu L}}.
\end{align*}

\end{document}